\def\eqref#1{equation~\ref{#1}}
\def\1{\bm{1}}
\DeclareMathAlphabet{\mathsfit}{\encodingdefault}{\sfdefault}{m}{sl}
\SetMathAlphabet{\mathsfit}{bold}{\encodingdefault}{\sfdefault}{bx}{n}
\newcommand{\R}{\mathbb{R}}
\DeclareMathOperator*{\argmax}{arg\,max}
\newcommand{\FF}{ \mathcal{F}}
\newcommand{\XX}{ \mathcal{X}}
\newcommand{\AAA}{ \mathcal{A}}
\newcommand{\ZZ}{ \mathcal{Z}}
\newcommand{\PP}{ \mathcal{P}}
\newcommand{\SSS}{ \mathcal{S}}
\newcommand{\T}{\mathsf{T}}
\newcommand{\Real}{\mathbb{R}}
\newcommand{\of}[1]{\ensuremath{\left( #1 \right)}}
\newcommand{\cb}[1]{\ensuremath{ \left\{ #1 \right\} }}
\newcommand{\sqb}[1]{\ensuremath{ \left[ #1 \right] }}
\newtheorem{thm}{Theorem}
\newtheorem*{statement}{Statement}
\newtheorem{pro}{Proposition}
\newtheorem{cor}{Corollary}
\newtheorem{lem}{Lemma}
\newtheorem{rem}{Remark}
\newtheorem{defn}{Definition}
\newtheorem{assumption}{Assumption}
\newcommand{\norm}[1]{\left\lVert#1\right\rVert}
\newtheorem{claim}{Claim}
\newlist{legal}{enumerate}{10}
\setlist[legal]{label*=\arabic*.}
\newcommand{\bunderline}[1]{\underline{#1\mkern-4mu}\mkern4mu }
\renewcommand{\arraystretch}{1.5}
\newcommand{\bs}{\boldsymbol}
\newcommand{\com}[1]{{\color{red}(C: #1)}} \else \newcommand{\com}[1]{} \fi
\else \newcommand{\com}[1]{} \fi
\else \newcommand{\com}[1]{} \fi
\newcommand{\comca}[1]{{\color{brown}(\c{C}: #1)}} \else \newcommand{\comca}[1]{} \fi
\else \newcommand{\com}[1]{} \fi
\newcommand{\dtl}[1]{{\color{red}Details:\\#1}} \else \newcommand{\dtl}[1]{} \fi
\newcommand{\reva}[1]{{\color{purple}#1}} \else \newcommand{\reva}[1]{#1} \fi
\newcommand{\revc}[1]{{\color{blue}#1}} \else \newcommand{\revc}[1]{#1} \fi
\newcommand{\commr}[1]{{\color{blue}(A: #1)}} \else \newcommand{\commr}[1]{} \fi
\newcommand{\comcar}[1]{{\color{brown}(\c{C}: #1)}} \else \newcommand{\comcar}[1]{} \fi
\newcommand\blfootnote[1]{%
  \begingroup
  \renewcommand\thefootnote{}\footnote{#1}%
  \addtocounter{footnote}{-1}%
  \endgroup
}
\title{Beyond Grids: Multi-objective Bayesian Optimization With Adaptive Discretization}
\author{\name Andi Nika$^\dagger$ \email andinika@mpi-sws.org \\
       \addr Max Planck Institute for Software Systems\\
       Germany
       \AND
       \name Sepehr Elahi$^\dagger$ \email sepehr.elahi@epfl.ch \\
       \addr Department of Computer and Communication Sciences\\
       EPFL\\
       Switzerland
       \AND
       \name Çağın Ararat \email cararat@bilkent.edu.tr \\
       \addr Department of Industrial Engineering\\
       Bilkent University\\
       Turkey
       \AND
       \name Cem Tekin \email cemtekin@ee.bilkent.edu.tr \\
       \addr Department of Electrical and Electronics Engineering\\
       Bilkent University\\
       Turkey
       }
\begin{document}

\maketitle
\doparttoc 
\faketableofcontents 

\begin{abstract}%
We consider the problem of optimizing a vector-valued objective function $\bs{f}$ sampled from a Gaussian Process (GP) whose index set is a well-behaved, compact metric space $(\XX,d)$ of designs. We assume that $\bs{f}$ is not known beforehand and that evaluating $\bs{f}$ at design $x$ results in a noisy observation of $\bs{f}(x)$. Since identifying the Pareto optimal designs via exhaustive search is infeasible when the cardinality of $\XX$ is large, we propose an algorithm, called Adaptive $\bs{\epsilon}$-PAL, that exploits the smoothness of the GP-sampled function and the structure of $(\XX,d)$ to learn fast. In essence, Adaptive $\bs{\epsilon}$-PAL employs a tree-based adaptive discretization technique to identify an $\bs{\epsilon}$-accurate Pareto set of designs in as few evaluations as possible. We provide both information-type and metric dimension-type bounds on the sample complexity of $\bs{\epsilon}$-accurate Pareto set identification. We also experimentally show that our algorithm outperforms other Pareto set identification methods.
\end{abstract}
\blfootnote{$^\dagger$ Work done while authors were at Bilkent University.}
\vspace{-1cm}
\section{Introduction}
Many complex scientific problems require the optimization of multi-dimensional ($m$-variate) performance metrics (objectives) under uncertainty. When developing a new drug, scientists need to identify the optimal therapeutic doses that maximize benefit and tolerability \citep{scmidt1988dose}. When designing new hardware, engineers need to identify the optimal designs that minimize energy consumption and runtime \citep{almer2011learning}. In general, there is no design that can simultaneously optimize all objectives, and hence, one seeks to identify the set of Pareto optimal designs. Moreover, design evaluations are costly, and thus, the optimal designs should be identified with as few evaluations as possible. In practice, this is a formidable task for at least two reasons: design evaluations only provide noisy feedback about ground truth objective values, and the set of designs to explore is usually very large (even infinite). Luckily, in practice, one only needs to identify the set of Pareto optimal designs up to a desired level of accuracy. Within this context, a practically achievable goal is to identify an $\bs{\epsilon}$-accurate Pareto set of designs whose objective values form an $\bs{\epsilon}$-approximation of the true Pareto front for a given $\bs{\epsilon} = [ \epsilon^1,\ldots,\epsilon^m ]^\T \in \mathbb{R}^m_{+}$ \citep{zuluaga2016varepsilon}.

There has been a considerable amount of interest in the Pareto set identification problem with a finite design space \citep{kone2023adaptive, zuluaga2013active, zuluaga2016varepsilon}, where individual designs are compared based on their vector values, according to multi-objective domination criteria. However, in many multi-objective problems of interest, such as the accurate tuning of particle accelerators \citep{roussel2021multiobjective}, or robotic systems control \citep{ariizumi2014expensive}, the design space is not necessarily finite. For these examples, a naive application of methods that are suitable for finite spaces might not yield the desired level of efficiency, due to the lack of rigorous theoretical foundations. Motivated by these observations, in this paper, we consider the Pareto set identification problem assuming a compact design space.

Specifically, we model the identification of an $\bs{\epsilon}$-accurate Pareto set of designs as an active learning problem. We assume that the designs lie in a well-behaved, compact metric space $(\XX,d)$, where the set of designs $\XX$ might be very large. The vector-valued objective function $\bs{f} = [ f^1,\ldots,f^m]^\T$ defined over $(\XX,d)$ is unknown at the beginning of the experiment. The learner is given prior information about $\bs{f}$, which states that it is a sample from a multi-output GP with known mean and covariance functions. Then, the learner sequentially chooses designs to evaluate, where evaluating $\bs{f}$ at design $x$ immediately yields a noisy observation of $\bs{f}(x)$. The learner uses data from its past evaluations in order to decide which design to evaluate next until it can confidently identify an $\bs{\epsilon}$-accurate Pareto set of designs.

\textbf{Main contribution.}
We propose a new learning algorithm, called Adaptive $\bs{\epsilon}$-PAL, that solves the Pareto active learning (PAL) problem described above, by performing as few design evaluations as possible. Our algorithm employs a tree-based adaptive discretization strategy to dynamically partition ${\cal X}$. It uses the GP posterior on $\bs{f}$ to decide which regions of designs in the partition of ${\cal X}$ to discard or to declare as a member of the $\bs{\epsilon}$-accurate Pareto set of designs. On termination, Adaptive $\bs{\epsilon}$-PAL guarantees that the returned set of designs forms an $\bs{\epsilon}$-accurate Pareto set with a high probability. To the best of our knowledge, Adaptive $\bs{\epsilon}$-PAL is the first algorithm that employs an adaptive discretization strategy in the context of PAL, which turns out to be very effective when dealing with a large ${\cal X}$.

We prove information-type and metric dimension-type upper bounds on the sample complexity of Adaptive $\bs{\epsilon}$-PAL (Theorem \ref{mainthm}). Our information-type bound yields a sample complexity upper bound of $\tilde{O}( g(\epsilon) )$ where $\epsilon = \min_{j} \epsilon^j$, $g(\epsilon) = \min \{ \tau \geq 1 : \sqrt{\gamma_{\tau}/\tau}  < \epsilon \}$, and $\gamma_{\tau}$ is the maximum information gain after $\tau$ evaluations.
To the best of our knowledge, this is the first information type bound for dependent objectives in the context of PAL. In addition, our metric dimension-type bound yields a sample complexity of $\tilde{O}(\epsilon^{ - (\frac{\bar{D}}{\alpha}+2)})$ for all $\bar{D} > D_1$, where $D_1$ represents the metric dimension of $({\cal X},d)$ and $\alpha \in (0,1]$ represents the H{\"o}lder exponent of the metric induced by the GP on ${\cal X}$. As far as we know, this is the first metric dimension-type bound in the context of PAL. Our bounds complement each other: as we show in the appendix, neither of them dominates the other for all possible GP kernels. Specifically, we provide an example (Proposition \ref{pro:example}) under which the information-type bound can be very loose compared to the metric dimension-type bound. Besides theory, we also show via simulations on multi-objective functions that Adaptive $\bs{\epsilon}$-PAL significantly improves over its competitors in terms of accuracy. Furthermore, we point out the key challenges that necessitate novel algorithmic design and techniques as follows: 
\setlist{nolistsep}
\begin{itemize}[itemsep=0.2em]
    \item First, due to the nature of the problem, we need to properly define novel confidence hyper-rectangle objects which are designed to capture uncertainty, not only over individual designs, but over whole regions. This requires integrating information and metric-type components into our hyper-rectangle definitions. 
    \item Second, the refining, evaluating and discarding steps of our procedure are substantially different from their analogues in $\bs{\epsilon}$-PAL \citep{zuluaga2016varepsilon}. Again, this is due to the large space of designs to consider. Here, we need to be confident about regions of space, rather than individual points, and choose whether to discard them or maintain them. 
    \item Third, in addition to the Pareto front identification problem, inherent in $\bs{\epsilon}$-PAL, our formulation also introduces the problem of efficiently expanding the search tree in promising directions (note that the tree is not uniformly explored). Thus, we need additional analysis which involve information-theoretic and geometric components integrated into a threshold condition. 
    \item Finally, all of the above necessitate new additional results which involve algebraic and analytic arguments to carry through the analysis. Specifically, results such as Lemma 8 (which requires a careful step-by-step analysis applicable only to our setting), Lemma 10, Lemma 12 and Lemma 16, are all results which require non-trivial and novel reasoning steps necessary for our setting. 
\end{itemize}

\textbf{Organization.}
We provide a detailed comparison with related work in Section \ref{sec:related}. We explain the properties of the function to be optimized and the structure of the design space in Section \ref{background}. This is followed by the description of Adaptive $\bs{\epsilon}$-PAL in Section \ref{algorithm}. We give sample complexity bounds for Adaptive $\bs{\epsilon}$-PAL in Section \ref{scb}, discuss the main aspects of computational complexity analysis in Section \ref{cca}. We devote Section \ref{sec:exp} to the computational experiments, followed by conclusions in Section \ref{sec:conc}. We present the proof of the main theorem separately in Section \ref{sec:app}, the appendix. At the end of the paper, we include a table for the frequently used notation.

\section{Related Work} \label{sec:related}

This section provides a detailed discussion of related work on several lines of research.

\subsection{Multi-objective optimization}
Learning the Pareto optimal set of designs and the Pareto front has received considerable attention in recent years \citep{belakaria2024non, auer2016pareto,zuluaga2013active,zuluaga2016varepsilon,
hernandez2016predictive,shah2016pareto, paria2020flexible, belakaria2019max, belakaria2020uncertainty, daulton2020differentiable, alizadeh2024pessimistic, mukherjee2024multi}.

\cite{auer2016pareto} consider a finite set of designs and formulate the identification of the Pareto front as a pure exploration multi-armed bandit (MAB) problem in the fixed confidence setting. Under the assumption that the centered outcomes are $1$-subGaussian and independent, they provide gap-dependent bounds on its sample complexity, which, in the single objective case, yield the well-known gap-dependent sample-complexity bounds for pure exploration in the MAB setting \citep{mannor2004sample}. Other works on Multi-objective MAB include \citep{xu2023pareto, busa2017multi, oner2018combinatorial}. As opposed to their frequentist approach, our approach is Bayesian, which imposes a Gaussian process prior to the latent function of interest. 


\cite{knowles2006parego} and \cite{ponweiser2008multiobjective} study the multi-objective optimization problem using the paradigm of Efficient Global Optimization (EGO) algorithm \citep{jones1998efficient}, a GP-based supervised learning approach used to tackle optimization problems with expensive evaluations. \cite{knowles2006parego} proposes ParEGO, which is an adaptation of EGO in the $m$-objective case. The author takes a scalarization approach to the problem, where the used acquisition function is the augmented Tchebycheff function, thus essentially reducing the problem to the single-objective one. Such an acquisition function was also used recently by \cite{lin2022pareto} for Pareto set learning. They extend previous approaches by identifying an approximate Pareto set of (potentially) infinitely many designs, as opposed to stopping with only a finite number of designs. In contrast, we take a direct Pareto set identification approach, in which we utilize the structure of the partial ordering relation between values of evaluated designs, while simultaneously integrating the GP posterior into the selection conditions of our method. 

The second application of EGO, SMS-EGO \citep{ponweiser2008multiobjective}, does not reduce the problem to a single-objective one, but instead maximizes the gain in hypervolume from optimistic estimates based on a GP model. Having optimistically estimated the GP-based value vectors of their predicted Pareto set, they compute the hypervolume of the gain of choosing such design points. The hypervolume approach to Pareto learning is further exploited by \citet{shah2016pareto} and \citet{daulton2020differentiable}. In \citet{shah2016pareto}, the Pareto hypervolume is defined in terms of an arbitrarily chosen reference point which is known to be suboptimal, and the current Pareto frontier. What makes their method attractive is its efficient implementation and computation, which relies on the approximation of a multi-dimensional integral. The computational complexity of computing the expected hypervolume gain is improved by extending the problem to the parallel constrained evaluation setting in \citet{daulton2020differentiable}. However, no one of these methods come with theoretical convergence guarantees, while we provide a comprehensive best-of-both-world type of convergence guarantees under minimal assumptions.

Furthermore, \cite{hernandez2016predictive} consider a (possibly infinite) bounded design space $\XX$ and assume that the individual objectives are samples from independent GPs. Their algorithm, Predictive Entropy Search for Multi-objective Optimization (PESMO), sequentially queries designs that maximize the acquisition function defined as the expected reduction in the entropy of the posterior distribution over the predicted Pareto set, given the previously sampled data. They provide comprehensive experimental comparisons between PESMO and other multi-objective optimization methods over various objectives and dimensions, in both noisy and noiseless cases. The comparison is done with respect to the relative difference between the hypervolume of the predicted set and the maximum such hypervolume for the given number of evaluations. Although their setup is similar to ours, we take a different approach to the Pareto set identification and provide theoretical guarantees for our method. Recently, \cite{tu2022joint} extend the Predictive Entropy Search paradigm to that of Joint Entropy Search (JES), which takes into account  the informativeness coming from both the Pareto set designs and their outcome vectors in their acquisition function.


Apart from Pareto front identification, several works consider identifying designs that satisfy certain performance criteria. For instance, \cite{katz2018feasible} consider the problem of identifying designs whose objective values lie in a given polyhedron in the fixed confidence setting. On the other hand, \cite{locatelli2016optimal} consider the problem of identifying designs whose objective values are above a given threshold in the fixed budget setting. In addition, \cite{gotovos2013active} consider level set identification when $\bs{f}$ is a sample from a GP. There also exists a plethora of works developing algorithms for best arm identification in the context of single-objective pure-exploration MAB problems such as the ones by \citet{mannor2004sample,bubeck2009pure,gabillon2012best}. 

\begin{table*}[t]	
	\setlength{\tabcolsep}{5pt}
	\renewcommand{\arraystretch}{1} 
	\caption{{\em Comparison with related works}. $^{(1)}$Both the algorithm and the performance analysis take into account dependence between the objectives.}\label{tbl:comparison}
	\centering
	\small
	\begin{tabular}{l lllll}
		\toprule
		\textbf{Work} & \textbf{Design} & \textbf{Function} & \textbf{Dep.$^{(1)}$ } & \textbf{Sample complex.}  & \textbf{Adaptive}  \\
		         & \textbf{space} &  &  & \textbf{bounds} &  \textbf{discret.} \\
	    \midrule
		\cite{auer2016pareto} & Finite & Arbitrary & No & Gap-dependent & Not used \\
		\cite{zuluaga2013active} & 	Finite &  GP sample  & No & Inf.-type & Not used  \\
		\cite{zuluaga2016varepsilon} & 	Finite &  RKHS element &  No  & Inf.-type & Not used \\
		\cite{hernandez2016predictive} & Bounded &  GP sample & No & No bound & Not used \\
		\cite{shah2016pareto} & Bounded &  GP sample & Yes & No bound & Not used \\
	    \cite{paria2020flexible} & Compact & GP sample & No & Bayes Regret & Not used \\
		\citet{belakaria2019max} & General & GP sample &  No  & Regret  norm & Not used \\
		 \citet{belakaria2020uncertainty} & Continuous &  GP sample & No & Regret  norm & Not used \\
		 \citet{daulton2020differentiable} & 	Bounded &  GP sample  & No & Exp. Hypervol. & Not used  \\
		\midrule
		\textbf{Ours} & Compact & GP sample & Yes & Inf. \& dim.-type & Used  \\
		\bottomrule
	\end{tabular} 
\end{table*} 
\raggedbottom

\subsection{Adaptive discretization}

Adaptive discretization is a technique that is mainly used in regret minimization in MAB problems on metric spaces \citep{kleinberg2008multi,bubeck2011x}, including contextual MAB problems \citep{shekhar2018gaussian}, when dealing with large arm and context sets. It consists of adaptively partitioning the ground set along a tree structure into smaller and smaller regions, until, theoretically, the regions converge to a single point (under uniqueness assumptions in the single-objective case). Different from other upper confidence bound-based methods, here, the usual upper confidence bound of a given point $x$ (in both frequentist and Bayesian approaches) is inflated with a factor times the diameter of the region containing $x$, so that the uncertainty coming from the variation of the function values inside the region is also captured. A key efficacy of adaptive discretization comes from the fact that it does not blindly sample the space without first exhaustively partitioning it as long as it is certain. This certainty is formalized by comparing the sample uncertainty diameter with the region diameter. If the latter exceeds the former, then the algorithm decides to partition the given region into smaller sub-regions. It is known that adaptive discretization can result in a much smaller regret compared to uniform discretization. However, this is significantly different from employing adaptive discretization in the context of PAL. While the regret can be minimized by quickly identifying one design that yields the highest expected reward, PAL requires identifying all designs that can form an $\bs{\epsilon}$-Pareto front together, while at the same time discarding all designs that are far from being Pareto optimal. Table \ref{tbl:comparison} compares our approach with the related work. 

\subsection{$\epsilon$-Pareto active learning}

The line of work on which our method builds is that of \cite{zuluaga2013active, zuluaga2016varepsilon}. \citet{zuluaga2016varepsilon} consider a finite design space and assume that each objective is a sample from an independent GP. We briefly describe the rationale of their algorithm, since it will also be used later on. Their algorithm, $\epsilon$-Pareto Active Learning ($\epsilon$-PAL), is a confidence bound-based method. 
%
%
It partitions the design space into three subsets: the set of undecided designs, that of predicted Pareto-optimal designs, and that of predicted suboptimal designs. The procedure is composed of four main phases. In the first phase, the algorithm uses the posterior estimates in order to compute the confidence hyper-rectangles of designs that are still up for selection. In the second phase, each design is associated with an uncertainty region, dependent on all previous confidence hyper-rectangles, and then checked whether it is safe to discard it. The third phase consists of deciding whether there are any designs that can be safely predicted to be Pareto optimal. In the final phase, $\epsilon$-PAL decides whether or not to evaluate the design of maximal uncertainty.

The algorithm we propose utilizes a similar rationale to that of $\epsilon$-PAL. However, such an extension, although seemingly natural, also brings with it several challenges and key differences, which necessitate novel ways of solving the problem. We summarize these differences in the following.

First, $\epsilon$-PAL iterates through all designs, maintaining relevant statistics of them which it uses to decide whether they are reasonable candidates for Pareto optimality, or whether they can be safely discarded. In our case, this would be a futile attempt, since the design space is potentially infinite. Therefore, we use adaptive discretization techniques \citep{bubeck2011x} in order to tackle the problem. Next, the application of adaptive discretization in the context of PAL introduces additional technical intricacies: instead of working over individual designs, our method maintains statistics over nodes (centers) of design regions with similar values, and, as a result, with similar levels of confidence. Thus, we design novel bonus terms for each node, which simultaneously take into account both the hyper-rectangular uncertainty and the structural relationship of ``nearby''\footnote{We assume a tree structure defined over the design space, where parent-child relationships are properly defined between relevant nodes. See Definition \ref{wellbehaved}.} nodes. 

On the algorithmic front, the successful integration of adaptive discretization into the PAL setting implies a new evaluation procedure. We part from the $\epsilon$-PAL evaluation subroutine and introduce a new condition, which captures the uncertainty over a given region and implies that the algorithm will evaluate a design from the region only when it is absolutely necessary, thus optimizing the sample complexity of the overall procedure. On the theoretical front, on top of the information-type sample complexity bounds provided by \cite{zuluaga2016varepsilon}, we additionally provide dimension-type bounds, thus yielding best-of-both-worlds bounds. We do this motivated by examples in which the information-type bounds might actually be loose (see Proposition \ref{pro:example}). To make our theoretical analysis rigorous, we prove several additional results that allow for a comprehensive understanding of our bounds. To the best of our knowledge, we are the first to provide such a full comprehensive analysis for both types of bounds, while simultaneously accounting for adaptive discretization in the context of PAL. 

\section{Background and formulation}\label{background}

Throughout the paper, let us fix a positive integer $m\geq 2$ and a compact metric space $(\XX,d)$. We denote by $\Real^m$ the $m$-dimensional Euclidean space and by $\R^m_+$ the set of all vectors in $\Real^m$ with nonnegative components. We write $[m]=\{1,\ldots,m\}$. Given a function ${\bs f}\colon \XX\to \Real^m$ and a set ${\cal S}\subseteq \XX$, we denote by ${\bs f}({\cal S})=\{{\bs f}(x)\colon x\in{\cal S}\}$ the image of $\cal S$ under $\bs f$. Given $x\in\XX$ and $r\geq 0$, $B(x,r)=\{y\in\XX\colon d(x,y)\leq r\}$ denotes the closed ball centered at $x$ with radius $r$. For a non-empty set ${\cal S} \subseteq \mathbb{R}^m$, let $\partial {\cal S}$ denote its boundary. If another non-empty set $\mathcal{S}^\prime\subseteq\Real^m$ is given, then we define the Minkowski sum and difference of $\mathcal{S}$ and $\mathcal{S}^\prime$ as $\mathcal{S}+\mathcal{S}^\prime=\{\bs{\mu}+\bs{\mu}^\prime \colon \bs{\mu}\in\mathcal{S},\bs{\mu}^\prime\in\mathcal{S}^\prime\}$, $\mathcal{S}-\mathcal{S}^\prime=\{\bs{\mu}-\bs{\mu}^\prime \colon \bs{\mu}\in\mathcal{S},\bs{\mu}^\prime\in\mathcal{S}^\prime\}$, respectively. For a vector $\bs{\mu}^{\prime\prime}\in\Real^m$, we define $\bs{\mu}^{\prime\prime}+\mathcal{S}=\{\bs{\mu}^{\prime\prime}\}+\mathcal{S}$.

\subsection{Multi-objective optimization}\label{sec:moo}

A multi-objective optimization problem is an optimization problem that involves multiple objective functions \citep{hwang2012multiple}. Formally, letting $f^j : \XX \rightarrow \mathbb{R}$ be a function for every $j \in [m]$, we write
\begin{align*}
& \textup{maximize} \; [f^1(x), \ldots, f^m(x)]^\T
\textup{ subject to } \; x \in \XX ,
\end{align*}
where $m \geq 2$ is the number of objectives and $\XX$ is the set of designs. We refer to the vector of all objectives evaluated at design $x \in {\cal X}$ as $\bs{f}(x) = [f^1(x),\ldots,f^m(x)]^\T$. The objective space is given as $\bs{f}({\cal X}) \subseteq \mathbb{R}^m$. In order to define a set of Pareto optimal designs in ${\cal X}$, we first describe several order relations on $\mathbb{R}^m$.
\begin{defn}\label{defn:order}
	For $\bs{\mu} , \bs{\mu}' \in \mathbb{R}^m$, we say that: (1) $\bs{\mu}$ is weakly dominated by $\bs{\mu}^\prime$, written as $\bs{\mu} \preceq \bs{\mu}^\prime$, if $\mu_j \leq \mu^\prime_j$ for every $j \in [m]$. (2) $\bs{\mu}$ is dominated by $\bs{\mu}'$, written as $\bs{\mu} \prec \bs{\mu}'$, if $\bs{\mu} \preceq \bs{\mu}'$ and there exists $ j \in  [m]$ with $\mu_j < \mu'_j$. (3) For $\bs{\epsilon} \in \mathbb{R}^m_+$, $\bs{\mu}$ is $\bs{\epsilon}$-dominated by $\bs{\mu}'$, written as $ \bs{\mu} \preceq_{\bs{\epsilon}} \bs{\mu}'$, if $\bs{\mu} \preceq \bs{\mu}'+\bs{\epsilon}$. (4) $\bs{\mu}$ is incomparable with $\bs{\mu}'$, written as $\bs{\mu} \parallel \bs{\mu}'$, if neither $\bs{\mu} \prec \bs{\mu}'$ nor $\bs{\mu}' \prec \bs{\mu}$ holds.
\end{defn}
Based on Definition \ref{defn:order}, we define the following induced relations on $\XX$.
\begin{defn}\label{defn:inducedorder} 
	For designs $x,y \in \XX$, we say that: (1) $x$ is weakly dominated by $y$, written as $ x \preceq y$, if $\bs{f} (x) \preceq \bs{f} (y)$. (2) $x$ is dominated by $y$, written as $ x \prec y$, if $\bs{f} (x) \prec \bs{f} (y)$. (3) For $\bs{\epsilon} \in \mathbb{R}^m_+$, $x$ is $\bs{\epsilon}$-dominated by $y$, written as $x \preceq_{\bs{\epsilon}} y$, if $\bs{f} (x) \preceq_{\bs{\epsilon}} \bs{f} (y)$. (4) $x$ is incomparable with $y$, written as $x \parallel y$, if $\bs{f} (x) \parallel \bs{f}(y)$.
\end{defn}

Note that, while the relation $\preceq$ on $\Real^m$ (Definition \ref{defn:order}) is a partial order, the induced relation $\preceq$ on $\XX$ (Definition \ref{defn:inducedorder}) is only a preorder since it does not satisfy antisymmetry in general. If a design $x\in\XX$ is not dominated by any other design, then we say that $x$ is \textit{Pareto optimal}. The set of all Pareto optimal designs is called the \textit{Pareto set} and is denoted by ${\cal O}(\XX)$. The \textit{Pareto front} is defined as ${\cal Z}({\cal X}) = \partial (\bs{f}(\mathcal{O}(\XX))-\Real^m_+)$.

We assume that $\bs{f}$ is not known beforehand and formalize the goal of identifying $\mathcal{O}(\XX)$ as a sequential decision-making problem. In particular, we assume that evaluating $\bs{f}$ at design $x$ results in a noisy observation of $\bs{f}(x)$. The exact identification of the Pareto set and the Pareto front using a small number of evaluations is, in general, not possible under this setup, especially when the cardinality of $\XX$ is infinite or a very large finite number. A realistic goal is to identify the Pareto set and the Pareto front in an approximate sense, given a desired level of accuracy that can be specified as an input $\bs{\epsilon}$. Therefore, our goal in this paper is to identify an $\bs{\epsilon}$-accurate Pareto set (see Definition \ref{defn:epsilonaccurate}) that contains a set of near-Pareto optimal designs by using as few evaluations as possible.
Next, we define the $\bs{\epsilon}$-Pareto front and $\bs{\epsilon}$-accurate Pareto set associated with ${\cal X}$. 

\begin{defn}\label{epsPareto}
	Given $\bs{\epsilon} \in \mathbb{R}^m_{+}$, the set
		$
		{\cal Z}_{\bs{\epsilon}}({\XX}) =(\bs{f}(\mathcal{O}(\XX))-\Real^m_+)\setminus (\bs{f}(\mathcal{O}(\XX))-2\bs{\epsilon}-\Real^m_+)
		$
		is called the \textbf{$\bs{\epsilon}$-Pareto front} of ${\cal X}$.
\end{defn}

Roughly speaking, the $\bs{\epsilon}$-Pareto front can be thought of as the slab of points of width $2\bs{\epsilon}$ in $\mathbb{R}^m$ adjoined to the lower side of the Pareto front. 
\begin{defn}\label{defn:epsiloncovering} 
	Given $\bs{\epsilon} \in \mathbb{R}^m_{+}$ and ${\cal S} \subseteq \mathbb{R}^{m}$, a non-empty subset $\mathcal{C}$ of ${\cal S}$ is called an \textbf{$\bs{\epsilon}$-covering of ${\cal S}$} if for every $\bs{\mu} \in {\cal S}$, there exists $\bs{\mu} ' \in \mathcal{C}$ such that $ \bs{\mu} \preceq_{\bs{\epsilon}} \bs{\mu}'$.
\end{defn}

\begin{defn} \label{defn:epsilonaccurate}
	Given $\bs{\epsilon} \in \mathbb{R}^m_{+}$, a subset ${\cal O}_{\bs{\epsilon}}$ of $\XX$ is called an \textbf{$\bs{\epsilon}$-accurate Pareto set} if $\bs{f} ({\cal O}_{\bs{\epsilon}})$ is an $\bs{\epsilon}$-covering of ${\cal Z}_{\bs{\epsilon}}({\cal X})$.
\end{defn}

Note that the front associated with an $\bs{\epsilon}$-accurate Pareto set is a subset of the $\bs{\epsilon}$-Pareto front. As mentioned in \citep{zuluaga2016varepsilon}, an $\bs{\epsilon}$-accurate Pareto set is a natural substitute of the Pareto set since any $\bs{\epsilon}$-accurate Pareto design is guaranteed to be no worse than $2 \bs{\epsilon}$ of any Pareto optimal design.

\subsection{Prior knowledge on \emph{f}}

We model the vector $\bs{f}=[f^1,\ldots,f^m]^\T$ of objective functions as a realization of an $m$-output GP with zero mean, i.e., $\bs{\mu}(x)=0$ for all $x\in\XX$, and some positive definite covariance function $\bs{k}$. 

\begin{defn}\label{defn:moutputgp} An \textbf{$m$-output GP} with index set $\XX$ is a collection $(\bs{f}(x))_{x\in\XX}$ of $m$-dimensional random vectors which satisfies the property that $( \bs{f}(x_1),\ldots,\bs{f}(x_n))$ is a Gaussian random vector for all $\{ x_1,\ldots,x_n \} \subseteq \XX$ and $n \in \mathbb{N}$. The probability law of an $m$-output GP $(\bs{f}(x))_{x\in\XX}$ is uniquely specified by its (vector-valued) mean function $x\mapsto \bs{\mu} (x) = \mathbb{E}[\bs{f}(x)]\in\Real^m$ and its (matrix-valued) covariance function $(x_1,x_2)\mapsto \bs{k}(x_1,x_2) = \mathbb{E}[ (\bs{f}(x_1) - \bs{\mu} (x_1))(\bs{f}(x_2) - \bs{\mu}(x_2))^\T]\in\Real^{m\times m}$. 
\end{defn}

Functions generated from a GP naturally satisfy smoothness conditions which are very useful while working with metric spaces, as indicated by the following remark. 
\begin{rem}\label{rem:metric} Let $g$ be a zero-mean, single-output GP with index set $\XX$ and covariance function $k$. The metric $l$ induced by the GP on $\XX$ is defined as
	$l(x_1,x_2) = \of{\mathbb{E} [ (g(x_1) -g(x_2))^2]}^{1/2}
	= (k(x_1,x_1) + k(x_2,x_2) - 2k(x_1,x_2))^{1/2}$.   
	This gives us the following tail bound for $x_1,x_2 \in \XX$, and $a \geq 0$:
	$\mathbb{P} (| g(x_1) -g(x_2)| \geq a ) \leq 2\textup{exp} \left( - a^2 / (2 l^2(x_1,x_2)) \right)$.
\end{rem}

Let us fix an integer $T\geq 1$. We consider a finite sequence $\tilde{x}_{[T]} = [\tilde{x}_1,\ldots,\tilde{x}_T]^\T$ of designs with the corresponding vector $\bs{f}_{[T]}=[\bs{f}(\tilde{x}_1)^\T,\ldots,\bs{f}(\tilde{x}_T)^\T]^\T$ of unobserved objective values and the ($mT$-dimensional) vector $\bs{y}_{[T]} = [\bs{y}_1^\T,\ldots,\bs{y}_T^\T]^\T$ of observations, where
\[
\bs{y}_{\tau} = \bs{f}( \tilde{x}_{\tau} ) + \bs{\kappa}_{\tau}
\]
is the observation that corresponds to $\tilde{x}_{\tau}$ and $\bs{\kappa}_{\tau} = [\kappa^1_\tau,\ldots,\kappa^m_\tau]^\T$ is the noise vector that corresponds to this particular evaluation for each $\tau\in[T]$. 

The posterior distribution of $\bs{f}$ given $\bs{y}_{[T]}$ is that of an $m$-output GP with mean function $\bs{\mu}_T$ and covariance function $\bs{k}_T$ given by
$$\bs{\mu}_{T} (x)  = \bs{k}_{[T]}(x) (\bs{K}_{[T]} + \bs{\Sigma}_{[T]} )^{-1} \bs{y}_{[T]}^\T$$
 and
$$\bs{k}_T(x,x')  = \bs{k}(x,x') -  \bs{k}_{[T]}(x) (\bs{K}_{[T]} + \bs{\Sigma}_{[T]} )^{-1}\bs{k}_{[T]}(x')^\T$$
for all $x,x'\in\XX$, where $\bs{k}_{[T]}(x)=[\bs{k}(x,\tilde{x}_1),\ldots,\bs{k}(x,\tilde{x}_T)]\in\Real^{m\times mT}$,
\begin{align*}
\bs{K}_{[T]} & =\begin{bmatrix}\bs{k}(\tilde{x}_1,\tilde{x}_1),&\ldots,&\bs{k}(\tilde{x}_1,\tilde{x}_T)\\
\vdots& &\vdots\\
\bs{k}(\tilde{x}_T,\tilde{x}_1),&\ldots,&\bs{k}(\tilde{x}_T,\tilde{x}_T)\\
\end{bmatrix}~, 
\bs{\Sigma}_{[T]}  =\begin{bmatrix}\sigma^2\bs{I}_m,&\bs{0}_m,&\ldots,&\bs{0}_m\\
\vdots& & &\vdots\\
\bs{0}_m,&\bs{0}_m,&\ldots,&\sigma^2\bs{I}_m\\
\end{bmatrix}\in\Real^{mT\times mT}~,
\end{align*}
$\bs{I}_m$ denotes the $m\times m$-dimensional identity matrix, and $\bs{0}_m$ is the $m\times m$-dimensional zero matrix. Note that this posterior distribution captures the uncertainty in $\bs{f}(x)$ for all $x\in \XX$. In particular, the posterior distribution of $\bs{f}(x)$ is $\mathcal{N}(\bs{\mu}_{T}(x),\bs{k}_{T}(x,x))$; and for each $j\in[m]$, the posterior distribution of $f^j(x)$ is $\mathcal{N}(\mu_T^j(x),(\sigma_{T}^j(x))^2)$, where
$(\sigma_{T}^j(x))^2=k^{jj}_T(x,x)$.
Moreover, the distribution of the corresponding observation $\bs{y}$ is $\mathcal{N}(\bs{\mu}_T(x),\bs{k}_{T}(x,x)+\sigma^2\bs{I}_m)$.

\subsection{Information gain}

Since we aim at finding an $\bs{\epsilon}$-accurate Pareto set in as few evaluations as possible, we need to learn the most informative designs. In order to do that, we will make use of the notion of \textit{information gain.} Our sample complexity result in Theorem \ref{mainthm} depends on the maximum information gain.

In Bayesian experimental design, the informativeness of
a finite sequence $\tilde{x}_{[T]}$ of designs is quantified by
$I(\bs{y}_{[T]} ; \bs{f}_{[T]}) = H(\bs{y}_{[T]}) - H(\bs{y}_{[T]} | \bs{f}_{[T]})$,
where $H(\cdot )$ denotes the entropy of a random vector and $H(\cdot|\bs{f}_{[T]})$ denotes the conditional entropy of a random vector with respect to $\bs{f}_{[T]}$. This measure is called the information gain, which gives us the decrease of entropy of $\bs{f}_{[T]}$ given the observations $\bs{y}_{[T]}$. We define the \emph{maximum information gain} as 
$\gamma_ T = \max_{\bs{y}_{[T]}} I(\bs{y}_{[T]} ; \bs{f}_{[T]})$. 

\section{Adaptive $\epsilon$-PAL algorithm}\label{algorithm}

In this section, we introduce our algorithm, \textit{Adaptive $\bs{\epsilon}$-Pareto Active Learning} (PAL), which builds on the $\bs{\epsilon}$-PAL algorithm of \cite{zuluaga2016varepsilon} and utilizes adaptive discretization techniques to enable an efficient navigation of the continuous search space. The algorithm is largely technical, thus we divide the description of each of its phases into more comprehensible subsections.

\paragraph{Intuition} On a high level, our algorithm's rationale can be explained as follows. First, Adaptive $\bs{\epsilon}$-PAL maintains an adaptively changing resolution over the design space, structured along so-called nodes of a tree. That is, starting from the center of the space (the center node), we only `zoom in' (thus expanding the tree) on points that are of potential interest.  In every iteration of the algorithm, we are given the current set of active nodes in our tree-based partition of the space. These nodes serve as proxies for regions of points in the design space which they inhabit. Now, the goal of the algorithm is to return a set of nodes (and, as a consequence, their associated regions) that form an approximate Pareto front with high probability. In order to do that, the algorithm maintains a confidence region for every active node. Applying worst-case arguments over these confidence regions, the algorithm decides which points to discard in every iteration, and which points to maintain as potentially optimal (in the Pareto sense). Next, we move points about which we are confident enough to a predicted Pareto set. Basically, we keep shrinking the set of points about which we are undecided, and we keep growing the set of points which we believe are approximately optimal. Finally, if enough information is obtained on the most uncertain active node, then we decide to expand it into children nodes, since there is enough reason to believe that more relevant information will be obtained from those new nodes.

The system operates in rounds $t \geq 1$. In each round $t$, the algorithm picks a design $x_t \in \XX$, and assuming that it already had $\tau$ evaluations, it subsequently decides whether or not to obtain the $\tau+1$st noisy observation $\bs{y}_{\tau+1} = [y^1_\tau ,\ldots,y^m_{\tau+1} ]^\T$ of the latent function $\bs{f}$ at $x_t$. At the end, our algorithm returns a subset $\hat{P}$ of $\XX$ which is guaranteed to be an $\bs{\epsilon}$-accurate Pareto set with high probability and the associated set $\hat{\PP}$ of nodes which we will define later. The pseudocode is given in Algorithm \ref{pseudocode}.

\subsection{Modeling}

We maintain two sets of time indices, one counting the total number of iterations, denoted by $t$, and the other counting only the evaluation rounds, denoted by $\tau$. The algorithm evaluates a design only in some rounds. For this reason, we also define the following auxiliary time variables which help us understand the chronological connection between the values of $t$ and $\tau$. We let $\tau_t$ represent the number of evaluations before round $t\geq 1$ and let $t_\tau$ denote the round when evaluation $\tau\geq 0$ is made, with the convention $t_0 = 0$. The sequence $(t_\tau)_{\tau \geq 0}$ is an increasing sequence of stopping times. Note that we have $\tau_{t_\tau +1} = \tau$ for each $\tau \in \mathbb{N}$. 

Iteration over individual designs may not be feasible when the cardinality of the design space is very large. Thus, we consider partitioning the space into regions of similar designs, i.e., two designs in $\XX$ which are at a close distance have similar outcomes in each objective. This is a natural property of GP-sampled functions as discussed in Remark \ref{rem:metric}. 



Since iteration over individual designs is not possible in large spaces, we will instead iterate over `regions' of interest. Here, we focus on a compact subset $\mathcal{X}$ of the Euclidean space $\mathbb{R}^{m'}$ for some $m'\in\mathbb{N}$. However, we do this purely for simplicity of presentation. In Appendix, we show that our analysis holds when $\mathcal{X}$ is any general `well-behaved' metric space.\footnote{See Section \ref{sec:well_behaved} for detailed definitions.}
For such a metric space,  one can easily partition the design space along a tree structure, each level $h$ of which is associated with a partition of $\XX$ into $N^h$  equal-sized regions $X_{h,i}$, centered at a node $x_{h,i}$, for all $0\leq i\leq N^h$, where $N\in \mathbb{N}$.


At each round $t\in\mathbb{N}$, the algorithm maintains a set $\SSS_t$ of \emph{undecided nodes} and a set $\PP_t$ of \emph{decided nodes}. For an undecided node in $\SSS_t$, its associated cell consists of designs for which we are undecided about including in the $\bs{\epsilon}$-accurate Pareto set. Similarly, for a decided node in $\PP_t$, its associated cell consists of designs that we decide to include in the $\bs{\epsilon}$-accurate Pareto set. At the beginning of round $t=1$ (initialization), we set $\SSS_1=\{x_{0,1}\}$ and $\PP_1=\emptyset$. Within each round $t\in\mathbb{N}$, the sets $\PP_t$ and $\SSS_t$ are updated during the discarding, $\bs{\epsilon}$-covering and refining/evaluating phases of the round; at the end of round $t$, their finalized contents are set as $\PP_{t+1}$ and $\SSS_{t+1}$, respectively, as a preparation for round $t+1$. For each $t\in\mathbb{N}$, the algorithm performs round $t$ as long as $\SSS_t\neq\emptyset$ at the beginning of round $t$; otherwise, it terminates and returns $\hat{\PP}=\PP_t$.

In addition to the sets of undecided and decided nodes, the algorithm maintains a set $\AAA_t$ of \emph{active nodes}, which is defined as the union $\SSS_t\cup\PP_t$ at the beginning of each round $t\in\mathbb{N}$. While the sets $\SSS_t$ and $\PP_t$ are updated within round $t$ as described above, the set $\AAA_t$ is kept fixed throughout the round with its initial content. Note that $\AAA_1=\{x_{0,1}\}$.

At round $t\in\mathbb{N}$, the algorithm considers each active node $x_{h,i}\in\AAA_t$. Let $j\in[m]$. We define the \emph{lower index} of $x_{h,i}$ in the $j$th objective as
$L^j_t (x_{h,i}) = \bunderline{B}^j_t (x_{h,i}) - V_h $,                
where $\bunderline{B}^j_t(x_{h,i})$ is a high probability lower bound on the $j$th objective value at $x_{h,i}$ and is defined as 
\begin{align*}
\bunderline{B}^j_t (x_{h,i}) & =  \max \{ \mu^j_{\tau_t} (x_{h,i}) - \beta^{1/2}_{\tau_t} \sigma^j_{\tau_t }(x_{h,i}),   \mu^j_{\tau_t} (p(x_{h,i}))- \beta^{1/2}_{\tau_t} \sigma^j_{\tau_t}(p(x_{h,i})) - V_{h-1} \} ~.
\end{align*}
Here, $\beta_\tau\in O(\log(\tau^2/\delta))$ and $V_h\in\tilde{O}(\rho^{\alpha h})$ is a high probability upper bound on the maximum variation of the objective $j$ inside region $X_{h,i}$.\footnote{See Theorem \ref{mainthm} for exact definitions.}
Similarly, we define the \emph{upper index} of $x_{h,i}$ in the $j$th objective as
$U^j_t (x_{h,i}) = \bar{B}^j_t (x_{h,i}) + V_h$, 
where $\bar{B}^j_t(x_{h,i})$ is a high probability upper bound on the $j$th objective value at $x_{h,i}$ and is defined as
\begin{align*}
\bar{B}^j_t(x_{h,i}) & = \min \{ \mu^j_{\tau_t} (x_{h,i}) + \beta^{1/2}_{\tau_t} \sigma^j_{\tau_t}(x_{h,i}),  \mu^j_{\tau_t} (p(x_{h,i}))+ \beta^{1/2}_{\tau_t} \sigma^j_{\tau_t}(p(x_{h,i})) + V_{h-1} \} ~.
\end{align*}
We denote by $\bs{L}_t (x_{h,i}) = [L^1_t (x_{h,i}) ,\ldots,L^m_t (x_{h,i}) ]^\T$ the \emph{lower index vector} of the node $x_{h,i}$ at round $t$ and similarly by $\bs{U}_t (x_{h,i})  = [ U^1_t(x_{h,i}) ,\ldots,U^m_t(x_{h,i}) ]^\T$ the corresponding \emph{upper index vector}. We also let $\bs{V}_h$ denote the $m$-dimensional vector with all entries being equal to $V_h$. Next, we define the \emph{confidence hyper-rectangle} of node $x_{h,i}$ at round $t$ as 
\[
\bs{Q}_t (x_{h,i} )= \{ \bs{y} \in \mathbb{R}^m \colon \bs{L}_t (x_{h,i}) \preceq \bs{y} \preceq \bs{U}_t (x_{h,i})\}~,
\]
which captures the uncertainty in the learner's prediction of the objective values. Then, the posterior mean vector $\bs{\mu}_{\tau_t} (x_{h,i}) = [\mu^1_{\tau_t} (x_{h,i}),\ldots,\mu^m_{\tau_t} (x_{h,i})]^\T$ and the variance vector $\bs{\sigma}_{\tau_t} (x_{h,i})= [\sigma^1_{\tau_t} (x_{h,i}),\ldots,\sigma^m_{\tau_t} (x_{h,i})]^\T$ are computed by using the GP inference outlined in Section~\ref{background}. We define the \emph{cumulative confidence hyper-rectangle} of $x_{h,i}$ at round $t$ as
\begin{align}\label{hyperrect}
\bs{R}_t(x_{h,i}) = \bs{R}_{t-1}(x_{h,i}) \cap \bs{Q}_{t}(x_{h,i})~
\end{align}
assuming that $\bs{R}_{t-1}(x_{h,i})$ is well-defined at round $t-1$ (the case $t\geq 2$) or using the convention that $\bs{R}_0(x_{0,1}) = \Real^m$ since $\AAA_1=\{x_{0,1}\}$ (the case $t=1$). The well-definedness assumption will be verified in the refining/evaluating phase below.

\subsection{Discarding phase}

In order to correctly identify designs to be discarded under uncertainty, we need to compare the pessimistic and optimistic outcomes of designs. First, we define dominance under uncertainty.

\begin{defn}\label{defn:domuncertain}
	Let $t\in\mathbb{N}$ and let $x,y\in\AAA_{t}$ be two nodes with $x\neq y$. We say that $x$ is $\bs{\epsilon}$-dominated by $y$ \textbf{under uncertainty} at round $t$ if 
	$\textup{max}(\bs{R}_t(x)) \preceq_{\bs{\epsilon}} \textup{min} (\bs{R}_t(y))$,
	where we define $\max (\bs{R}_t(x))$ as the unique vector $\bs{v} \in \bs{R}_t(x)$ such that $v^j \geq z^j$ for every $j \in [m]$ and $\bs{z} = (z^1,\ldots,z^j) \in \bs{R}_t(x)$, and we define $\min (\bs{R}_t(y))$ in a similar fashion. 
\end{defn}
If a node $x\in\AAA_{t}$ is $\bs{\epsilon}$-dominated by any other node in $\AAA_{t}$ under uncertainty, then the algorithm is confident enough to discard it. Basically, the condition of Definition \ref{defn:domuncertain} implies that, if the best possible value that $x$ can have is still approximately dominated by the worst possible value that $y$ can have, then we can conclude that $y$ dominates $x$ with overwhelming probability. To check this, the algorithm compares $x$ with all of the \textit{pessimistic} available points as introduced next.

\begin{defn}\label{defn:pessPareto} \textbf{(Pessimistic Pareto set)} Let $t\geq 1$ and let $D\subseteq \AAA_{t}$ be a set of nodes. We define $p_{pess,t}(D)$, called the \textbf{pessimistic Pareto set} of $D$ at round $t$, as the set of all nodes $x\in D$ for which there is no other node $y\in D \setminus\{x\}$ such that
	$\textup{min}(\bs{R}_t(x)) \prec  \textup{min} (\bs{R}_t(y))$.
	We call a design in $p_{pess,t}(D)$ a pessimistic Pareto design of $D$ at round $t$.
\end{defn}

Here we are interested in finding the nodes, say $x$, which are Pareto optimal in the most pessimistic scenario when their objective values turn out to be $\min \bs{R}_t(x)$. We do this in order to identify which nodes (and their associated cells) to discard with overwhelming probability. More precisely, the algorithm calculates $\PP_{pess,t}=p_{pess,t}(\AAA_t)$ first. For each $x_{h,i}\in\SSS_t\backslash\PP_{pess,t}$, it checks if $\max(\bs{R}_t(x_{h,i}))\preceq_{\bs{\epsilon}}\min(\bs{R}_t(x))$ for some $x\in \PP_{pess,t}$. In this case, node $x_{h,i}$ is discarded, that is, it is removed from $\SSS_t$, and will not be considered in the rest of the algorithm; otherwise no change is made in $\SSS_t$.

\subsection{\texorpdfstring{$\bs{\epsilon}$}{TEXT}-Covering phase}

The overall aim of the learner is to empty the set $\SSS_t$ of undecided nodes as fast as possible. A node $x_{h,i}\in\SSS_t$ is moved to the decided set $\PP_t$ if it is determined that the associated cell $X_{h,i}$ belongs to an $\bs{\epsilon}$-accurate Pareto set $\mathbf{ \mathcal{O}_{\epsilon}}$ with high probability. To check this, the notion in the next definition is useful. Let us denote by $\mathcal{W}_t$ the union $\PP_t\cup\SSS_t$ at the end of the discarding phase. Note that $\mathcal{W}_t\subseteq\AAA_t$ but the two sets do not coincide in general due to the discarding phase.

\begin{figure}[h]
\centering
\includegraphics[scale = 0.5]{./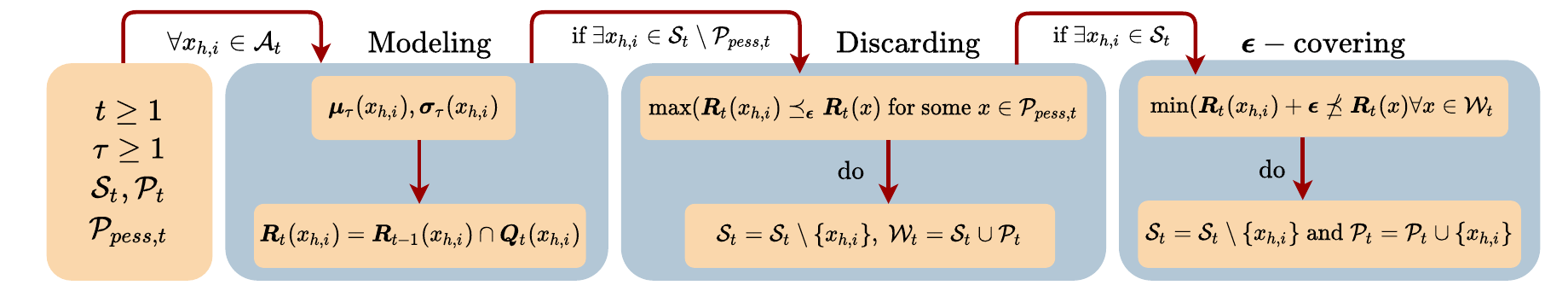}
\caption{This is an illustration of the modeling, discarding, and $\bs{\epsilon}$-covering phases.}
\label{mod_disc}
\end{figure}

\begin{defn} \label{defn:paretoaccurate}
	Let $x_{h,i}\in\SSS_t$. We say that the cell $X_{h,i}$ associated to node $x_{h,i}$ \textbf{belongs to an $\mathcal{O}_{\epsilon}$ with high probability} if there is no $x \in \mathcal{W}_t$ such that
	$\min (\bs{R}_t (x_{h,i})) + \bs{\epsilon} \preceq \max (\bs{R}_t(x))$.
\end{defn}

For each $x_{h,i}\in\SSS_t$, the algorithm checks if $X_{h,i}$ belongs to an $\mathcal{O}_{\epsilon}$ with high probability in view of Definition~\ref{defn:paretoaccurate}. In this case, $x_{h,i}$ is removed from the set $\SSS_t$ of undecided nodes and is moved to the set $\PP_t$ of decided nodes; otherwise, no change is made. The nodes in $\PP_t$ are never removed from this set; hence, they will be returned by the algorithm as part of the set $\hat{\PP}$ at termination. Intuitively, this step determines which points can be safely predicted to be in the approximately accurate Pareto set if there is no other active node which approximately dominates it in the worst case possible. In the appendix, we show that the union $\hat{P}=\bigcup_{x_{h,i}\in\hat{\PP}}X_{h,i}$ of the cells is an $\bs{\epsilon}$-accurate Pareto cover, according to Definition \ref{defn:epsilonaccurate}, with high probability. Note that while the sets $\SSS_t, \PP_t$ can be modified during this phase, the set $\mathcal{W}_t$ does not change. The modeling, discarding, and $\bs{\epsilon}$-covering phases of the algorithm are illustrated in Figure \ref{mod_disc}.

\subsection{Refining/evaluating phase}

While $\SSS_t \neq \emptyset$, the algorithm selects a design $x_t = x_{h_t,i_t} \in \mathcal{W}_t$ that corresponds to a node with depth $h_t$ and index $i_t$, according to the following rule. First, for a given node $x_{h,i}\in\mathcal{W}_t$, we define
\begin{align}
\omega_t(x_{h,i}) = \max_{y,y'\in \bs{R}_t(x_{h,i})} \norm{y - y'}_2 \label{diameternode}~,
\end{align}
which is the diameter of its cumulative confidence hyper-rectangle in $\mathbb{R}^m$. The algorithm picks the most uncertain node for evaluation in order to decrease uncertainty. Hence, among the available points in $\mathcal{W}_t$, the node $x_{h_t,i_t}$ with the maximum such diameter is chosen by the algorithm. We denote the diameter of the cumulative  confidence hyper-rectangle associated with the selected node by $\overline{\omega}_t$ and formally define it as
$\overline{\omega}_t = \max_{x_{h,i} \in \mathcal{W}_t} \omega_t(x_{h,i})$.
Since the learner is not sure about discarding $x_{h_t,i_t}$ or moving it to $\PP_t$, he decides whether to refine the associated region $X_{h_t,i_t}$ or evaluating the objective function at the current node based on the following rule.

\begin{figure}[h]
\centering
\includegraphics[scale = 0.5]{./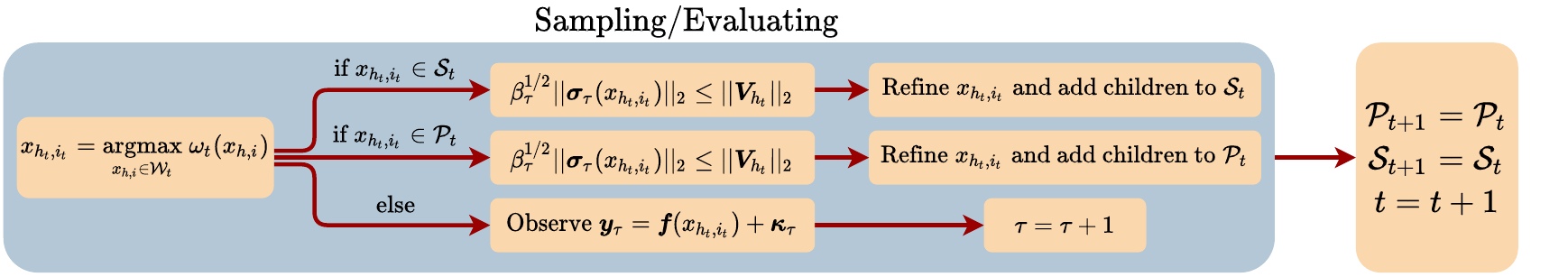}
\caption{This is an illustration of the sampling and refining phase.}
\label{samp_eval}
\end{figure}

\begin{itemize}[leftmargin=*]
	\item \textit{Refine:} If $\beta^{1/2}_{\tau_t} \norm{ \bs{\sigma}_{\tau_t}(x_{h_t,i_t})}_2 \leq \norm{ \bs{V}_{h_t}}_2$, then $x_{h_t,i_t}$ is expanded, i.e., the $N$ children nodes $\{ x_{h_{t}+1,j} \colon N(i_t -1) + 1\leq j \leq i_t \}$ of $x_{h_t,i_t}$ are generated. If $x_{h_t,i_t}\in\SSS_t$, then these newly generated nodes are added to $\SSS_t$ while $x_{h_t,i_t}$ is removed from $\SSS_t$. An analogous operation is performed if $x_{h_t,i_t}\in\PP_t$. In each case, for each $j$ with $N(i_t -1) + 1\leq j \leq i_t$, the newly generated node $x_{h_{t}+1,j}$ inherits the cumulative confidence hyper-rectange of its parent node $x_{h_t,i_t}\in\AAA_{t}$ as calculated by \eqref{hyperrect}, that is, we define
	$\bs{R}_t(x_{h_{t}+1,j}) = \bs{R}_{t}(x_{h_t,i_t}) =\bs{R}_{t-1}(x_{h_t,i_t}) \cap \bs{Q}_{t}(x_{h_t,i_t})$.
	This way, for every node $x\in\PP_t\cup\SSS_t$ at the end of refining, the cumulative confidence hyper-rectangles up to round $t$ are well-defined and we have $\bs{R}_0(x)\supseteq\bs{R}_1(x)\supseteq\ldots\supseteq \bs{R}_t(x)$. In particular, the well-definedness assumption for \eqref{hyperrect} is verified for round $t+1$ since $\AAA_{t+1}$ is defined as $\PP_t\cup\SSS_t$ at the end of this phase. 
	
	\item \textit{Evaluate:} If $\beta^{1/2}_{\tau_t} \norm{ \bs{\sigma}_{\tau_t}(x_{h_t,i_t})}_2 > \norm{ \bs{V}_{h_t}}_2$, then the objective function is evaluated at the point $x_{h_t,i_t}$, i.e., we observe the noisy sample $\bs{y}_{\tau_t}$ and update the posterior statistics of $x_{h_t,i_t}$. No change is made in $\SSS_t$ and $\PP_t$.
\end{itemize}

This phase of the algorithm is illustrated in Figure \ref{samp_eval}. The evolution of the partitioning of the design space is illustrated in Figure \ref{tree_fig}.

\begin{figure}[h]
	\centering
	\hspace*{-1cm}
	\includegraphics[scale = 0.3]{./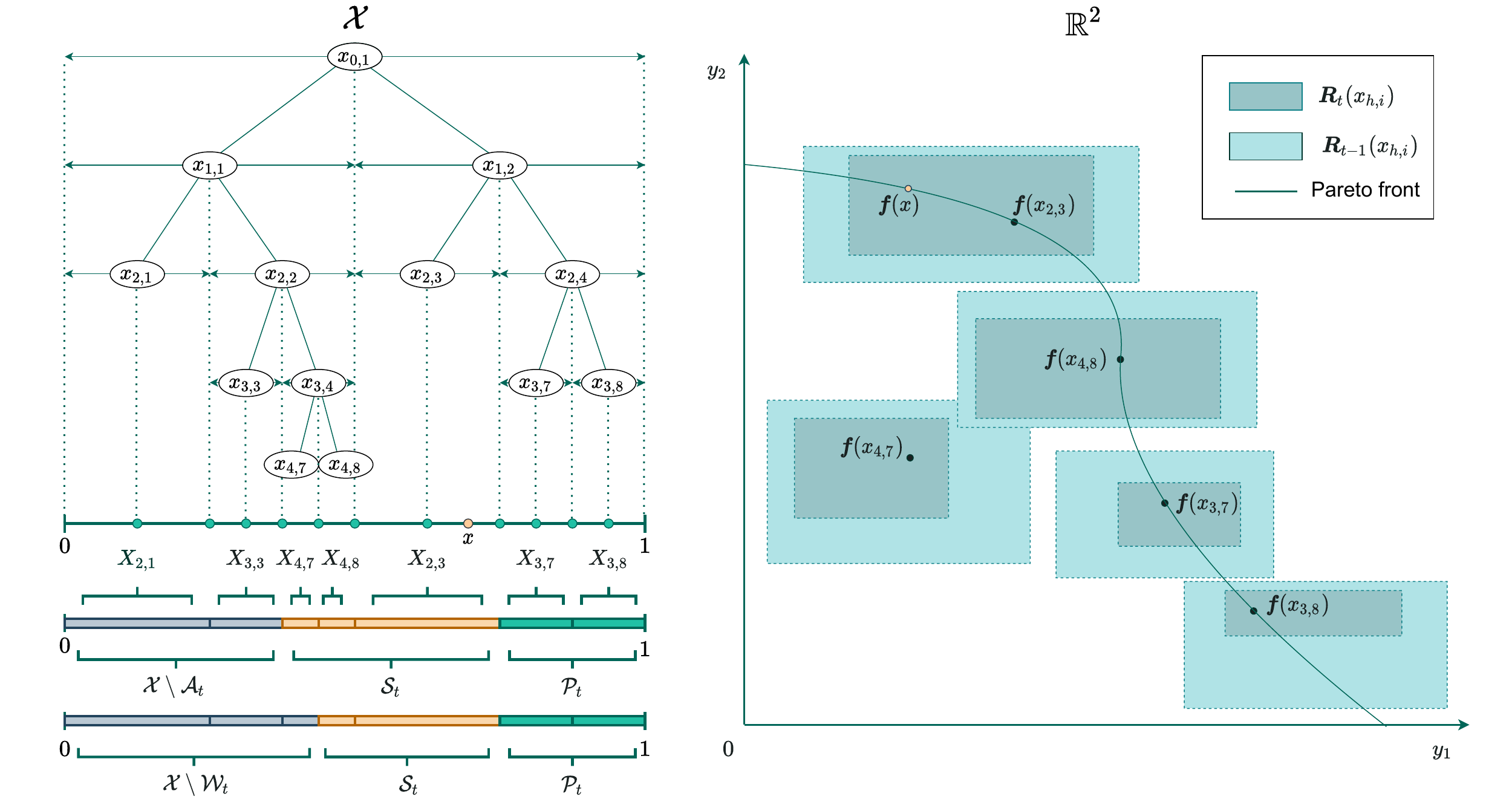}
	\caption{This is an illustration of the structural way of partitioning the design space. In this example we take $\XX = [0,1]$, $m=2$ and $\bs{\epsilon} = \bs{0}$. On the left, we can see the partition of $\XX$ at the beginning of round $t$. Note that $\mathcal{S}_t=\{  x_{4,7},x_{4,8},x_{2,3}\}$ and $\mathcal{P}_t = \{ x_{3,7},x_{3,8}\}$, while $x_{2,1}$ and $x_{3,3}$ have been discarded in some prior round. On the right, we see the corresponding confidence hyper-rectangles of these nodes. At the beginning of round $t$, prior to the modeling phase, the hyper-rectangle of node $x_{h,i}$ is $\bs{R}_{t-1}(x_{h,i})$. Note that, in the discarding phase, node $x_{4,8}$ will be discarded since $\max (\bs{R}_t(x_{4,8})) \preceq_{\bs{\epsilon}} \min (\bs{R}_t(x_{4,7}))$. Thus, $x_{4,7}$ will be removed from $\mathcal{S}_t$ by the end of the phase. Furthermore, note that more than one Pareto optimal designs take values in one hyper-rectangle. This is because the node of a region containing Pareto optimal points is not necessarily one of these points. For example, we have $x\in X_{2,3}$ and $\bs{f}(x) \in \bs{R}_t(x_{2,3})$.}
	\label{tree_fig}
\end{figure}

\begin{figure}[h]
	\centering
	\hspace*{-1cm}
	\includegraphics[scale = 0.28]{./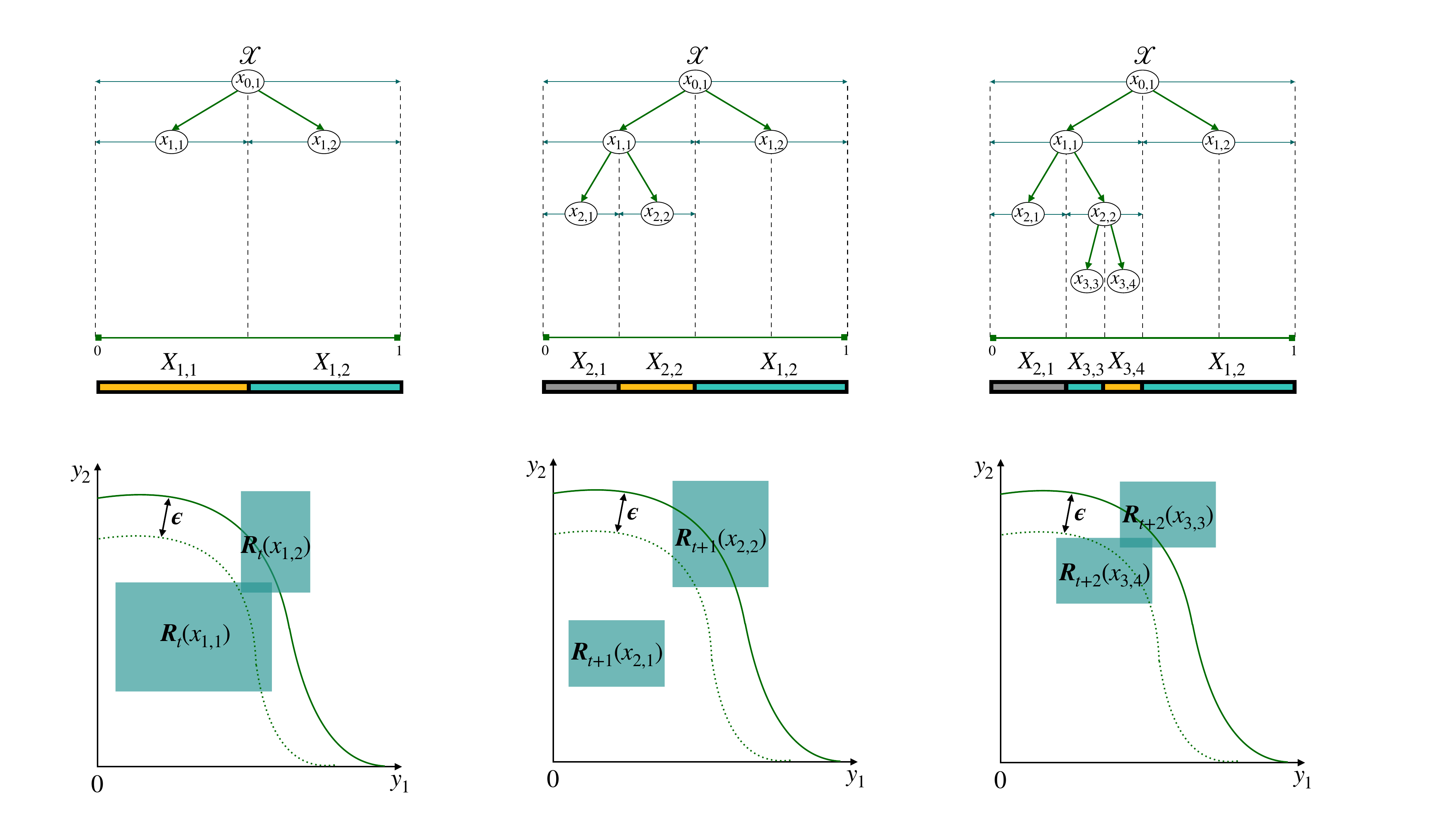}
	\caption{A depiction of three refining iterations of our algorithm together with the associated discarding and $\bs{\epsilon}$-covering phases. The first row reflects the structural changes in the design space. The second row reflects the changes in the objective space. The green color in the region bars denotes a predicted Pareto region, the orange color denotes an undecided region, while the gray color a discarded region.  At time $t$, we have two nodes $x_{1,1}$ and $x_{1,2}$ that partition $\mathcal{X}$. The region $X_{1,2}$ is already a predicted Pareto set, while $X_{1,1}$ is still undecided. This is also reflected in the first figure (from the left) on the second row. Here, we can visually see that $x_{1,1}$ is $\bs{\epsilon}$-dominated by $x_{1,2}$ under uncertainty. Since there are no more nodes, we proceed to assign $X_{1,2}$ to a predicted Pareto set. In the next iteration, we first refine $x_{1,1}$ further because the uncertainty level is already below the threshold. Then, upon examination, we discard region $X_{2,1}$. Again, the corresponding changes in the objective space are reflected in the second row. In the next iteration, we further refine $x_{2,2}$ and include one of its children regions to the predicted Pareto set. Note that we have not included evaluation steps for simplicity of presentation.}
	\label{tree_fig_2}
\end{figure}

\subsection{Termination}

 If ${\cal S}_t = \emptyset$ at the beginning of round $t$, then the algorithm terminates. We show in the appendix that at the latest, the algorithm terminates when $\overline{\omega}_t < \min_j \epsilon^j$. Upon termination, it returns a non-empty set $\hat{\PP}$ of decided nodes together with the corresponding $\bs{\epsilon}$-accurate Pareto set $\hat{P}=\bigcup_{x_{h,i}\in \hat{\PP}}X_{h,i}~,$ which is the union of the cells corresponding to the nodes in $\hat{\PP}$. The full procedure is given in Algorithm \ref{pseudocode}.

\begin{algorithm}
	\setstretch{1.10}
	\caption{Adaptive $\bs{\epsilon}$-PAL}\label{pseudocode}
	\begin{algorithmic}[1]
		\Require $\XX$, $(\XX_h)_{h\geq 0}$,  $(V_h)_{h\geq 0}$, $\bs{\epsilon}$, $\delta$, $(\beta_\tau)_{\tau \geq 1}$; GP prior $\bs{\mu}_0 = \bs{\mu}$, $\bs{k}_0 = \bs{k}$
		\State \textbf{Initialize:} $\PP_{1} = \emptyset$, $\SSS_{1} = \{ x_{0,1}\}$; $\bs{R}_0(x_{0,1}) = \mathbb{R}^m$, $t=1$, $\tau =0$.
		\While{$\SSS_t \neq \emptyset$}
		\State $\AAA_t=\PP_t\cup\SSS_t$; $\PP_{pess,t}=p_{pess,t}(\AAA_t)$.
		\For{$x_{h,i} \in \AAA_t$}
		\Comment Modeling
		\State Obtain $\bs{\mu}_{\tau} (x_{h,i})$ and $\bs{\sigma}_{\tau} (x_{h,i})$ by GP inference.
		\State $\bs{R}_t(x_{h,i}) = \bs{R}_{t-1}(x_{h,i}) \cap \bs{Q}_t(x_{h,i})$.
		\EndFor
		\For{$ x_{h,i} \in \SSS_t \setminus \PP_{pess,t}$}
		\Comment Discarding
		\If{$\exists x \in \PP_{pess,t}\colon \max (\bs{R}_t(x_{h,i})) \preceq_{\bs{\epsilon}} \min (\bs{R}_t(x))$}
		\State $\SSS_t = \SSS_t \setminus \{ x_{h,i} \}$.
		\EndIf
		\EndFor
		\State $\mathcal{W}_t=\SSS_t \cup \PP_t$.
		\For {$x_{h,i} \in \SSS_t$}
		\Comment $\bs{\epsilon}$-Covering 
		\If {$\nexists x\in \mathcal{W}_t : \min (\bs{R}_t(x_{h,i}))+ \bs{\epsilon } \preceq  \max (\bs{R}_t(x))$} \State
		$\SSS_t =  \SSS_t \setminus \{x_{h,i}\}$ ; $\PP_t =  \PP_t  \cup\{x_{h,i}\}$.
		\EndIf
		\EndFor
		\If{$\SSS_t \neq \emptyset$}
		\Comment Refining/Evaluating
		\State Select node $x_{h_t,i_t} = \textup{argmax}_{x_{h,i}\in \mathcal{W}_t} \omega_t(x_{h,i})$.
		\If{$\beta^{1/2}_{\tau} \norm{ \bs{\sigma}_{\tau}(x_{h_t,i_t})}_2 \leq \norm{ \bs{V}_{h_t}}_2$ AND $x_{h_t,i_t} \in \SSS_t$}
		\State $\SSS_t = \SSS_t \setminus \{ x_{h_t,i_t} \}$; $\SSS_t = \SSS_t \cup \{ x_{h_t +1,i} \colon N(i_t-1)+1\leq i \leq Ni_t\}$.
		\State $\bs{R}_t(x_{h_t+1,i}) = \bs{R}_{t}(x_{h_t,i_t})$ for each $i$ with $N(i_t-1)+1\leq i \leq Ni_t$.
		\ElsIf{$\beta^{1/2}_{\tau} \norm{ \bs{\sigma}_{\tau}(x_{h_t,i_t})}_2 \leq \norm{ \bs{V}_{h_t}}_2$  AND $x_{h_t,i_t} \in \PP_t$}
		\State $\PP_t = \PP_t \setminus \{ x_{h_t,i_t} \}$; $\PP_t = \PP_t \cup \{ x_{h_t +1,i} \colon N(i_t-1)+1\leq i \leq Ni_t\}$.
		\State $\bs{R}_t(x_{h_t+1,i}) = \bs{R}_{t}(x_{h_t,i_t})$ for each $i$ with $N(i_t-1)+1\leq i \leq Ni_t$.
		\Else 
		\State   Evaluate design $x_{t} = x_{h_t,i_t}$ and observe $\bs{y}_{\tau} = \bs{f}(x_{h_t,i_t}) + \bs{\kappa}_{\tau}$.
		\State $\tau = \tau + 1$.
		\EndIf
		\EndIf
		\State $\PP_{t+1} = \PP_{t}$; $\SSS_{t+1} =\SSS_{t}$.
		\State $t=t+1$.
		\EndWhile \\
		\Return $\hat{\PP} = \PP_{t}$ and $\hat{P} = \bigcup_{x_{h,i}\in\PP_{t}}X_{h,i}$.
	\end{algorithmic}
\end{algorithm}

\section{Sample complexity bounds}\label{scb}
We state the main result in this section. Its proof is composed of a sequence of lemmas which are given in the appendix. We first state the necessary assumptions (Assumption \ref{ass:cov}) on the metric and the kernel under which the result holds. Then, we provide a sketch of its proof and subsequently give an example of an $m$-output GP for which the maximum information gain is linear in $T$. 

\begin{assumption}\label{ass:cov}
	The class $\mathcal{K}$ of covariance functions to which we restrict our focus satisfies the following criteria for any $\bs{k} \in \mathcal{K}$: (1) For any $x,y \in \XX$ and $j\in [m]$, we have $l_j(x,y) \leq C_{\bs{k}} d(x,y)^\alpha $, for suitable $C_{\bs{k}} >0$ and $0<\alpha \leq 1$. Here, $l_j$ is the natural metric induced on $\XX$ by the $j$th component of the GP in Definition \ref{defn:moutputgp} as given in Remark \ref{rem:metric} with covariance function $k^{jj}$. (2) We assume bounded variance, that is, for any $x\in \XX$ and $j\in [m]$, we have $k^{jj}(x,x)\leq 1$. 
\end{assumption}

\begin{thm}\label{mainthm} Let $\bs{\epsilon} = [\epsilon^1,\ldots,\epsilon^m]^\T$ be given with $\epsilon=\min_{j\in[m]}\epsilon^j>0$. Let $\delta \in (0,1)$ and $\bar{D}>D_1$. For each $h\geq 0$, let
\begin{align*}
V_h = 4 C_{\bs{k}}(v_1\rho^h)^\alpha \left( \sqrt{C_2 + 2\log \of{\frac{2h^2\pi^2m }{6\delta}} + h\log N  + \of{\frac{-4D_1}{\alpha} \log \of{C_{\bs{k}}(v_1\rho^h )^\alpha }}^+   } + C_3\right)~,
\end{align*}
for some strictly positive constants $C_2$ and $C_3$, where $x^+:=\max\{0,x\}$ for $x\in\R$. Moreover, for each $\tau\in\mathbb{N}$, define $\beta_\tau = 2 \log (2m\pi^2 N^{h_{max} +1}  {{(\tau+1)}^2}/(3\delta ))$.
		When we run Adaptive $\bs{\epsilon}$-PAL with prior $GP(0,\bs{k})$ and noise $\mathcal{N}(0,\sigma^2 )$, the following holds with probability at least $1- \delta $:
		
		An $\bs{\epsilon}$-accurate Pareto set can be found with at most $T$ function evaluations, where $T$ is the smallest natural number satisfying 
		\begin{align*}
		\min & \left\{    K_1 \beta_T T^{\frac{-\alpha}{\bar{D}+2\alpha}} \left( \log T \right)^{\frac{-(\bar{D}+\alpha)}{\bar{D}+2\alpha}} + 
		K_2 T^{\frac{-\alpha}{\bar{D}+2\alpha}} \left( \log T \right)^{\frac{\alpha}{\bar{D}+2\alpha}}  ,   \sqrt{\frac{C\beta_{T}   \gamma_T }{T}} \right\} < \epsilon  ~,
		\end{align*}
		where $C$ and $K_1$ are constants that are defined in the appendix and do not depend on $T$, $K_2$ is logarithmic in $T$, and
		$\gamma_T$ is the maximum information gain which depends on the choice of $\bs{k}$.
\end{thm}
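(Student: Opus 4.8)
The plan is to condition everything on a single high-probability \emph{good event} $\mathcal{G}$ and then argue correctness and sample complexity separately. I would define $\mathcal{G}$ so that two families of bounds hold at once: (i) the GP confidence bounds $|f^j(x_{h,i})-\mu^j_\tau(x_{h,i})|\le \beta^{1/2}_\tau\sigma^j_\tau(x_{h,i})$ for every objective $j$, every node encountered, and every evaluation count $\tau$; and (ii) the variation bounds, namely that $V_h$ dominates $|f^j(x)-f^j(x_{h,i})|$ for all $x\in X_{h,i}$ and all $j$. Bound (i) follows from the Gaussian tail of the posterior plus a union bound over $\tau$, where the choice $\beta_\tau\in O(\log(\tau^2/\delta))$ makes the per-$\tau$ failure probability summable (via $\sum_\tau \tau^{-2}<\infty$), so the total failure probability is at most $\delta$. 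Bound (ii) follows from Assumption \ref{ass:cov}(1) (the H\"older bound $l_j(x,y)\le C_{\bs k}d(x,y)^\alpha$) fed into the sub-Gaussian tail of Remark \ref{rem:metric}, combined with the geometric radius bound $\mathrm{diam}(X_{h,i})\le 2v_1\rho^h$ from Definition \ref{wellbehaved}(3); this is exactly what forces $V_h\in\tilde{O}(\rho^{\alpha h})$. On $\mathcal{G}$, the lower and upper indices sandwich the objective values, so $\bs f(x)\in\bs R_t(x_{h,i})$ for every $x\in X_{h,i}$ and every round $t$.

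With the sandwich property in hand, I would verify correctness by checking that no decision rule errs on $\mathcal{G}$. A discarded cell is $\bs\epsilon$-dominated under uncertainty by a pessimistic Pareto node, so by the order relations of Definitions \ref{defn:order}--\ref{defn:inducedorder} and the containment $\bs f(x)\in\bs R_t(x)$, its designs are genuinely $\bs\epsilon$-dominated and may be dropped without losing any point needed to $\bs\epsilon$-cover $\ZZ_{\bs\epsilon}(\XX)$; a cell moved to $\PP_t$ passes the covering test of Definition \ref{defn:paretoaccurate}, which on $\mathcal G$ guarantees its designs $\bs\epsilon$-cover the corresponding portion of the front. Collecting these facts, the returned $\hat P=\bigcup_{x_{h,i}\in\hat\PP}X_{h,i}$ is an $\bs\epsilon$-accurate Pareto set in the sense of Definition \ref{defn:epsilonaccurate}.

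For sample complexity I would first show that termination is forced once $\overline\omega_t\le\epsilon$: when every cumulative hyper-rectangle has diameter at most $\epsilon=\min_j\epsilon^j$, each undecided node is either discarded or $\bs\epsilon$-covered in that round, emptying $\SSS_t$. I then bound the number of evaluations in two ways and take the minimum. For the information-type term, every evaluation round before termination has $\overline\omega_t>\epsilon$ and triggers the \emph{Evaluate} branch $\beta^{1/2}_{\tau_t}\norm{\bs\sigma_{\tau_t}}_2>\norm{\bs V_{h_t}}_2$, which together with $\omega_t\lesssim\beta^{1/2}_{\tau_t}\norm{\bs\sigma_{\tau_t}}_2+\norm{\bs V_{h_t}}_2$ yields $\beta^{1/2}_{\tau_t}\norm{\bs\sigma_{\tau_t}}_2\gtrsim\epsilon$; summing and invoking the standard multi-output variance-to-information-gain bound $\sum_{\tau\le T}\norm{\bs\sigma_\tau}_2^2\lesssim\gamma_T$ gives $T\epsilon^2\lesssim C\beta_T\gamma_T$, i.e.\ $\epsilon\le\sqrt{C\beta_T\gamma_T/T}$. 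For the metric-dimension term, I would run an adaptive-discretization counting argument: the number of distinct cells refined at depth $h$ is controlled by the covering number $N(\XX,v_1\rho^h,d)\lesssim\rho^{-h\bar D}$ (using $\bar D>D_1$), while each such cell is evaluated at most $\tilde{O}(\beta_T\rho^{-2\alpha h})$ times before the \emph{Refine} test $\beta^{1/2}\norm{\bs\sigma}_2\le\norm{\bs V_h}_2\sim\rho^{\alpha h}$ succeeds; summing over $h$ up to the depth $H$ with $\rho^{\alpha H}\sim\epsilon$ gives $T\lesssim\tilde{O}(\beta_T\epsilon^{-(\bar D/\alpha+2)})$, which inverts to the first bracketed expression.

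I expect the metric-dimension count (the second half of the last paragraph) to be the main obstacle. The delicate points are: localizing the \emph{global} posterior-variance / information-gain bound to a per-depth, per-cell evaluation count; justifying that only $\lesssim\rho^{-h\bar D}$ cells survive to be refined at each depth rather than all $N^h$ of them, which rests on the discarding phase pruning cells far from the front together with the covering-number definition of $D_1$ and the slack $\bar D>D_1$; and carefully tracking the $\beta_T$ and $\log T$ factors through the inversion $T\sim\beta_T\epsilon^{-(\bar D/\alpha+2)}\mapsto\epsilon$ to recover the exact exponent $-\alpha/(\bar D+2\alpha)$ on $T$ and the stated powers of $\log T$. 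A further subtlety is that, because the objectives are dependent, the information-gain step must use the full vector-valued covariance structure rather than a per-objective decomposition.
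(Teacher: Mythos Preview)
Your proposal is correct and tracks the paper's proof closely: the decomposition into two good events (confidence bounds and variation bounds), the sandwich $\bs f(x)\in\bs R_t(c_t(x))$, the termination criterion $\overline\omega_t\le\epsilon$, and the two parallel sample-complexity bounds are exactly how the paper proceeds. Your information-type argument (summing $\epsilon<\overline\omega_{t_\tau}\lesssim\beta^{1/2}_\tau\norm{\bs\sigma_\tau}_2$ and invoking the variance-to-information-gain inequality) is equivalent to the paper's Lemma~\ref{lem:infogap}/\ref{lem:infoboundd}, which instead averages and uses monotonicity of $\overline\omega_t$; both arrive at the same inequality.

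Two clarifications. First, the obstacle you flag for the metric-dimension count is not actually the obstacle. The bound ``at most $\lesssim\rho^{-h\bar D}$ cells at depth $h$'' does \emph{not} rely on discarding pruning cells far from the front; it is a purely geometric packing bound. Since each cell $X_{h,i}$ contains a ball $B(x_{h,i},v_2\rho^h)$ by Definition~\ref{wellbehaved}(3) and the cells are disjoint, the number of depth-$h$ cells is at most $M(\XX,2v_2\rho^h,d)\le N(\XX,v_2\rho^h,d)\le Q(v_2\rho^h)^{-\bar D}$ by Lemma~\ref{lemma:packcover}. This holds for \emph{all} $N^h$ cells, not just surviving ones, so your worry dissolves. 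The paper's metric-dimension step (Lemma~\ref{lem:metricbound}) then averages $V_{h_t}$ over evaluation rounds and splits at a cutoff depth $H$, rather than directly counting evaluations as you sketch, but your direct count (combined with Lemma~\ref{lem:hmax} to cap the depth) would give the same bound. Second, the variation bound (ii) needs a \emph{supremum} tail inequality over $x\in X_{h,i}$, not just the pointwise bound of Remark~\ref{rem:metric}; the paper imports this from the chaining-based Proposition~1 of \cite{shekhar2018gaussian} (Corollary~\ref{lem:spacebound} here), which is where the precise form of $V_h$ and its log factors originate.
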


\textit{Sketch of the proof of Theorem 1:} We divide the proof of Theorem 1 into three essential parts. First, we prove that the algorithm terminates in finite time, and examine the events that necessitate and the ones that follow termination. Here, it is important to note that if the largest uncertainty diameter in a given round $t$ is less than or equal to $\epsilon$, then the algorithm terminates. This introduces a way on how to proceed on upper bounding sample complexity, namely, we can upper bound the sum of these uncertainty diameters over all rounds. Then, by observing that i) the term $V_h$ decays to $0$ as $h$ grows indefinitely, which means that the algorithm refines up until a finite number of tree levels, and that ii) a node cannot be refined more than a finite number of times before expansion, we can conclude that the algorithm terminates in finite time. After this point we prove, using Hoeffding bounds, that the true function values live inside the uncertainty hyper-rectangles with high probability. We then proceed on first proving the dimension-type sample complexity bounds and then the information-type bounds. We use the aforementioned termination condition and upper bound the sum of the uncertainty diameters over all rounds. We use Cauchy-Schwarz inequality to manipulate the expression as we desire and then upper bound it in terms of information gain using an already established result in the paper. For the dimension-type bounds, we consider expressing the sum over rounds as a sum over levels $h$ of the tree of partitions and upper bound it using the notion of metric dimension. We take the minimum of the two bounds, thus achieving the bound stated in Theorem~1.

\begin{rem}
    Note that we minimize over two different bounds in Theorem \ref{mainthm}. The term that involves $\gamma_T$ corresponds to the information-type bound, while the other term corresponds to the metric dimension-type bound. Equivalently, we can express our information-type bound as 
    $\tilde{O}( g(\epsilon) )$ where $g(\epsilon) = \min \{ T \geq 1 : \sqrt{\gamma_{T}/T}  < \epsilon \}$ and our metric dimension-type bound as $\tilde{O}(\epsilon^{ - (\frac{\bar{D}}{\alpha}+2)})$ for any $\bar{D} > D_1$. For certain kernels, such as squared exponential and Mat\'{e}rn kernels, $\gamma_T$ can be upper bounded by a sublinear function of $T$ (see \cite{srinivas2012information}). Our information type-bound is of the same form as in \cite{zuluaga2016varepsilon}. When ${\cal X}$ is a finite subset of the Euclidean space, we have $D_1=0$, and thus, our metric-dimension type bound becomes near-$O(1/\epsilon^2)$, which is along the same lines with the almost optimal, gap-dependent near-$O(1/\text{gap}^2)$ bound for Pareto front identification in \cite{auer2016pareto}.
\end{rem}

In order to prove the bounds in Theorem \ref{mainthm}, even for infinite ${\cal X}$, we propose a novel way of defining the confidence hyper-rectangles and refining them. Since Adaptive $\bs{\epsilon}$-PAL discards, $\bs{\epsilon}$-covers and refines/evaluates in ways different than $\bs{\epsilon}$-PAL in \cite{zuluaga2016varepsilon}, we use different arguments in the proof to show when the algorithm converges and what it returns when it converges. In particular, for the information-type bound, we exploit the dependence structure between the objectives. Moreover, having two different bounds allows us to use the best of both, as it is known that for certain kernels, the metric dimension-type bound can be tighter than the information-type bound. That is the implication of our next result. The proof is mainly technical, thus we defer it to the Appendix. 

\begin{pro}\label{pro:example}
    There exists a multi-output GP $\boldsymbol{f}$, with covariance function satisfying Assumption~\ref{ass:cov} and a sequence of $T$ noisy observations made on $\boldsymbol{f}$, such that we have $I(\boldsymbol{y}_{[T]},\boldsymbol{f}_{[T]}) \geq \Omega (T)$.
\end{pro}

\section{Computational complexity analysis}\label{cca}
The most significant subroutines of Adaptive $\bs{\epsilon}$-PAL in terms of computational complexity are the modeling, discarding, and $\bs{\epsilon}$-covering phases. Below, we inspect the computational complexity of these three phases separately.   

\noindent
\textbf{Modeling.} In the modeling phase, mean and variance values of the GP surrogate are computed for every node point in the leaf set. This results in a complexity of $\mathcal{O}(\tau^{3}+ n\tau^{2})$, where $\tau$ is the number of evaluations and $n$ is the number of node points at a particular round. Note that the bound on $n$ depends on the maximum depth of the search tree. 

\noindent
\textbf{Discarding.} This phase can be further separated into two substeps. In the first substep, the pessimistic Pareto front is determined by choosing the leaf nodes whose lower confidence bounds are not dominated by any other point in the leaf set. In the second phase, the leaf points whose upper confidence bounds are $\bs{\epsilon}$-dominated by pessimistic Pareto set points are discarded. If done naively by comparing each point with all the other points, both of the phases can result in $\mathcal{O}(n^{2})$ complexity. However, there are efficient methods that achieve lower computational complexity. In our implementation, we adopt Algorithm 3.1 of \citet{Kung} to reduce computational complexity to $\mathcal{O}(n\log n)$ when $m= 2$ or $m=3$. For $m>3$, one can use Algorithm 4.1 of \citet{Kung} to achieve a $\mathcal{O}(n (\log n)^{m-2})$ complexity. 

\noindent
\textbf{$\bs{\epsilon}$-Covering.} In the $\bs{\epsilon}$-covering phase, the algorithm moves the nodes that are not dominated by any other node to $\hat{\mathcal{P}}$. Similar to the discarding phase, when implemented naively, the $\bs{\epsilon}$-covering phase results in  $\mathcal{O}(n^{2})$ complexity. An adaptation of \citet{Kung} algorithm can be implemented as in the case of discarding phase to achieve sub-quadratic complexity which results in $\mathcal{O}(n\log n)$ for $m= 2$ and $m=3$ and $\mathcal{O}(n (\log n)^{m-2} + n \log n)$ for $m>3$. 

\noindent
In general, we observed that the number of evaluations was much smaller than the number of nodes throughout a run. Therefore we can say that discarding and $\epsilon$-covering phases are the main bottlenecks in our implementation which scale sub-quadratically with the number of points. 

\section{Experiments} \label{sec:exp}
This section empirically evaluates Adaptive $\epsilon$-PAL, comparing its performance and efficiency against other multi-objective Bayesian optimization (MOBO) methods. We focus on validating the effectiveness of the adaptive discretization strategy and assessing the algorithm's ability to find an $\bs{\epsilon}$-accurate Pareto set sample-efficiently.

\subsection{Performance metrics}
We use a combination of three performance metrics: $\epsilon$-accuracy ratio, $\epsilon$-coverage ratio, and average mean-squared error (MSE). Since the objective functions tested are continuous, the true Pareto front contains infinitely many points. To compute the metrics, we approximate the true Pareto front by sampling $10,000$ points uniformly from the input space, evaluating the objective function at these points, and identifying the Pareto optimal designs among them. 

\textbf{\texorpdfstring{$\bs{\epsilon}$}{TEXT}-accuracy ratio.}
This metric measures the quality of the predicted Pareto set $\hat{P}$. It is computed as the ratio of predicted points $\bs{f}(x)$ (with $x \in \hat{P}$) that fall within the $\bs{\epsilon}$-Pareto front $\mathcal{Z}_{\bs{\epsilon}}(\mathcal{X})$ to the total number of points in $\hat{P}$. Operationally, we check if a predicted point is $\bs{\epsilon}$-dominated by some point on the approximated true Pareto front, or equivalently, if it is at most $2\bs{\epsilon}$ away (in the sense of $\preceq_{2\bs{\epsilon}}$) from the closest true Pareto point. While high $\bs\epsilon$-accuracy is desirable, it does not guarantee that the entire Pareto front is well-represented. We also need to assess how well the predicted set covers the true front, which is measured by the $\epsilon$-coverage metric.

\textbf{\texorpdfstring{$\bs{\epsilon}$}{TEXT}-coverage ratio. }
This metric measures how well the predicted set $\hat{P}$ covers the true Pareto front. It is computed as the ratio of true Pareto points (from the discretized approximation) that are $\bs{\epsilon}$-dominated by at least one point in $\bs{f}(\hat{P})$ to the total number of true Pareto points. Equivalently, we check if a true Pareto point is within $2\bs{\epsilon}$ (in the sense of $\preceq_{2\bs{\epsilon}}$) of the closest predicted point.

\textbf{Average mean-squared error (MSE). }
This metric complements $\bs\epsilon$-coverage by quantifying the average closeness of the predicted front to the true front. It is computed by averaging the squared Euclidean distance between each true Pareto point (in the objective space) and the closest predicted Pareto point in $\bs{f}(\hat{P})$. Ideally, we seek high values for both accuracy and coverage, and a low value for MSE.

\subsection{Multi-objective Bayesian optimization algorithms}
We compare Adaptive 
-PAL with four state-of-the-art MOBO methods: PESMO \citep{hernandez2016predictive}, ParEGO \citep{knowles2006parego}, USeMO \citep{belakaria2020uncertainty}, and qNEHVI \citep{daulton2021parallel}. The latter is a hypervolume-based method specifically designed to handle noisy observations by integrating over the posterior uncertainty of the true Pareto front. Since PESMO, ParEGO, USeMO, and qNEHVI require a pre-specified evaluation budget, while Adaptive 
-PAL terminates based on its confidence criteria, we set the budget for the competitors to be at least as large as the number of evaluations performed by Adaptive 
-PAL upon termination in our experiments. This ensures competitors have access to at least as much information. We used the implementations for PESMO and ParEGO from the Spearmint Bayesian optimization library\footnote{\url{https://github.com/HIPS/Spearmint/tree/PESM}}, for USeMO from the authors' open-source code\footnote{\url{https://github.com/belakaria/USeMO}}, and for qNEHVI from the BoTorch library.\footnote{\url{https://botorch.org/docs/multi_objective}}

For completeness, we also attempted to benchmark against the original $\epsilon$-PAL algorithm \citep{zuluaga2016varepsilon}. However, we encountered difficulties achieving termination when running both the authors' publicly available code and our own implementation based on the published pseudocode. As we were unable to obtain completed runs, we have excluded $\epsilon$-PAL from the presented results.

\subsection{Simulation setup \& results}
We first sampled $10$ distinct objective functions from the specified GP prior (detailed below). For each of these functions, we then ran each algorithm $5$ times, using different random seeds for each run to vary the observation noise and any internal randomness within the algorithms. 

\textbf{Setup.}
We simulate a problem with a $1$-dimensional input space $\mathcal{X}=[0,1]$ and a $2$-dimensional objective space ($m=2$). The objective function $\bs{f}$ is sampled from a GP with a zero mean function and independent squared exponential kernels for each objective: $k^1(x,x') = 0.5 \exp(-\frac{(x-x')^2}{2 \times 0.1^2})$ and $k^2(x,x') = 0.1 \exp(-\frac{(x-x')^2}{2 \times 0.06^2})$. Observation noise is Gaussian with $\sigma=0.01$ (variance $\sigma^2=10^{-4}$).

\noindent
\textbf{Adaptive $\bs\epsilon$-PAL.}
We run Adaptive $\bs{\epsilon}$-PAL with a target accuracy $\bs\epsilon = (0.05, 0.05)$ and confidence $\delta=0.05$. The tree parameters were set to $N=2$, $\rho = 1/2$, and $v_1 = v_2 = 1$. To investigate the impact of the maximum tree depth, we tested three settings for $h_\text{max}$. The first setting used the theoretically derived value $h_\text{max}=24$, calculated using Lemma \ref{lem:hmax} based on the target $\epsilon$. The other two settings used practical, reduced depths of $h_\text{max}=10$ and $h_\text{max}=9$ to evaluate the trade-off between computational cost and performance. When using these practical depth limits, refinement beyond $h_\text{max}$ was prevented by setting $V_h = 0$ for all $h \geq h_\text{max}$. In this specific experimental run, Adaptive $\bs{\epsilon}$-PAL terminated after approximately 50 evaluations for $h_\text{max}=24$, 40 evaluations for $h_\text{max}=10$, and 35 evaluations for $h_\text{max}=9$.

\noindent
\textbf{PESMO and ParEGO.}
We run PESMO and ParEGO for $100$ iterations (exceeding Adaptive $\bs{\epsilon}$-PAL's evaluations). We use the same $\delta = 0.05$ where applicable. The acquisition function optimization starts from the best point on a grid of size 1000, followed by L-BFGS optimization.

\noindent
\textbf{USeMO.}
We run USeMO for $100$ iterations. Expected Improvement (EI) is used as the acquisition function, aggregated using the Tchebycheff scalarization as recommended by the authors.

\noindent
\textbf{qNEHVI.} We run the Noisy Expected Hypervolume Improvement (qNEHVI) algorithm for 100 iterations. We use the sequential setting (batch size $q=1$) and the official implementation in BoTorch. The acquisition function is optimized using multi-start L-BFGS-B, consistent with the other baselines.

\noindent
\textbf{Results.}
We evaluate the predicted Pareto front points returned by each algorithm using the metrics defined above. \Cref{tab:epsilon_accuracy} shows the average of the $\bs\epsilon$-accuracy and $\bs\epsilon$-coverage ratios, calculated for different evaluation thresholds $\bs\epsilon'$. Using smaller $\bs\epsilon'$ values provides a stricter assessment.

\begin{table}[t!]
	\centering
	\caption{Average of the $\bs\epsilon$-accuracy and coverage ratios (as percentages, mean $\pm$ 99\%-confidence interval) for different evaluation thresholds $\bs\epsilon'=(\epsilon', \epsilon')$. Highest mean value in each row is bolded. Target accuracy for Adaptive $\bs{\epsilon}$-PAL was $\bs\epsilon=(0.05, 0.05)$.}
	\label{tab:epsilon_accuracy}
	\sisetup{separate-uncertainty=true, 
             table-parse-only=false, 
             detect-weight=true} 
         \setlength{\tabcolsep}{12pt}
	\resizebox{\columnwidth}{!}{
	\begin{tabular}{c S[table-format=2.0(1)] 
                   S[table-format=2.0(1)]
                   S[table-format=2.0(1)]
                   S[table-format=2.0(1)]
                   S[table-format=2.0(1)]
                   S[table-format=2.0(1)]
                   S[table-format=2.0(1)]}
	  \toprule
	  $\bs\epsilon'$ & {\thead{Adaptive $\bs\epsilon$-PAL \\ ($h_{\max}=24$)}}
	               & {\thead{Adaptive $\bs\epsilon$-PAL \\ ($h_{\max}=10$)}}
	               & {\thead{Adaptive $\bs\epsilon$-PAL \\ ($h_{\max}=9$)}}
	               & {PESMO}
	               & {ParEGO}
	               & {USeMO}
                       & {
qNEHVI} \\
	  \midrule
	  0.050  & 98 \pm 2           & \bfseries 99 \pm 1  & \bfseries 99 \pm 1 & 92 \pm 2 & 94 \pm 1 & 92 \pm 2 & \bfseries 99 \pm 1 \\
	  0.010  & 97 \pm 2           & \bfseries 98 \pm 2  & 97 \pm 1           & 90 \pm 3 & 94 \pm 1 & 74 \pm 4 & 96 \pm 1 \\
	  0.005  & \bfseries 97 \pm 1  & \bfseries 97 \pm 1  & 90 \pm 3           & 90 \pm 3 & 84 \pm 4 & 60 \pm 5 & 95 \pm 2 \\
	  0.001  & \bfseries 78 \pm 4  & 64 \pm 5            & 42 \pm 5           & 54 \pm 5 & 56 \pm 5 & 26 \pm 4 & 60 \pm 4 \\
	  \bottomrule
	\end{tabular}}
\end{table}

\begin{table}[t!]
\centering
\caption{Average mean-squared error (MSE) and total running time (hh:mm:ss) of the algorithms. Lowest MSE and runtime are bolded.}
\label{tab:merged_results}
\setlength{\tabcolsep}{8pt}
\resizebox{\columnwidth}{!}{%
\begin{tabular}{lccccccc}
    \toprule
    Metric & \thead{Adaptive $\bs\epsilon$-PAL \\ ($h_{\max}=24$)}
           & \thead{Adaptive $\bs\epsilon$-PAL \\ ($h_{\max}=10$)}
           & \thead{Adaptive $\bs\epsilon$-PAL \\ ($h_{\max}=9$)}
           & PESMO
           & ParEGO
           & USeMO
           & qNEHVI \\
    \midrule
    MSE ($\times 10^{-6}$) & \textbf{5} & 8 & 40 & 20 & 30 & 4500 & 10\\

    Runtime
        & 15:15:24
        & 00:01:00
        & \textbf{00:00:27}
        & 00:09:20
        & 00:07:30
        & 00:23:40
        & 00:00:40
        \\
    \bottomrule
\end{tabular}}
\end{table}

Adaptive $\bs\epsilon$-PAL with its theoretical parameters ($h_\text{max}=24$) achieves high performance (99\% average accuracy/coverage) when evaluated at its target $\bs\epsilon'=(0.05, 0.05)$. This aligns well with the theoretical expectation of producing a valid $\bs{\epsilon}$-accurate Pareto set with high probability. The slight deviation from 100\% can be attributed to the probabilistic nature of the guarantees ($\delta=0.05$) and the approximations involved in discretizing the true Pareto front for metric calculation.
Notably, reducing $h_\text{max}$ to $10$ or $9$ maintains excellent performance at the target $\epsilon'=0.05$. However, as expected, using a smaller $h_\text{max}$ affects the performance at stricter evaluation thresholds ($\epsilon' < 0.05$), as the algorithm has less resolution to precisely delineate the Pareto front.

Comparing with competitors, Adaptive $\bs\epsilon$-PAL (even with $h_\text{max}=10$) achieves superior combined accuracy and coverage across most evaluation thresholds. qNEHVI matches the best performance at $\epsilon'=0.05$ and significantly outperforms all other baselines at stricter thresholds. Nonetheless, Adaptive $\bs\epsilon$-PAL with its theoretical parameters ($h_\text{max}=24$) still achieves the highest accuracy at the most stringent thresholds of $\epsilon' \le 0.005$. USeMO shows good accuracy for $\epsilon'=0.05$ but degrades quickly, likely due to returning a sparse set of points. PESMO and ParEGO offer reasonable performance but are consistently outperformed by both qNEHVI and Adaptive $\bs\epsilon$-PAL.

\Cref{tab:merged_results} presents the average MSE and total running time. The MSE results corroborate the accuracy/coverage findings: Adaptive $\bs\epsilon$-PAL with $h_\text{max}=24$ achieves the lowest MSE, indicating its predicted front is closest to the true front on average. The qNEHVI and Adaptive $\bs\epsilon$-PAL ($h_\text{max}=10$) runs yield the next-best MSE values, significantly outperforming other competitors. USeMO has a notably high MSE, consistent with its poor coverage. Observing the running times reveals the practical benefit of tuning $h_\text{max}$. Reducing $h_\text{max}$ from $24$ to $10$ or $9$ decreased the runtime dramatically while preserving strong performance. qNEHVI is also highly efficient, with a runtime of just 40 seconds, faster than all baselines except the highly-tuned Adaptive $\bs\epsilon$-PAL ($h_\text{max}=9$). This demonstrates that Adaptive $\bs\epsilon$-PAL can be made exceptionally fast by selecting a practical tree depth limit, while still offering an accuracy-advantage over state-of-the-art methods.

Observing the running times reveals the practical benefit of tuning $h_\text{max}$. Reducing $h_\text{max}$ from $24$ to $10$ decreased the runtime dramatically (by over 900 times in this instance) while preserving performance at the target $\epsilon=0.05$. This demonstrates that Adaptive $\bs\epsilon$-PAL can be made computationally efficient by selecting a practical tree depth limit, while still offering a significant accuracy-advantage over the baseline methods tested.

\section{Conclusion}\label{sec:conc}

In this paper, we proposed a new algorithm for PAL in large design spaces. Our algorithm learns an $\bs{\epsilon}$-accurate Pareto set of designs in as few evaluations as possible by combining an adaptive discretization strategy with GP inference of the objective values. We proved both information-type and metric-dimension type bounds on the sample complexity of our algorithm. To the best of our knowledge, this is the first sample complexity result for PAL that (i) involves an information gain term, which captures the dependence between objectives and (ii) explains how sample complexity depends on the metric dimension of the design space.
%
%
%

\section*{Acknowledgments}

This work was supported by the Scientific and Technological Research Council of Türkiye (TÜBİTAK) under Grant 215E342. The work of C. Tekin was supported by the BAGEP Award of the Science Academy; by the Turkish Academy of Sciences Distinguished Young Scientist Award Program (TÜBA-GEBİP-2023); by TÜBİTAK 2024 Incentive Award.

\bibliography{main}
\bibliographystyle{tmlr}

\newpage
\appendix

\addcontentsline{toc}{section}{} 
\part{Appendix} 
\parttoc 

\section{Structure and dimensionality of the design space}\label{sec:well_behaved}

Before we initiate our theoretical analysis, we first provide some technical definitions to be used throughout. We begin by defining well-behaved metric spaces. Our results will hold under any well-behaved metric space. 

\begin{defn}\label{wellbehaved} (Well-behaved metric space \citep{bubeck2011x}) The compact metric space $( \mathcal{X}, \, d )$ is said to be \textbf{well-behaved} if there exists a sequence $(\mathcal{X}_h)_{h\geq 0}$ of subsets of $\mathcal{X}$ satisfying the following properties:
\begin{enumerate}
\item There exists $N \in \mathbb{N} $ such that for each $h\geq 0$, the set $\mathcal{X}_h$ has $N^h$ elements. We write $\XX_h = \{ x_{h,i}\colon 1\leq i \leq N^h \}$ and to each element $x_{h,i}$ is associated a cell $X_{h,i} = \{ x \in \mathcal{X}\colon \forall j \neq i\colon d(x , x_{h,i} ) \leq d(x , x_{h,j} ) \} $. 
\item  For all $h \geq 0$ and $1 \leq i \leq N^h$, we have $X_{h,i} = \bigcup^{Ni}_{j = N(i-1)+1} X_{h+1,j}$. The nodes $x_{h+1,j}$ for $N(i-1) +1 \leq j \leq Ni$ are called the children of $x_{h,i}$, which in turn is referred to as the parent of these nodes. We write $p(x_{h+1,j})=x_{h,i}$ for every $N(i-1)+1\leq j\leq Ni$. 
\item We assume that the cells have geometrically decaying radii, i.e., there exist $ 0 < \rho < 1$ and $0 < v_2 \leq 1 \leq v_1 $ such that we have $B(x_{h,i}, \, v_2\rho^h ) \subseteq X_{h,i} \subseteq B(x_{h,i}, \, v_1 \rho^h )$ for every $h\geq 0$. Note that we have $2v_2\rho^h \leq \textup{diam}(X_{h,i}) \leq 2 v_1 \rho^h$, where $\textup{diam}(X_{h,i}) = \textup{sup}_{x,y \in X_{h,i}} d(x,y)$.
\end{enumerate}
\end{defn}

The first property implies that, for every $h \geq 0$, the cells $X_{h,i},\, 1 \leq i \leq N^h$ partition $\mathcal{X}$. This can be observed trivially by \emph{reductio ad absurdum}. The second property intuitively means that, as $h$ grows, we get a more refined partition. The third property implies that the nodes $x_{h,i}$ are evenly spread out in the space. 

Additionally, we make use of a notion of dimensionality intrinsic to the design space, namely, the \textit{metric dimension}. 

\begin{defn}\label{defn:dimensions}(Packing, Covering and Metric Dimension \citep{shekhar2018gaussian}) Let $r\geq 0$. 
\begin{itemize}
\item A subset $\XX_1$ of $\XX$ is called an \textbf{$r$-packing}  of $\XX$ if for every $x,y\in \XX_1$ such that $x\neq y$, we have $d(x,y) > r$. The largest cardinality of  such a set is called the $r$-packing number of $\XX$ with respect to $d$, and is denoted by $M(\XX ,r,d)$.
\item A subset $\XX_2$ of $\XX$ is called an \textbf{$r$-covering}\footnote{Not to be confused with $\bs{\epsilon}$-covering in Definition \ref{defn:epsiloncovering}, where $\bs{\epsilon}\in\Real^m_+$. The meaning will be clear from the context.} of $\XX$ if for every $x\in\XX$, there exists $y\in \XX_2$ such that $d(x,y)\leq r$. The smallest cardinality of such a set is called the $r$-covering number of $\XX$ with respect to $d$, and is denoted by $N(\XX ,r,d)$. 
\item  The \textbf{metric dimension} $D_1$ of $(\XX ,d)$ is defined as
	$D_1 = \inf \{ a\geq 0\colon \exists C\geq 0, \forall r>0 : \log (N(\XX ,r,d)) \leq C - a\log (r)\}$.
\end{itemize}
\end{defn}

This dimension coincides with the usual dimension of the space when $\XX$ is a subspace of a finite-dimensional Euclidean space. We will upper bound the sample complexity of the algorithm using the metric dimension of $(\XX ,d)$.

\section{The proof of Theorem \ref{mainthm}}\label{sec:app}

\begin{table*}[h!]
	\centering
	\caption{\textit{Notation.}}
		\fontsize{9}{9}\selectfont
\scalebox{1}{
\begin{tabular}{ l l }
\toprule
 \textbf{Symbol} & \textbf{Description} \\ [0.7ex] 
 \midrule
  $\XX$ & The design space  \\ [0.7ex]
  $\bs{f}$ & The latent function drawn from an $m$-output GP \\ [0.7ex]
  $\bs{\epsilon}$ & Accuracy level given as input to the algorithm\\ [0.7ex]
  $\ZZ (\XX )$ & The Pareto front of $\XX$ \\ [0.7ex]
  $\ZZ_{\bs{\epsilon}} (\XX )$ & The $\bs{\epsilon}$-Pareto front of $\XX$ \\ [0.7ex]
  $\bs{y}_{\tau} = \bs{f}( \tilde{x}_{\tau} ) + \bs{\kappa}_{\tau}$ & $\tau$th noisy observation of $\bs{f}$  \\[0.7ex]
  $\bs{y}_{[\tau ]} $ & Vector that represents the first $\tau$ noisy observations  \\[0.7ex]
  $ x_{h,i}$ & The node with index $i$ in depth $h$ of the tree \\ [0.7ex] 
  $X_{h,i}$ & The cell associated with node $x_{h,i}$ \\ [0.7ex]
  $p(x_{h,i})$ & Parent of node $x_{h,i}$ \\ [0.7ex]
  $\bs{\mu }_\tau  (x_{h,i})$ & The posterior mean after $\tau$ evaluations of $x_{h,i}$ with $j$th component $\mu^j_\tau (x_{h,i})$   \\ [0.7ex]
  $\bs{\sigma}_\tau (x_{h,i})$ & The posterior variance  after $\tau$ evaluations of $x_{h,i}$ with $j$th component $\sigma^j_\tau (x_{h,i})$ \\ [0.7ex]
  $\beta_\tau$ & The confidence term   \\ [0.7ex]
  $\bs{k}$ & The covariance function of the GP  \\ [0.7ex]
  $d$  & The metric associated with the design space \\ [0.7ex]
  $l_j$ & The  metric on $\XX$ induced by the $j$th component of the GP \\ [0.7ex]
  $\PP_t \; \textup{and} \; P_t$ & The predicted $\bs{\epsilon}$-accurate Pareto sets of nodes and regions, respectively, at round $t$  \\ [0.7ex]
  $\SSS_t \; \textup{and} \; S_t$ & The undecided sets of nodes and regions, respectively, at round $t$ \\ [0.7ex]
  $\hat{\PP}$ & The $\bs{\epsilon}$-accurate Pareto set of nodes returned by Algorithm \ref{pseudocode} \\ [0.7ex]
  $\AAA_t$ & The union of sets $\SSS_t$ and $\PP_t$ at the beginning of round $t$ \\ [0.7ex]
  $\mathcal{W}_t$ & The union of sets $\SSS_t$ and $\PP_t$ at the end of the discarding phase of round $t$ \\ [0.7ex]
  $\bs{L}_t(x_{h,i}) \; \textup{and} \; \bs{U}_t(x_{h,i})$ & The lower and upper vector-valued indices of node $x_{h,i}$ at time $t$, \\ &  whose $j$th components are $L^j_t(x_{h,i})$ and $U^j_t(x_{h,i})$ \\ [0.7ex]
  $\overline{B}^j_t(x_{h,i}) \; \textup{and} \; \underline{B}^j_t(x_{h,i})$ & The auxiliary indices of the lower and upper index of node $x_{h,i}$ at time $t$ \\ [0.7ex]
  $ \bs{V}_h$ & The $m$-dimensional vector with components are equal to $V_h$, which appears in \\  & high probability bounds on the variation of $\bs{f}$ \\ [0.7ex]
  $\bs{Q}_t(x_{h,i})$ & The confidence hyper-rectangle associated with node $x_{h,i}$ at round $t$ \\ [0.7ex]
  $\bs{R}_t(x_{h,i})$ & The cumulative confidence hyper-rectangle associated with node $x_{h,i}$ at round $t$  \\ [0.7ex]
  $\omega_t (x_{h,i})$ & The diameter of the cumulative confidence hyper-rectangle of $x_{h,i}$ at round $t$ \\ [0.7ex]
  $\overline{\omega}_t$ & The maximum $\omega_t (x_{h,i})$ over all active nodes at round $t$  \\ [0.7ex]
  $D_1$ & The metric dimension of $\XX$ \\ 
  $\gamma_T$ & The maximum information gain in $T$ evaluations of $\bs{f}$  \\ [0.7ex]
   $\tau_s$ & The number of evaluations performed by the algorithm until termination  \\ [0.7ex]
    $t_s$ & The round in which the algorithm terminates   \\ [0.7ex]
	\bottomrule
\end{tabular}}
\end{table*}

The proof of Theorem \ref{mainthm} is composed of a series of sophisticated steps, and is divided into multiple subsections. First, we describe in Section \ref{sec:preresults} a set of preliminary results that will be utilized in obtaining dimension-type and information-type bounds on the sample complexity. Then, in Section \ref{sec:keyresults}, we prove a key result that provides a sufficient condition for the termination of Adaptive $\bs{\epsilon}$-PAL. We also show in this section that Adaptive $\bs{\epsilon}$-PAL returns an $\bs{\epsilon}$-accurate Pareto set when it terminates and bounds the maximum depth node that can be created by the algorithm before it terminates. In the proof, $\tau_s$ represents the number of evaluations performed by the algorithm until termination and $t_s$ represents the round (iteration) at which the algorithm terminates. Throughout the proof, for a given $\bs{\epsilon}$, we let $\epsilon = \min_j \epsilon^j$, and assume that $\bs{\epsilon}$ is such that $\epsilon>0$. Unless noted otherwise, all inequalities that involve random variables hold with probability one.

\subsection{Preliminary results}\label{sec:preresults}

We start by formulating the relationship between packing number, covering number, and metric dimension, which will help us obtain dimension-type bounds on the sample complexity. Recall that $(\XX,d)$ is a compact well-behaved metric space with metric dimension $D_1<+\infty$. 

\begin{lem}\label{lemma:packcover} For every constant $r>0$, we have 
	\begin{align*}
	M(\XX , 2r ,d) \leq N(\XX ,r,d).
	\end{align*}
	Moreover, for every $\bar{D}>D_1$, there exists $Q>0$ such that
	\begin{align*}
	M(\XX , 2r ,d) \leq N(\XX ,r,d)\leq Qr^{-\bar{D}}.
	\end{align*}
\end{lem}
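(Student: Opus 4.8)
The plan is to establish the two inequalities separately. The first, $M(\XX, 2r, d) \le N(\XX, r, d)$, is the classical packing–covering duality and holds for every $r > 0$; the second is a direct consequence of the definition of the metric dimension $D_1$ in Definition~\ref{defn:dimensions}.

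For the first inequality, fix $r > 0$ and let $P \subseteq \XX$ be a $2r$-packing of maximum cardinality $M(\XX, 2r, d)$, and let $C \subseteq \XX$ be an $r$-covering of minimum cardinality $N(\XX, r, d)$; both extremal sets exist because $\XX$ is compact, hence totally bounded, so these numbers are finite integers and the optimum is attained. Since $C$ is an $r$-covering, for each $p \in P$ we may choose $c(p) \in C$ with $d(p, c(p)) \le r$, which defines a map $c \colon P \to C$. I would then show this map is injective: if $p, p' \in P$ with $p \ne p'$ satisfied $c(p) = c(p')$, the triangle inequality would give $d(p, p') \le d(p, c(p)) + d(c(p'), p') \le r + r = 2r$, contradicting the defining property $d(p, p') > 2r$ of the $2r$-packing $P$. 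Injectivity of $c$ yields $M(\XX, 2r, d) = |P| \le |C| = N(\XX, r, d)$.

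For the second inequality I would unpack the definition of $D_1$. Since $\bar{D} > D_1$ and $D_1$ is an infimum, the standard property that any number strictly above an infimum exceeds some element of the underlying set furnishes an admissible exponent $a$ with $D_1 \le a < \bar{D}$ together with a constant $C \ge 0$ such that $\log N(\XX, r, d) \le C - a \log r$ for the relevant $r$; exponentiating gives $N(\XX, r, d) \le e^{C} r^{-a}$. To convert the exponent $a$ into $\bar{D}$, I would use that $a < \bar{D}$ implies $r^{-a} \le r^{-\bar{D}}$ whenever $r \le 1$, so that $N(\XX, r, d) \le e^{C} r^{-\bar{D}}$ with $Q := e^{C}$. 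Combining this with the first part yields the full chain $M(\XX, 2r, d) \le N(\XX, r, d) \le Q r^{-\bar{D}}$.

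The individual steps are routine, so I do not expect a genuine obstacle; the only places that need care are bookkeeping points. The first is the passage from the infimum definition of $D_1$ to a concrete admissible pair $(a, C)$ with $a < \bar{D}$. The second, and more delicate, is that the polynomial bound $N(\XX, r, d) \le Q r^{-\bar{D}}$ is meaningful only in the small-$r$ regime where $r^{-a} \le r^{-\bar{D}}$; indeed no single $Q$ can make it hold as $r \to \infty$, since $N(\XX, r, d)$ stays at least $1$ while $r^{-\bar{D}} \to 0$. This is harmless here because the bound is applied later precisely to the shrinking cell radii $\rho^h \to 0$, so I would state the estimate for the range of $r$ actually used (the finitely many large-$r$ levels contributing only an absorbable constant).
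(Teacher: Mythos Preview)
Your proposal is correct and follows essentially the same approach as the paper: a pigeon-hole/injectivity argument for $M(\XX,2r,d)\le N(\XX,r,d)$, and then the definition of $D_1$ for the polynomial bound. The paper's own proof is terser (it literally says the second claim is ``an immediate consequence of Definition~\ref{defn:dimensions}''), and it contains a harmless slip---writing $d(x_i,x_j)\le r$ where it should be $\le 2r$---which your triangle-inequality step handles correctly. Your observation that $N(\XX,r,d)\le Q r^{-\bar D}$ cannot hold for all $r>0$ is valid and more careful than the paper; as you note, the bound is only ever invoked at the shrinking radii $v_2\rho^h$, so restricting to $r\le \operatorname{diam}(\XX)$ (or any fixed upper bound) and absorbing the finitely many large-$r$ cases into $Q$ suffices.
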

\textit{Proof.} We argue by contradiction. Suppose that we have a $2r$-packing $\{ x_1,\ldots,x_M\}$ and an $r$-covering $\{ y_1,\ldots,y_M\}$ of $\XX$ such that $M \geq N+1$. Then, by the pigeon-hole principle, we must have that both $x_i$ and $x_j$ lie in the same ball $B(y_k,r)$, for some $i \neq j$ and some $k$, meaning that $d(x_i,x_j) \leq r$, which contradicts the definition of $r$-packing. Thus, the size of any $2r$-packing is less than or equal to the size of any $r$-covering and the first claim of the lemma follows. The second claim is an immediate consequence of Definition \ref{defn:dimensions} and the fact that $D_1<+\infty$.\qed

Our next result gives a relation between the metric dimension of $\XX$ with respect to $d$ and the one of $\XX$ with respect to the metrics induced by the GP, which holds under Assumption \ref{ass:cov}.

\begin{lem}\label{rem:metricalpha} Part 1 of Assumption \ref{ass:cov} implies that if $(\XX ,d)$ has a finite metric dimension $D_1$, then $(\XX ,l_j)$ has a metric dimension $D^j_1$ such that $D^j_1 \leq D_1/\alpha$. 
\end{lem}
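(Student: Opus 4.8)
The plan is to transfer coverings of $\XX$ from the metric $d$ to each induced pseudometric $l_j$ using the H\"older-type bound in Part 1 of Assumption \ref{ass:cov}, and then read off the dimension bound directly from the definition of metric dimension (Definition \ref{defn:dimensions}).

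First I would observe that any $r$-covering in $d$ is automatically a coarser covering in $l_j$. Concretely, if $\{y_1,\ldots,y_N\}$ is an $r$-covering of $(\XX,d)$, then for every $x\in\XX$ there is some $y_i$ with $d(x,y_i)\leq r$, whence Part 1 of Assumption \ref{ass:cov} gives $l_j(x,y_i)\leq C_{\bs{k}}\,d(x,y_i)^\alpha\leq C_{\bs{k}}r^\alpha$. Hence the same finite set is a $C_{\bs{k}}r^\alpha$-covering of $(\XX,l_j)$, and taking a covering of minimal cardinality on each side yields the key inequality
\begin{align*}
N(\XX,\,C_{\bs{k}}r^\alpha,\,l_j)\leq N(\XX,\,r,\,d)
\end{align*}
for every $r>0$.

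Next I would translate this into logarithms and perform a change of variables. Fix any $a>D_1$. By Definition \ref{defn:dimensions} there is a constant $C_a\geq 0$ with $\log N(\XX,r,d)\leq C_a-a\log r$ for all $r>0$. Setting $s=C_{\bs{k}}r^\alpha$, equivalently $r=(s/C_{\bs{k}})^{1/\alpha}$ (well-defined and monotone since $0<\alpha\leq 1$), and combining with the key inequality gives
\begin{align*}
\log N(\XX,s,l_j)\leq C_a-a\log\!\big((s/C_{\bs{k}})^{1/\alpha}\big)=\Big(C_a+\tfrac{a}{\alpha}\log C_{\bs{k}}\Big)-\tfrac{a}{\alpha}\log s.
\end{align*}
The bracketed term is a constant independent of $s$, so by Definition \ref{defn:dimensions} the exponent $a/\alpha$ is admissible for the infimum defining $D^j_1$, i.e. $D^j_1\leq a/\alpha$. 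Since this holds for every $a>D_1$, passing to the infimum over such $a$ yields $D^j_1\leq D_1/\alpha$, as claimed.

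I do not anticipate a serious obstacle here; the only points requiring care are (i) that $l_j$ is in general a pseudometric rather than a metric (it may fail to separate points), but covering numbers and the metric dimension use only symmetry and the triangle inequality, both of which $l_j$ inherits from its definition as an $L^2$-type quantity in Remark \ref{rem:metric}; and (ii) the bookkeeping of the infimum, where one must argue at the level of each admissible exponent $a>D_1$ and only pass to the infimum at the very end, rather than attempting to substitute $D_1$ itself into the covering bound.
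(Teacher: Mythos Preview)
Your proof is correct and follows essentially the same route as the paper's: both transfer $d$-coverings to $l_j$-coverings via the H\"older bound to obtain $N(\XX,s,l_j)\leq N(\XX,(s/C_{\bs{k}})^{1/\alpha},d)$, then apply the covering-number bound for an arbitrary exponent $a>D_1$ (the paper writes $\bar{D}_1$) and pass to the infimum. Your presentation is in fact a bit cleaner about the infimum step and about the pseudometric caveat, neither of which the paper addresses explicitly; the only cosmetic point is that Definition~\ref{defn:dimensions} requires the constant $C\geq 0$, which is harmless since one may replace $C_a+\tfrac{a}{\alpha}\log C_{\bs{k}}$ by its positive part.
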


\textit{Proof.} We will proceed in two steps.

We first claim that $N(\XX ,r,l_j) \leq N(\XX, (\frac{r}{C_{\bs{k}}})^{\frac{1}{\alpha}},d)$. In order to show this, let $\tilde{X}$ be an $ (\frac{r}{C_{\bs{k}}})^{\frac{1}{\alpha}}$-covering of $(\XX ,d)$. Then, it is an $r$-covering of $(\XX ,l_j)$. Indeed, let $x\in \XX$. Then, there exists $y\in \tilde{X}$, such that 
\begin{align*}
d(x,y) \leq \left( {\frac{r}{C_K}}\right)^{\frac{1}{\alpha}} ~.
\end{align*}
Thus, $l_j(x,y) \leq C_{\bs{k}} {d(x,y)}^{\alpha} \leq r$, which implies that $\tilde{X}$ is an $r$-covering of $(\XX ,l_j)$. By the definition of the covering number, we have 
\begin{align*}
N(\XX ,r,l_j) \leq |\tilde{X}|~,
\end{align*}
and since this holds for any $(\frac{r}{C_{\bs{k}}})^{\frac{1}{\alpha}}$-covering of $(\XX ,d)$, we conclude that
\[
N(\XX ,r,l_j) \leq N\left(\XX , \Big(\frac{r}{C_{\bs{k}}}\Big)^{\frac{1}{\alpha}},d\right).
\]

Next, we will show that $D^j_1 \leq D_1 /\alpha$. For this, it is enough to show that $ D^j_1 \leq \bar{D}_1 /\alpha$ for every $\bar{D}_1 > D_1$. By Lemma \ref{lemma:packcover}, there exists a constant $Q_1 \geq 1$ such that $N(\XX ,r,d) \leq Q_1 r^{-\bar{D}_1}$ for all $r>0$. In particular, for every $r>0$, we have
\begin{align*}
N(\XX ,r,l_j) \leq   N\left( \XX , \Big(\frac{r}{C_{\bs{k}}}\Big)^{\frac{1}{\alpha}}  ,d\right) \leq Q_1 \left( \frac{r}{C_K}\right)^{\frac{-\bar{D}_1}{\alpha}} \leq \left( \frac{Q_1}{(C_{\bs{k}})^{-\bar{D}_1/\alpha}} \right) r^{\bar{D}_1/\alpha}~.
\end{align*}
For all $r>0$, assuming $C_{\bs{k}} \geq 1$, we have 
\begin{align*}
\log N(\XX ,r,l_j)  \leq \log Q_1  +(\bar{D}_1/\alpha )\log (C_{\bs{k}})  -(\bar{D}_1/\alpha ) \log(r)~.
\end{align*} 
Therefore, there exists $Q_2 >0$, such that for all $r>0$, we have $\log  N(\XX ,r,l_j)  \leq  Q_2 - (\bar{D}_1/\alpha ) \log r$, with $Q_2 = \log Q_1  +(\bar{D}_1/\alpha )\log (C_{\bs{k}}) $. Hence, we conclude that $D^j_1 \leq D_1/\alpha$.\qed

Next, we state a proposition that relates the information gain with the posterior variance of the GP after each evaluation. This proposition will help us in obtaining information-type bounds on the sample complexity. Since its proof is lengthy, we defer it to Section \ref{proofprop1}.
\begin{pro}\label{gainbound}
	Let $T\in\mathbb{N}$ and $\tilde{x}_{[T]} = [\tilde{x}_1,\ldots,\tilde{x}_T]^\T$ be the finite sequence of designs that are evaluated. Consider the corresponding vector $\bs{f}_{[T]}=[\bs{f}(\tilde{x}_1)^\T,\ldots,\bs{f}(\tilde{x}_T)^\T]^\T$ of unobserved objective values and the vector $\bs{y}_{[T]} = [\bs{y}_1^\T,\ldots,\bs{y}_T^\T]^\T$ of noisy observations. Then, we have
	\begin{align*}
	 I(\bs{y}_{[T]};\bs{f}_{[T]})\geq \frac{1}{2m} \sum_{\tau =1}^{T}\sum_{j=1}^m \log (1+\sigma^{-2}(\sigma^{j}_{\tau -1}(\tilde{x}_{\tau}))^2).
	\end{align*}
\end{pro}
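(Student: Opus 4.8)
The plan is to first turn $I(\bs{y}_{[T]};\bs{f}_{[T]})$ into a closed form using the joint Gaussianity of the model, and then to reduce the resulting log-determinant to the per-objective sum on the right-hand side by a single matrix inequality applied round by round.

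First I would compute the two entropies in $I(\bs{y}_{[T]};\bs{f}_{[T]}) = H(\bs{y}_{[T]}) - H(\bs{y}_{[T]}\mid\bs{f}_{[T]})$. Since $\bs{y}_{[T]}=\bs{f}_{[T]}+\bs{\kappa}_{[T]}$ with the noise components i.i.d.\ $\mathcal{N}(0,\sigma^2)$ and independent of $\bs{f}_{[T]}$, the conditional term is just the noise entropy, $H(\bs{y}_{[T]}\mid\bs{f}_{[T]})=\frac{mT}{2}\log(2\pi e\sigma^2)$. For the unconditional term I would apply the chain rule $H(\bs{y}_{[T]})=\sum_{\tau=1}^T H(\bs{y}_\tau\mid\bs{y}_{[\tau-1]})$ and use the fact that, under the GP model, the predictive law of $\bs{y}_\tau$ given $\bs{y}_{[\tau-1]}$ is $\mathcal{N}(\bs{\mu}_{\tau-1}(\tilde{x}_\tau),\,\bs{k}_{\tau-1}(\tilde{x}_\tau,\tilde{x}_\tau)+\sigma^2\bs{I}_m)$. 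Crucially, the posterior covariance $\bs{k}_{\tau-1}(\tilde{x}_\tau,\tilde{x}_\tau)$ depends only on the queried locations and not on the observed values, so each conditional entropy equals $\frac12\log\big((2\pi e)^m\det(\bs{k}_{\tau-1}(\tilde{x}_\tau,\tilde{x}_\tau)+\sigma^2\bs{I}_m)\big)$ irrespective of the conditioning values; this is exactly what lets the argument tolerate an adaptively chosen $\tilde{x}_\tau$. Subtracting, the $(2\pi e)$ and $\sigma^{2m}$ factors cancel and I obtain
\[
I(\bs{y}_{[T]};\bs{f}_{[T]}) = \sum_{\tau=1}^T \frac{1}{2}\log\det\!\big(\bs{I}_m + \sigma^{-2}\bs{k}_{\tau-1}(\tilde{x}_\tau,\tilde{x}_\tau)\big).
\]

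The heart of the argument is then a per-round matrix inequality. Fix $\tau$ and write $\bs{M}=\bs{I}_m+\sigma^{-2}\bs{k}_{\tau-1}(\tilde{x}_\tau,\tilde{x}_\tau)$, a symmetric positive definite matrix whose $j$th diagonal entry is exactly $1+\sigma^{-2}(\sigma^j_{\tau-1}(\tilde{x}_\tau))^2$, since $(\sigma^j_{\tau-1}(x))^2=k^{jj}_{\tau-1}(x,x)$. Because $\bs{k}_{\tau-1}(\tilde{x}_\tau,\tilde{x}_\tau)$ is positive semidefinite, all eigenvalues $\lambda_1\geq\cdots\geq\lambda_m$ of $\bs{M}$ satisfy $\lambda_i\geq 1$, hence $\log\lambda_i\geq 0$ and therefore $\log\det\bs{M}=\sum_{i=1}^m\log\lambda_i\geq\log\lambda_{\max}$. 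On the other hand, each diagonal entry is a Rayleigh quotient, $M_{jj}=e_j^\T\bs{M}\,e_j\leq\lambda_{\max}$, so $\frac{1}{m}\sum_{j=1}^m\log M_{jj}\leq\log\lambda_{\max}$. Chaining these two bounds gives, for every $\tau$,
\[
\frac{1}{2}\log\det\bs{M}\;\geq\;\frac{1}{2}\log\lambda_{\max}\;\geq\;\frac{1}{2m}\sum_{j=1}^m\log\!\big(1+\sigma^{-2}(\sigma^j_{\tau-1}(\tilde{x}_\tau))^2\big).
\]

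Summing this over $\tau=1,\ldots,T$ and combining with the closed-form identity yields the claim. The routine parts are the Gaussian entropy computations; the step I expect to require the most care is the matrix inequality, since it runs \emph{opposite} to the familiar Hadamard bound $\det\bs{M}\leq\prod_j M_{jj}$ and is precisely where the factor $1/m$ is lost. I would therefore state explicitly the two facts that make it work: that $\bs{M}$ dominates the identity in the positive-semidefinite order forces $\log\lambda_i\geq 0$, and that the diagonal entries of a symmetric matrix never exceed its largest eigenvalue. A secondary point worth flagging is the adaptivity of the query sequence $\tilde{x}_{[T]}$: the chain-rule computation remains valid because, for GPs, posterior variances are data-independent, so conditioning on the (random) history that determines $\tilde{x}_\tau$ does not alter the per-round entropy.
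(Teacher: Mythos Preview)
Your proof is correct and mirrors the paper's argument essentially step for step: first the chain-rule/Gaussian-entropy computation yielding $I(\bs{y}_{[T]};\bs{f}_{[T]})=\sum_\tau\frac12\log\det(\bs{I}_m+\sigma^{-2}\bs{k}_{\tau-1}(\tilde{x}_\tau,\tilde{x}_\tau))$, then the matrix inequality $\det\bs{M}\geq\lambda_{\max}\geq M_{jj}$ (exactly the paper's Lemma~\ref{matrixlem}, proved the same way via $\lambda_i\geq 1$ and the Rayleigh-quotient bound), followed by $\max_j\geq\frac1m\sum_j$. Your additional remarks on adaptivity and the contrast with Hadamard's inequality are nice touches not in the paper, but the core approach is identical.
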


\subsection{Termination condition}\label{sec:keyresults}

In this section, we derive a sufficient condition under which Adaptive $\bs{\epsilon}$-PAL terminates. We also give an upper bound on the maximum depth node that can be created by Adaptive $\bs{\epsilon}$-PAL until it terminates. 
	
	In the lemma given below, we show that the algorithm terminates at latest when the diameter of the most uncertain node has fallen below $\min_j \epsilon^j$.

\begin{lem}\label{lem:termination} \textbf{(Termination condition for Adaptive $\bs{\epsilon}$-PAL)} Let $\epsilon = \min_j \epsilon^j > 0$, where $(\epsilon^1,\ldots,\epsilon^m)=\bs{\epsilon}$.  When running Adaptive $\bs{\epsilon}$-PAL, if $ \overline{\omega}_t < \epsilon$ holds at round $t$, then the algorithm terminates without further sampling.
\end{lem}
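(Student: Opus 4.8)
The plan is to reduce termination to emptying the undecided set $\SSS_t$ \emph{within} round $t$. The refining/evaluating block is entered only when $\SSS_t\neq\emptyset$ after the discarding and $\bs{\epsilon}$-covering phases, and the algorithm sets $\SSS_{t+1}=\SSS_t$ and halts at the start of round $t+1$ exactly when $\SSS_t=\emptyset$. So it suffices to show that, under $\overline{\omega}_t\leq\epsilon$, every node that survives the discarding phase is moved to $\PP_t$ during $\bs{\epsilon}$-covering; then $\SSS_t=\emptyset$ at the end of round $t$, the refining/evaluating block is skipped (no sampling occurs), and the while-loop guard fails at round $t+1$.

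The first thing I would establish is the geometric content of the diameter bound. For every active node $y\in\mathcal{W}_t$, the set $\bs{R}_t(y)$ is an axis-aligned box (a finite intersection of the hyper-rectangles $\bs{Q}_s(y)$), so $\min(\bs{R}_t(y))$ and $\max(\bs{R}_t(y))$ are its opposite corners and $\omega_t(y)=\norm{\max(\bs{R}_t(y))-\min(\bs{R}_t(y))}_2$. Since each coordinate of that difference is bounded by its $\ell_2$-norm and $\omega_t(y)\leq\overline{\omega}_t\leq\epsilon\leq\epsilon^j$ for every $j$, I obtain the key inequality
\begin{align*}
\max(\bs{R}_t(y)) \preceq \min(\bs{R}_t(y)) + \bs{\epsilon} \qquad \text{for all } y\in\mathcal{W}_t,
\end{align*}
i.e.\ each optimistic corner lies within $\bs{\epsilon}$ of its own pessimistic corner.

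Then I would argue by contradiction. Suppose $x_{h,i}$ survives discarding but is \emph{not} moved to $\PP_t$. Failure of the covering test produces some $x\in\mathcal{W}_t$ with $\min(\bs{R}_t(x_{h,i}))+\bs{\epsilon}\preceq\max(\bs{R}_t(x))$; chaining this with the key inequality for $x$ gives $\min(\bs{R}_t(x_{h,i}))\preceq\min(\bs{R}_t(x))$. Here $x\neq x_{h,i}$: if $x=x_{h,i}$, the covering-failure relation reads $\bs{\epsilon}\preceq\max(\bs{R}_t(x_{h,i}))-\min(\bs{R}_t(x_{h,i}))$ coordinatewise, forcing $\omega_t(x_{h,i})\geq\sqrt{m}\,\epsilon>\epsilon$ (this is where $m\geq 2$ enters), contradicting $\omega_t(x_{h,i})\leq\epsilon$. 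Hence the pessimistic corner of $x_{h,i}$ is weakly dominated by that of a \emph{different} active node, so $x_{h,i}\notin\PP_{pess,t}$, i.e.\ $x_{h,i}\in\SSS_t\setminus\PP_{pess,t}$. By finiteness of $\AAA_t$ there is a pessimistic Pareto optimal node $x^{*}\in\PP_{pess,t}$ with $\min(\bs{R}_t(x_{h,i}))\preceq\min(\bs{R}_t(x^{*}))$ (and $x^{*}\neq x_{h,i}$ since $x_{h,i}\notin\PP_{pess,t}$). Applying the key inequality to $x_{h,i}$ once more yields $\max(\bs{R}_t(x_{h,i}))\preceq\min(\bs{R}_t(x_{h,i}))+\bs{\epsilon}\preceq\min(\bs{R}_t(x^{*}))+\bs{\epsilon}$, that is, $\max(\bs{R}_t(x_{h,i}))\preceq_{\bs{\epsilon}}\min(\bs{R}_t(x^{*}))$. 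This is precisely the discarding criterion for $x_{h,i}$ against $x^{*}\in\PP_{pess,t}$, so $x_{h,i}$ would already have been discarded --- a contradiction. Therefore every surviving node is covered, $\SSS_t$ empties, and the algorithm terminates without further sampling.

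The main obstacle is bookkeeping rather than a single deep estimate: the discarding and covering tests reference different witness sets ($\PP_{pess,t}$ versus $\mathcal{W}_t$) and use the pessimistic and optimistic corners asymmetrically, so the delicate part is routing the covering-failure witness $x$ through the key inequality and then \emph{lifting} it to a genuine pessimistic Pareto node $x^{*}$. I would flag two fine points supporting that lift: the existence of $x^{*}\in\PP_{pess,t}$ uses finiteness of $\AAA_t$ together with the absence of exact ties among pessimistic corners (which holds with probability one for the GP posterior, consistent with the paper's standing convention), and the hypothesis $m\geq 2$ is used exactly to exclude the degenerate case $x=x_{h,i}$ via the strict bound $\sqrt{m}\,\epsilon>\epsilon$.
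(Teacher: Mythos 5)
Your proof is correct and follows essentially the same route as the paper's: the coordinatewise bound $\max(\bs{R}_t(y)) \preceq \min(\bs{R}_t(y)) + \bs{\epsilon}$ extracted from $\overline{\omega}_t \leq \epsilon$, followed by the contradiction that any node failing the $\bs{\epsilon}$-covering test would already satisfy the discarding criterion against some node of $\PP_{pess,t}$, so $\SSS_t$ must empty within round $t$. You are in fact slightly more careful than the paper at two points: you explicitly rule out the self-witness case $x = x_{h,i}$ in the covering failure (where the paper's appeal to Definition~\ref{defn:pessPareto} to conclude $x_{h,i}\notin\PP_{pess,t}$ would not directly apply, since that definition quantifies over \emph{other} nodes), and you flag the no-exact-ties genericity needed to lift the witness to a pessimistic Pareto node, which the paper asserts without comment.
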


\textit{Proof.} 
Since Adaptive $\bs{\epsilon}$-PAL updates ${\cal S}_t$ and ${\cal P}_t$ at the end of discarding and $\bs{\epsilon}$-covering phases, contents of these sets might change within round $t$. Thus, we let ${\cal S}_{t,0}$ (${\cal P}_{t,0}$), ${\cal S}_{t,1}$ (${\cal P}_{t,1}$) and ${\cal S}_{t,2}$ (${\cal P}_{t,2}$) represent the elements in ${\cal S}_t$ (${\cal P}_t$) at the end of modeling, discarding and covering phases, respectively. These sets are related in the following ways: ${\cal S}_{t,0} \supseteq {\cal S}_{t,1} \supseteq {\cal S}_{t,2}$, ${\cal P}_{t,0} = {\cal P}_{t,1} \subseteq {\cal P}_{t,2}$ and ${\cal S}_{t,1} \cup {\cal P}_{t,1} = {\cal S}_{t,2} \cup {\cal P}_{t,2}$.

Next, we state a claim from which termination immediately follows.
\begin{claim}\label{claim:1}
	If $\overline{\omega}_t < \epsilon$ holds at iteration $t$, then for all $x_{h,i} \in {\cal S}_{t,0} \setminus {\cal P}_{t,2}$, we have $x_{h,i} \notin {\cal S}_{t,1}$. 
\end{claim}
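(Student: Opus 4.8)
The plan is to argue Claim~\ref{claim:1} by contradiction: fix $x_{h,i}\in{\cal S}_{t,0}\setminus{\cal P}_{t,2}$ and suppose instead that $x_{h,i}\in{\cal S}_{t,1}$, i.e.\ that $x_{h,i}$ survives the discarding phase yet is never $\bs{\epsilon}$-covered. The first thing I would record is the ``shrinking'' consequence of the hypothesis $\overline{\omega}_t\le\epsilon$: for every node $x\in\mathcal{W}_t$ (where $\overline\omega_t$ is the maximal diameter) each coordinate of $\max(\bs{R}_t(x))-\min(\bs{R}_t(x))$ is at most $\norm{\max(\bs{R}_t(x))-\min(\bs{R}_t(x))}_2=\omega_t(x)\le\epsilon\le\epsilon^j$, so that $\max(\bs{R}_t(x))\preceq\min(\bs{R}_t(x))+\bs{\epsilon}$. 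This is the only place $\overline\omega_t\le\epsilon$ enters, and I will apply it to $x_{h,i}$ and to a witness node $x^\ast$, both of which lie in $\mathcal{W}_t$.

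Next I extract that witness from the failure of $\bs{\epsilon}$-covering. Since $x_{h,i}\in{\cal S}_{t,1}$ is examined during the covering phase but is not moved to ${\cal P}_t$ (because $x_{h,i}\notin{\cal P}_{t,2}$), the condition of Definition~\ref{defn:paretoaccurate} must fail for it, so there is some $x^\ast\in\mathcal{W}_t$ with $\min(\bs{R}_t(x_{h,i}))+\bs{\epsilon}\preceq\max(\bs{R}_t(x^\ast))$. I would first rule out $x^\ast=x_{h,i}$: that would force $\bs{\epsilon}\preceq\max(\bs{R}_t(x_{h,i}))-\min(\bs{R}_t(x_{h,i}))$ coordinatewise, hence $\omega_t(x_{h,i})\ge\sqrt{m}\,\epsilon>\epsilon$ (using $m\ge 2$ and $\epsilon>0$), contradicting $\overline\omega_t\le\epsilon$. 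With $x^\ast\neq x_{h,i}$, applying the shrinking inequality to $x^\ast$ gives $\min(\bs{R}_t(x_{h,i}))+\bs{\epsilon}\preceq\max(\bs{R}_t(x^\ast))\preceq\min(\bs{R}_t(x^\ast))+\bs{\epsilon}$, and therefore $\min(\bs{R}_t(x_{h,i}))\preceq\min(\bs{R}_t(x^\ast))$. Since $x^\ast\in\mathcal{W}_t\subseteq\AAA_t$ and $x^\ast\neq x_{h,i}$, this already yields $x_{h,i}\notin\PP_{pess,t}=p_{pess,t}(\AAA_t)$ straight from Definition~\ref{defn:pessPareto}.

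The remaining step is to convert $x^\ast$ into an honest discarding certificate drawn from $\PP_{pess,t}$, since the discarding rule only compares against pessimistic Pareto nodes. For this I would use the structural fact that $\PP_{pess,t}$ dominates all of $\AAA_t$: for every $z\in\AAA_t$ there exists $y\in\PP_{pess,t}$ with $\min(\bs{R}_t(z))\preceq\min(\bs{R}_t(y))$. Granting this for $z=x^\ast$ and chaining, $\min(\bs{R}_t(x_{h,i}))\preceq\min(\bs{R}_t(x^\ast))\preceq\min(\bs{R}_t(y))$; combining with the shrinking inequality for $x_{h,i}$ gives $\max(\bs{R}_t(x_{h,i}))\preceq\min(\bs{R}_t(x_{h,i}))+\bs{\epsilon}\preceq\min(\bs{R}_t(y))+\bs{\epsilon}$, that is, $\max(\bs{R}_t(x_{h,i}))\preceq_{\bs{\epsilon}}\min(\bs{R}_t(y))$. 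As $x_{h,i}\in{\cal S}_{t,0}\setminus\PP_{pess,t}$ and $y\in\PP_{pess,t}$ (so in particular $y\neq x_{h,i}$), this is precisely the condition under which the discarding phase deletes $x_{h,i}$, forcing $x_{h,i}\notin{\cal S}_{t,1}$ and contradicting the standing assumption; this proves the claim, and then taking any $x_{h,i}\in{\cal S}_{t,2}\subseteq{\cal S}_{t,1}\cap({\cal S}_{t,0}\setminus{\cal P}_{t,2})$ shows ${\cal S}_{t,2}=\emptyset$, from which termination follows.

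The step I expect to be the genuine obstacle is justifying the domination property of $\PP_{pess,t}$. The natural argument selects, among all $w$ with $\min(\bs{R}_t(x^\ast))\preceq\min(\bs{R}_t(w))$, one maximizing $\sum_j \min(\bs{R}_t(w))^j$ over the finite set $\AAA_t$, and checks that a $\prec$-maximal such $w$ belongs to $\PP_{pess,t}$. The delicate point is that ties are unavoidable here: when a node is refined its $N$ children inherit the parent's rectangle via $\bs{R}_t(x_{h_t+1,i})=\bs{R}_t(x_{h_t,i_t})$, so distinct nodes can share identical min-corners, and two nodes with $\min(\bs{R}_t(w))=\min(\bs{R}_t(w'))$ are each excluded from $\PP_{pess,t}$ under the $\preceq$-based Definition~\ref{defn:pessPareto}. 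I would therefore establish the domination property with explicit care about this equivalence (it suffices to produce a single $\PP_{pess,t}$-member weakly dominating $x^\ast$, which goes through as long as the $\prec$-maximal $\min$-value is attained by a singleton) or cite the corresponding lemma used elsewhere in the appendix; once that is in place, every remaining step is a routine manipulation of $\preceq$, $\preceq_{\bs{\epsilon}}$, and the single estimate $\overline\omega_t\le\epsilon$.
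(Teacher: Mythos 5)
Your proposal follows essentially the same route as the paper's proof: the coordinatewise bound $\max(\bs{R}_t(x))-\min(\bs{R}_t(x))\preceq\bs{\epsilon}$ from $\overline{\omega}_t\le\epsilon$, the witness $x^\ast$ extracted from the failure of the $\bs{\epsilon}$-covering test, the chain $\max(\bs{R}_t(x_{h,i}))\preceq\min(\bs{R}_t(x_{h,i}))+\bs{\epsilon}\preceq\max(\bs{R}_t(x^\ast))\preceq\min(\bs{R}_t(x^\ast))+\bs{\epsilon}$, the conclusion $x_{h,i}\notin\PP_{pess,t}$, and the upgrade of $x^\ast$ to a node of $\PP_{pess,t}$ that triggers the discarding rule. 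The one step you flag as the obstacle --- that every node of $\AAA_t$ is weakly dominated (in the $\min$-corner sense) by some member of $\PP_{pess,t}$ --- is exactly the step the paper also asserts without proof (``by Definition~\ref{defn:pessPareto} existence of $y^{**}\in\PP_{pess,t}$\ldots''), and your observation that ties in $\min(\bs{R}_t(\cdot))$ can empty $\PP_{pess,t}$ under the $\preceq$-based Definition~\ref{defn:pessPareto} (e.g.\ siblings inheriting a common parent rectangle) is a legitimate subtlety that the paper's argument glosses over; your extra care in ruling out $x^\ast=x_{h,i}$ likewise patches a detail the paper leaves implicit. In short, the proof is correct by the paper's own standard and takes the same approach, with two points handled more carefully than in the original.
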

If Claim \ref{claim:1} holds, then any $x_{h,i} \in {\cal S}_{t,0} \setminus {\cal P}_{t,2}$ must be discarded by the end of the discarding phase of Adaptive $\bs{\epsilon}$-PAL. This implies that any $x_{h,i} \in {\cal S}_{t,0}$ is either discarded or moved to ${\cal P}_{t,2}$ by the end of the $\bs{\epsilon}$-covering phase of round $t$, thereby completing the proof. 

Next, we prove Claim \ref{claim:1}. Note that $\overline{\omega}_t$ is an upper bound on $\norm{ \max (\bs{R}_t(x_{h,i})) - \min (\bs{R}_t (x_{h,i}))}_2$,  for all $x_{h,i} \in \SSS_{t,2} \cup \PP_{t,2}$. For each $x_{h,i} \in \SSS_{t,2} \cup \PP_{t,2}$ define 
\begin{align*}
\bs{\omega}_{h,i} = \max (\bs{R}_t(x_{h,i})) - \min (\bs{R}_t (x_{h,i}) ) ~.
\end{align*}
We have $\norm{ \bs{\omega}_{h,i} }_2 \leq \overline{\omega}_t < \epsilon$, which implies that  $\bs{\omega}_{h,i} \prec \bs{\epsilon }$ since $\omega^j_{h,i} \leq \sqrt{\sum^m_{j'=1} (\omega^{j'}_{h,i})^2} < \epsilon \leq \epsilon^j$ for all $j \in [m]$.

We will show that if $x_{h,i} \in {\cal S}_{t,0} \setminus {\cal P}_{t,2}$ holds, then $x_{h,i}$ cannot belong to $S_{t,1}$. To prove this, assume that $x_{h,i} \in S_{t,1}$. Since $x_{h,i} \notin {\cal P}_{t,2}$, then by the $\bs{\epsilon}$-covering rule of Adaptive $\bs{\epsilon}$-PAL specified in \reva{line 15} of Algorithm \ref{pseudocode}, there exists some $y^* \in  {\cal S}_{t,1} \cup {\cal P}_{t,1} $ for which
\begin{align}
\min (\bs{R}_t(x_{h,i})) + \bs{\epsilon} \preceq \max (\bs{R}_t(y^*)) ~. \label{eqn:termin1}
\end{align}
Since ${\cal S}_{t,1} \cup {\cal P}_{t,1}  = {\cal S}_{t,2} \cup {\cal P}_{t,2} $, we have, for all $y \in {\cal S}_{t,1} \cup {\cal P}_{t,1}$, that 
\begin{align}
\max (\bs{R}_t(y)) - \min (\bs{R}_t(y))  \prec \bs{\epsilon} ~. \label{eqn:termin2}
\end{align}
Combining \eqref{eqn:termin1} and \eqref{eqn:termin2}, we obtain
\begin{align}
\max (\bs{R}_t(x_{h,i})) =  \min (\bs{R}_t(x_{h,i})) + \bs{\omega}_{h,i} & \prec \min (\bs{R}_t(x_{h,i})) + \bs{\epsilon} \notag \\
& \preceq \max (\bs{R}_t(y^*)) \notag \\ 
& \prec \min (\bs{R}_t(y^*))  + \bs{\epsilon}~. \label{eqn:termin3}
\end{align}
An immediate consequence of \eqref{eqn:termin3} is that 
\begin{align}
\min (\bs{R}_t (x_{h,i})) \prec  \min (\bs{R}_t (y^*)) ~. \label{eqn:termin4} 
\end{align}
Since $y^* \in {\cal S}_{t,0} \cup {\cal P}_{t,0}$, by Definition \ref{defn:pessPareto} and \eqref{eqn:termin4}, we conclude that $x_{h,i} \notin {\cal P}_{pess,t}$. Thus, we must have $x_{h,i} \in S_{t,0} \setminus {\cal P}_{pess,t}$. Finally, we claim that $\max (\bs{R}_t(x_{h,i})) \preceq_{\bs{\epsilon}} \min (\bs{R}_t(y^{**}))$ for some $y^{**} \in {\cal P}_{pess,t}$. Since \eqref{eqn:termin3} implies that the condition $\max (\bs{R}_t(x_{h,i})) \preceq_{\bs{\epsilon}} \min (\bs{R}_t(y^*))$ is satisfied, then, if $y^* \in {\cal P}_{pess,t}$, we can simply set $y^{**} = y^{*}$. Else if $y^* \notin {\cal P}_{pess,t}$, then this will imply by Definition \ref{defn:pessPareto} existence of $y^{**} \in {\cal P}_{pess,t}$ such that $\min (\bs{R}_t (y^*)) \prec  \min (\bs{R}_t (y^{**}))$, which in turn together with \eqref{eqn:termin3} implies that $\max (\bs{R}_t(x_{h,i})) \prec_{\bs{\epsilon}} \min (\bs{R}_t(y^{**}))$. Then, by the discarding rule of Adaptive $\bs{\epsilon}$-PAL specified in \reva{line 9} of Algorithm \ref{pseudocode}, we must have $x_{h,i}$ discarded by the end of the discarding phase, which implies that $x_{h,i}$ cannot be in $S_{t,1}$. This proves that all $x_{h,i} \in {\cal S}_{t,0} \setminus {\cal P}_{t,2}$ must be discarded.\qed

We will prove information-type and dimension-type sample complexity bounds by making use of the termination condition given in Lemma \ref{lem:termination}.

\subsection{Guarantees on the termination of Adaptive \texorpdfstring{$\boldsymbol{\epsilon}$}{TEXT}-PAL}

We start by stating a bound on the maximum depth node that can be created by Adaptive $\bs{\epsilon}$-PAL before it terminates for a given $\bs{\epsilon}$.  This result will be used in defining ``good" events that hold with high probability under which $\bs{f}(x_{h,i})$ lies in the confidence hyper-rectangle of $x_{h,i}$ formed by Adaptive $\bs{\epsilon}$-PAL, for all possible nodes that can be created by the algorithm (see Section \ref{sec:techlemma}). Our bounds on sample complexity will hold given that these ``good" events happen.

\begin{lem}\label{lem:hmax} Given $\bs{\epsilon}$, there exists $h_{max} \in \mathbb{N}$ (dependent on $\bs{\epsilon}$) such that Adaptive $\bs{\epsilon}$-PAL stops refining at level $h_{max}$.
\end{lem}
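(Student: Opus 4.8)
The plan is to show that the \emph{Refine} rule of Adaptive $\bs{\epsilon}$-PAL can only fire at tree levels $h$ for which $V_h$ exceeds a fixed threshold proportional to $\epsilon$; since $V_h\in\tilde{O}(\rho^{\alpha h})\to 0$, only finitely many levels can satisfy this, which yields the desired $h_{max}$. The whole argument rests on combining a crude diameter estimate for the confidence hyper-rectangle with the termination condition of Lemma \ref{lem:termination}.

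First I would bound the diameter $\omega_t(x_{h,i})$ of the cumulative confidence hyper-rectangle of any node in terms of its posterior standard deviation and $V_h$. Since $\bs{R}_t(x_{h,i})\subseteq\bs{Q}_t(x_{h,i})$, we have $\omega_t(x_{h,i})\leq\norm{\bs{U}_t(x_{h,i})-\bs{L}_t(x_{h,i})}_2$. Using $\bar{B}^j_t(x_{h,i})\leq\mu^j_{\tau_t}(x_{h,i})+\beta^{1/2}_{\tau_t}\sigma^j_{\tau_t}(x_{h,i})$ and $\bunderline{B}^j_t(x_{h,i})\geq\mu^j_{\tau_t}(x_{h,i})-\beta^{1/2}_{\tau_t}\sigma^j_{\tau_t}(x_{h,i})$, together with $U^j_t(x_{h,i})-L^j_t(x_{h,i})=\bar{B}^j_t(x_{h,i})-\bunderline{B}^j_t(x_{h,i})+2V_h$, the triangle inequality in $\R^m$ gives $\omega_t(x_{h,i})\leq 2\beta^{1/2}_{\tau_t}\norm{\bs{\sigma}_{\tau_t}(x_{h,i})}_2+2\norm{\bs{V}_h}_2$.

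Next, suppose the algorithm refines a node $x_{h_t,i_t}$ at round $t$. By the \emph{Refine} rule, $\beta^{1/2}_{\tau_t}\norm{\bs{\sigma}_{\tau_t}(x_{h_t,i_t})}_2\leq\norm{\bs{V}_{h_t}}_2$, so the estimate above specializes to $\omega_t(x_{h_t,i_t})\leq 4\norm{\bs{V}_{h_t}}_2=4\sqrt{m}\,V_{h_t}$. Because $x_{h_t,i_t}$ is the maximizer of $\omega_t$ over $\mathcal{W}_t$, this quantity is exactly $\overline{\omega}_t$, so $\overline{\omega}_t\leq 4\sqrt{m}\,V_{h_t}$. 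On the other hand, the refining/evaluating phase is reached only when $\SSS_t\neq\emptyset$ after the $\bs{\epsilon}$-covering phase; by the contrapositive of Lemma \ref{lem:termination} (whose proof shows that $\SSS_t$ is emptied whenever $\overline{\omega}_t\leq\epsilon$), this forces $\overline{\omega}_t>\epsilon$. Combining the two, every refinement occurring at level $h_t$ must satisfy $\epsilon<4\sqrt{m}\,V_{h_t}$, i.e. $V_{h_t}>\epsilon/(4\sqrt{m})$.

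Finally, since $V_h\in\tilde{O}(\rho^{\alpha h})$ with $0<\rho<1$, we have $V_h\to 0$, so the set $\{h\colon V_h>\epsilon/(4\sqrt{m})\}$ is finite. Taking $h_{max}$ to be the smallest level such that $V_{h'}\leq\epsilon/(4\sqrt{m})$ for all $h'\geq h_{max}$, no node at level $h_{max}$ (or deeper) can ever be refined, which is precisely the claim. The step that needs the most care is the bookkeeping in the third paragraph: one must verify that the refining/evaluating guard is genuinely not reached once $\overline{\omega}_t\leq\epsilon$, so that the strict inequality $\overline{\omega}_t>\epsilon$ is legitimately available at \emph{any} round in which a refinement actually takes place (and that this holds whether the selected node lies in $\SSS_t$ or in $\PP_t$); the diameter estimate itself is routine.
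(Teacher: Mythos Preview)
Your proof is correct and follows essentially the same route as the paper: both bound $\overline{\omega}_t\leq 4\sqrt{m}\,V_{h_t}$ at a refining round (you via the triangle inequality, the paper via squaring and Cauchy--Schwarz, arriving at the identical constant), then invoke Lemma~\ref{lem:termination} to force $V_{h_t}>\epsilon/(4\sqrt{m})$ and conclude by $V_h\to 0$. The bookkeeping concern you flag is exactly the content of Lemma~\ref{lem:termination}'s proof, so nothing further is needed.
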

\textit{Proof.} By Lemma \ref{lem:termination}, at the latest, the algorithm terminates at round $t$ for which $\overline{\omega}_t <  \epsilon $, i.e., $\overline{\omega}^2_t < \epsilon^2$. By definition, at a refining round we have
\begin{align}
\overline{\omega}^2_t 
& \leq \max_{y,y'\in \bs{Q}_t(x_{h_t,i_t})}\norm{y-y'}_2^2 \notag\\
& = \norm{ \bs{U}_t(x_{h_t,i_t}) - \bs{L}_t(x_{h_t,i_t})}^2_2 \notag\\ 
& =  \sum_{j=1}^m \left( U^j_t(x_{h_t,i_t}) - L^j_t(x_{h_t,i_t})\right)^2\notag\\
& =   \sum^m_{j=1} \left( \overline{B}^j_t(x_{h_t,i_t}) - \underline{B}^j_t(x_{h_t,i_t}) +2V_{h_t}\right)^2 \notag \\ 
& \leq  \sum^m_{j=1} \left( 2 \beta^{1/2}_{\tau_t} \sigma^j_{\tau_t}(x_{h_t,i_t} ) + 2V_{h_t}\right)^2 \notag \\ 
& = \left( 4 \sum^m_{j=1} \beta_{\tau_t} (\sigma^j_{\tau_t}(x_{h_t,i_t}))^2  + 8 \sum^m_{j=1} V_{h_t} \beta^{1/2}_{\tau_t}  \sigma^j_{\tau_t}(x_{h_t,i_t}) + 4\sum^m_{j=1} (V_{h_t})^2 \right)  \notag \\ 
& \leq  \left( 4 \sum^m_{j=1}  (V_{h_t})^2  + 8\left( \sum^m_{j=1}  (V_{h_t})^2\right)^{1/2} \left( \sum^m_{j=1} (V_{h_t})^2 \right)^{1/2}+ 4\sum^m_{j=1} (V_{h_t})^2 \right) \label{eq554}\\
& = 16mV^2_{h_t}~, \notag
\end{align}
where \eqref{eq554} follows from the fact that we refine at round $t$ and from the Cauchy-Schwarz inequality. Thus, if at round $t$, we have 
\begin{align*}
16mV^2_{h_t} < \epsilon^2   ~,
\end{align*}
then we guarantee termination of the algorithm. Recall that we have defined
\begin{align*}
V_h = 4 C_{\bs{k}}(v_1\rho^h)^\alpha \left( \sqrt{C_2 + 2\log (2h^2\pi^2m /6\delta) + h\log N  + \max  \{ 0, - 4 (D_1/\alpha ) \log (C_{\bs{k}}(v_1\rho^h )^\alpha ) \}  } + C_3\right)~,
\end{align*}
for positive constants $C_2$ and $C_3$ defined in Corollary \ref{lem:spacebound}. Obviously, $V_h$ decays to $0$ exponentially in $h$. Thus, by letting $h_{max} = h_{max}(\epsilon )$ to be the smallest $h\geq 0$, for which $16mV^2_{h_{max}} < \epsilon^2$ holds, it is observed that the algorithm stops refining at level $h_{max}$. \qed

Our next result gives an upper bound on the maximum number of times a node can be evaluated before it is expanded.

\begin{lem}\label{lem:nodebound}
 Let $h\geq 0$ and $i\in[N^h]$. Let $\tau_s $ be the number of evaluations performed by Adaptive $\bs{\epsilon}$-PAL until termination. Any active node $x_{h,i}$  may be evaluated no more than $q_h$ times before it is expanded, where 
	\begin{align*}
	q_h = \frac{\sigma^2 \beta_{\tau_s}}{V^2_h} ~.
	\end{align*}
	Furthermore, we have
	\begin{align*}
	q_h \leq \frac{\sigma^2 \beta_{\tau_s}}{g(v_1\rho^h)^2C_3}~.
	\end{align*} 
\end{lem}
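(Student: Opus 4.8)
The plan is to exploit the refine-or-evaluate dichotomy in the refining/evaluating phase. A leaf $x_{h,i}$ (so that $h_t = h$ when it is selected) is evaluated at a round $t$ rather than expanded precisely when the evaluation condition $\beta_{\tau_t}^{1/2}\norm{\bs{\sigma}_{\tau_t}(x_{h,i})}_2 > \norm{\bs{V}_{h}}_2 = \sqrt{m}\,V_h$ holds, and it is expanded as soon as this inequality fails. Since a node is removed from the active set upon expansion and never reconsidered, counting its evaluations reduces to counting how many times this ``large uncertainty'' condition can hold before the posterior standard deviation at $x_{h,i}$ drops below the threshold $\sqrt{m}\,V_h$. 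Thus the crux is a quantitative estimate of how fast repeated self-evaluations of a single design drive down its GP posterior variance.

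First I would establish the variance-shrinkage estimate. Fix $j\in[m]$ and suppose $x_{h,i}$ has already been evaluated $n$ times. Conditioning a single-output GP on $n$ noisy observations of the \emph{same} point adds precision, so the posterior variance computed from these self-observations alone equals $\sigma^2 k^{jj}(x_{h,i},x_{h,i})/(\sigma^2 + n\,k^{jj}(x_{h,i},x_{h,i}))$. Because the GP posterior variance is monotonically non-increasing as further observations of other designs are incorporated, the true posterior variance after any global history that contains these $n$ self-evaluations is bounded above by this quantity. Invoking $k^{jj}(x_{h,i},x_{h,i})\leq 1$ from Assumption \ref{ass:cov} and $\sigma^2 + n\,k^{jj}\geq n\,k^{jj}$ then gives $(\sigma^j_{\tau}(x_{h,i}))^2 \leq \sigma^2/n$, and summing over $j$ yields $\norm{\bs{\sigma}_{\tau}(x_{h,i})}_2^2 \leq m\sigma^2/n$.

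Next I would combine this with the evaluation condition. If $x_{h,i}$ is evaluated for the $(n+1)$-th time, then at that round it has exactly $n$ prior self-evaluations and the (squared) evaluation condition must hold, so $m V_h^2 < \beta_{\tau_t}\norm{\bs{\sigma}_{\tau_t}(x_{h,i})}_2^2 \leq \beta_{\tau_t}\,m\sigma^2/n \leq \beta_{\tau_s}\,m\sigma^2/n$, where the last inequality uses $\tau_t\leq\tau_s$ together with the monotonicity of $\tau\mapsto\beta_\tau$. Rearranging gives $n < \sigma^2\beta_{\tau_s}/V_h^2 = q_h$, so no further evaluation can occur once $n$ reaches $q_h$; hence $x_{h,i}$ is evaluated at most $q_h$ times before it is expanded, which is the first bound.

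Finally, for the second bound I would substitute the closed form of $V_h$ from Lemma \ref{lem:hmax}. Discarding the nonnegative square-root term there leaves the lower bound $V_h \geq 4C_{\bs{k}}C_3(v_1\rho^h)^\alpha$, and plugging this into $q_h = \sigma^2\beta_{\tau_s}/V_h^2$ produces an upper bound of the stated form, with the constant $g$ and the remaining exponents absorbing the kernel- and discretization-dependent factors. The main obstacle is the second paragraph: one must argue cleanly that intervening evaluations of \emph{other} designs can only decrease (never increase) the node's posterior variance, so that the crude self-evaluation-only bound is legitimate, and one must keep track of the fact that $\beta_\tau$ is indexed by the \emph{global} evaluation count rather than by the node's own count, which is exactly what forces the conservative replacement $\beta_{\tau_t}\leq\beta_{\tau_s}$.
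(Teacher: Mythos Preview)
Your proposal is correct and follows essentially the same approach as the paper: both arguments establish the key inequality $(\sigma^j_{\tau_t}(x_{h,i}))^2 \leq \sigma^2/n_t(x_{h,i})$ via the principle that conditioning on additional Gaussian observations can only reduce posterior variance, then combine it with the evaluation condition $\beta_{\tau_t}\norm{\bs{\sigma}_{\tau_t}(x_{h,i})}_2^2 > mV_h^2$ and the monotonicity of $\beta_\tau$ exactly as you do. The only cosmetic difference is that the paper phrases the variance-monotonicity step in the language of conditional entropy---writing $H(f^j(x_{h,i})\mid y^j_{x_{h,i}})\geq H(f^j(x_{h,i})\mid \bs{y}_{[\tau_t]})$ and then translating back to variances---rather than computing the self-evaluation posterior variance $\sigma^2 k^{jj}/(\sigma^2+nk^{jj})$ directly as you do; for Gaussians these are equivalent, so your concern about justifying that ``intervening evaluations of other designs can only decrease the node's posterior variance'' is handled by the paper in exactly the same spirit.
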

\textit{Proof.} The proof is similar to the proof of \cite[Lemma~1]{shekhar2018gaussian}. Fix $t\geq 1$. By definition, the vector $\bs{y}_{[\tau_t]}$ denotes the evaluations made prior to round $t$. Let $\bs{y}_{x_{h,i}}$ be the vector that represents the subset of evaluations in $\bs{y}_{[\tau_t]}$ made at node $x_{h,i}$ prior to round $t$, and let $n_t(x_{h,i})$ represent the number of evaluations in $\bs{y}_{x_{h,i}}$. Similarly, let $\overline{\bs{y}}_{x_{h,i}}$ be the vector that represents the subset of evaluations in $\bs{y}_{[\tau_t]}$ made at nodes other than node $x_{h,i}$ prior to round $t$. For any $j\in [m]$, by non-negativity of the information gain, we have 
\begin{align*}
I(f^j(x_{h,i}) ; \bs{y}_{x_{h,i}} |\overline{\bs{y}}_{x_{h,i}}) = H(f^j(x_{h,i})| \bs{y}_{x_{h,i}} ) - H(f^j(x_{h,i})| \bs{y}_{x_{h,i}} ,\overline{\bs{y}}_{x_{h,i}}) \geq 0.
\end{align*}
Furthermore, let $y^j_{x_{h,i}}$ be the vector of evaluations that corresponds to the $j$th objective at node $x_{h,i}$, and $\overline{y}^j_{x_{h,i}}$ be the vector of evaluations that corresponds to the $j$th objective at nodes other than node $x_{h,i}$ prior to round $t$. Since conditioning on more variables reduces the entropy, we have $H(f^j(x_{h,i})| \bs{y}_{x_{h,i}} ) \leq H(f^j(x_{h,i})| y^j_{x_{h,i}})$, and thus, 
\begin{align*}
H(f^j(x_{h,i})| y^j_{x_{h,i}}) - H(f^j(x_{h,i})| \bs{y}_{x_{h,i}} ,\overline{\bs{y}}_{x_{h,i}}) \geq  0 ~.
\end{align*}
Using the definition of conditional entropy for Gaussian random variables, after a short algebraic calculation, we get 
\begin{align*}
\frac{1}{2}\log \left( \bigg|  \frac{2\pi e}{n_t(x_{h,i})/\sigma^2+ (k^{jj}(x_{h,i},x_{h,i}))^{-1}}\bigg| \right) - \frac{1}{2}\log(|2\pi e  ( \sigma^j_{\tau_t}(x_{h,i}) )^2 |) \geq 0 ~.
\end{align*}
Since $\bs{k}(x_{h,i},x_{h,i})$ is positive definite, we have $k^{jj}(x_{h,i},x_{h,i}) >0$, and as a result we obtain the following.
\begin{align}
({\sigma^j_{\tau_t}}(x_{h,i}))^{-2} \geq {\frac{{n_t(x_{h,i})}}{\sigma^2}+ \frac{1}{k^{jj}(x_{h,i},x_{h,i})}}\geq  {\frac{{n_t(x_{h,i})}}{\sigma^2}} \notag 
\end{align}
Thus, we have that $({\sigma^j_{\tau_t}}(x_{h,i}))^2 \leq \sigma^2 / n_t(x_{h,i})$. If the algorithm has not yet refined, then it means that we have
\begin{align*}
mV^2_h < \beta_{\tau_t} \sum^m_{j=1} (\sigma^j_{\tau_t})^2 \leq \beta_{\tau_s} m \sigma^2 / n_t(x_{h,i}) ~.
\end{align*}
Therefore, we obtain 
\begin{align*}
n_t(x_{h,i}) \leq q_h = \frac{\sigma^2 \beta_{\tau_s}}{V^2_h}~.
\end{align*}

The second statement of the result follows from the trivial observation that 
\begin{align*}
\frac{\sigma^2 \beta_{\tau_s}}{V^2_h} \leq  \frac{\sigma^2 \beta_{\tau_s}}{g(v_1\rho^h)^2C_3}~.
\end{align*}\qed

Next, we show that Adaptive $\bs{\epsilon}$-PAL terminates in finite time.

\begin{pro} Given $\bs{\epsilon}$ such that $\min_{j} \epsilon^j >0$, Adaptive $\bs{\epsilon}$-PAL terminates in finite time.
\end{pro}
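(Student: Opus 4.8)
The plan is to bound separately the number of \emph{refining} rounds and the number of \emph{evaluating} rounds, and then to observe that every non-terminal round of Adaptive $\bs{\epsilon}$-PAL performs exactly one of these two actions; finiteness of both counts then yields finite termination.

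First I would bound the refining rounds. By Lemma \ref{lem:hmax}, the algorithm never refines past level $h_{max}$, so every node it ever creates has depth at most $h_{max}$. Since level $h$ contains at most $N^h$ nodes, the total number of distinct nodes that can appear in the tree is at most $\sum_{h=0}^{h_{max}} N^h<\infty$. Each refining round expands one previously unexpanded node into its $N$ children, so the number of refining rounds is at most the number of internal nodes of this finite tree, hence finite.

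Next I would bound the total number of evaluations. The subtlety — and the crux of the argument — is that the per-node bound $q_h=\sigma^2\beta_{\tau_s}/V_h^2$ from Lemma \ref{lem:nodebound} refers to $\tau_s$, the very quantity we wish to control, so it cannot be applied naively. To break this apparent circularity I would instead use the round-wise estimate that the proof of Lemma \ref{lem:nodebound} actually establishes: at any round $t$, every node $x_{h,i}$ created so far has been evaluated at most $\sigma^2\beta_{\tau_t}/V_h^2$ times, where $\tau_t$ is the number of evaluations before round $t$ (the step $\beta_{\tau_t}\le\beta_{\tau_s}$ is only invoked at the end of that lemma). Summing over the at most $N^h$ nodes present at each level $h\le h_{max}$ gives, for every round $t$,
\begin{align*}
\tau_t \;\le\; \sum_{h=0}^{h_{max}} N^h\,\frac{\sigma^2\beta_{\tau_t}}{V_h^2} \;=\; C_0\,\beta_{\tau_t},\qquad C_0:=\sigma^2\sum_{h=0}^{h_{max}}\frac{N^h}{V_h^2}<\infty,
\end{align*}
where $C_0$ is finite because $h_{max}$ is finite and each $V_h>0$.

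Finally I would invoke the growth rate of $\beta$. Since $\beta_\tau\in O(\log(\tau^2/\delta))$ is sublinear, we have $\tau/\beta_\tau\to\infty$, so the self-referential inequality $\tau_t\le C_0\beta_{\tau_t}$ fails for all large values; hence there is a finite $\tau^\ast$ with $\tau_t\le\tau^\ast$ for every round $t$, and the algorithm performs at most finitely many evaluations. Combining this with the finite bound on refining rounds, and using that each non-terminal round performs exactly one refine or one evaluate (plus a single transitional terminal round once $\SSS_t=\emptyset$), the total number of rounds is finite, so Adaptive $\bs{\epsilon}$-PAL terminates in finite time. I expect the only genuinely delicate point to be the circularity resolved in the displayed inequality; the remaining steps are bookkeeping over the finite tree guaranteed by Lemma \ref{lem:hmax}.
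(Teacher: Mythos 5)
Your proof is correct and follows essentially the same route as the paper's: Lemma \ref{lem:hmax} caps the tree depth (hence the number of refining rounds), and the variance decay $(\sigma^j_{\tau_t}(x_{h,i}))^2\le\sigma^2/n_t(x_{h,i})$ from the proof of Lemma \ref{lem:nodebound} caps the number of evaluations per node, so only finitely many rounds can occur. The one place you go beyond the paper is the circularity you flag: the paper simply asserts that some finite $s$ makes $\beta^{1/2}_{\tau_{t'_s}}\norm{\bs{\sigma}_{\tau_{t'_s}}(x_{h,i})}_2\le\norm{\bs{V}_h}_2$ hold, leaving implicit that $\beta_{\tau_{t'_s}}$ depends on the \emph{total} evaluation count, whereas your aggregate inequality $\tau_t\le C_0\beta_{\tau_t}$ together with the logarithmic growth of $\beta_\tau$ closes that gap explicitly.
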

\textit{Proof.} By Lemma \ref{lem:hmax}, the algorithm refines no deeper than $h_{max}(\bs{\epsilon})$ until termination. Moreover, the algorithm cannot evaluate a node $x_{h,i}$ indefinitely, since there must exist some finite $\tau_t$ for which $\beta^{1/2}_{\tau_t} \norm{ \bs{\sigma}_{\tau_t} (x_{h,i}) }_2 \leq \norm{ \bs{V}_h}_2$ holds. This observation is a consequence of the fact that $({\sigma^j_{\tau_t}}(x_{h,i}))^2 \leq \sigma^2 / n_t(x_{h,i})$, given in the proof of Lemma \ref{lem:nodebound}, where $n_t(x_{h,i})$ represents the number of evaluations made at node $x_{h,i}$ prior to round $t$. Let $t'_{s}$ be the round that comes just after the round in which $s$ evaluations are made at node $x_{h,i}$. Since $({\sigma^j_{\tau_{t'_s}}}(x_{h,i}))^2 \leq \sigma^2 / s$, we conclude by observing that there exists $s \in \mathbb{N}$ such that $\beta^{1/2}_{\tau_{t'_s}} \norm{ \bs{\sigma}_{\tau_{t'_s}} (x_{h,i}) }_2 \leq \norm{ \bs{V}_h}_2$ holds. \qed

\subsection{Two ``good'' events under which the sample complexity will be bounded}\label{sec:techlemma}

First, we show that the indices of all possible nodes that could be created by Adaptive $\bs{\epsilon}$-PAL do not deviate too much from the true mean objective values in all objectives, and that similar designs yield similar outcomes with high probability. To that end let us denote by $\mathcal{T}_{h_{max}}$ the set of all nodes that can be created until the level $h_{max}$, where $h_{max}$ comes from Lemma \ref{lem:hmax}. Note that we have 
\begin{align*}
\mathcal{T}_{h_{max}} = \cup^{h_{max}}_{h=0} \XX_h .
\end{align*}

\begin{lem}\label{lem:prob} \textbf{(The first ``good'' event)} For any $\delta \in (0,1)$, the probability of the following event is at least $1 - \delta /2$:
	\begin{align*}
	\mathcal{F}_1 = \{ \forall j \in [m], \forall \tau \geq 0, \forall x \in \mathcal{T}_{h_{max}}  : |f^j(x) - \mu^j_{\tau }(x) | \leq \beta^{1/2}_\tau \sigma^j_{\tau} \},
	\end{align*}
	where $\beta_\tau = 2 \log (2m\pi^2 N^{h_{max} +1}  {{(\tau+1)}^2}/(3\delta ))$ with $h_{max}$ being the deepest level of the tree before termination.
\end{lem}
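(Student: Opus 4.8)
The plan is to reduce the statement to a single Gaussian tail bound applied pointwise, and then close with a union bound over the finite, \emph{deterministic} collection of nodes and over the evaluation count. The crucial structural observation is that, by Lemma \ref{lem:hmax}, the deepest refinement level $h_{max}$ is determined solely by $\bs{\epsilon}$ and the deterministic sequence $(V_h)_{h\geq 0}$; consequently the node set $\mathcal{T}_{h_{max}}=\bigcup_{h=0}^{h_{max}}\XX_h$ and the confidence multiplier $\beta_\tau$ are non-random. This is what makes a clean union bound possible.

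First I would fix an objective $j\in[m]$, a node $x\in\mathcal{T}_{h_{max}}$, and an evaluation count $\tau\geq 0$. Let $\mathcal{G}_\tau$ denote the information generated by the first $\tau$ evaluation rounds (the chosen designs together with their noisy observations). By the GP posterior formula of Definition \ref{defn:moutputgp} and Section \ref{background}, conditionally on $\mathcal{G}_\tau$ the law of $f^j(x)$ is exactly $\mathcal{N}(\mu^j_\tau(x),(\sigma^j_\tau(x))^2)$; hence $(f^j(x)-\mu^j_\tau(x))/\sigma^j_\tau(x)$ is standard normal given $\mathcal{G}_\tau$ whenever $\sigma^j_\tau(x)>0$ (and the deviation event is vacuous when $\sigma^j_\tau(x)=0$, since then $f^j(x)=\mu^j_\tau(x)$ almost surely). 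Applying the Gaussian tail inequality $\mathbb{P}(|Z|\geq c)\leq e^{-c^2/2}$ for $Z\sim\mathcal{N}(0,1)$ and $c\geq 0$ with $c=\beta_\tau^{1/2}$, and then taking expectation over $\mathcal{G}_\tau$, yields
\[
\mathbb{P}\!\left(|f^j(x)-\mu^j_\tau(x)|>\beta_\tau^{1/2}\sigma^j_\tau(x)\right)\leq e^{-\beta_\tau/2}=\frac{3\delta}{2m\pi^2N^{h_{max}+1}(\tau+1)^2},
\]
where the last equality is the substitution of $\beta_\tau=2\log\!\big(2m\pi^2N^{h_{max}+1}(\tau+1)^2/(3\delta)\big)$.

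To finish, I would union-bound the complement of $\mathcal{F}_1$ over the three indices. Using $|\mathcal{T}_{h_{max}}|=\sum_{h=0}^{h_{max}}N^h\leq N^{h_{max}+1}$ (valid for $N\geq 2$), summing over the $m$ objectives and over $\tau\geq 0$ via $\sum_{\tau\geq 0}(\tau+1)^{-2}=\pi^2/6$, gives
\[
\mathbb{P}(\mathcal{F}_1^c)\leq m\cdot N^{h_{max}+1}\cdot\frac{\pi^2}{6}\cdot\frac{3\delta}{2m\pi^2N^{h_{max}+1}}=\frac{\delta}{4}\leq\frac{\delta}{2},
\]
so $\mathbb{P}(\mathcal{F}_1)\geq 1-\delta/2$ as claimed; note the computation in fact yields the slightly stronger $1-\delta/4$, the slack absorbing the crude node count.

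The main obstacle is the conditional step. Because Adaptive $\bs{\epsilon}$-PAL selects evaluation points adaptively from past noisy observations, the locations entering $\mu^j_\tau$ and $\sigma^j_\tau$ are themselves random and correlated with $f^j(x)$. The point to argue carefully is that this does not invalidate the pointwise bound: the selection rule is $\mathcal{G}_\tau$-predictable, and the GP posterior identity holds for every realization of the conditioning data, so the conditional normality of $f^j(x)$ given $\mathcal{G}_\tau$ is unaffected by adaptivity. Everything else — the finiteness and determinism of $\mathcal{T}_{h_{max}}$ supplied by Lemma \ref{lem:hmax}, the summability of $(\tau+1)^{-2}$, and the exact cancellation of $m$ and $N^{h_{max}+1}$ in the choice of $\beta_\tau$ — is routine once this measure-theoretic subtlety is settled.
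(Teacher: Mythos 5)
Your proof is correct and follows essentially the same route as the paper's: a conditional Gaussian tail bound (valid under adaptivity because the posterior identity holds for every realization of the conditioning data), followed by a union bound over the $m$ objectives, the at most $N^{h_{max}+1}$ nodes of the deterministic tree $\mathcal{T}_{h_{max}}$, and the evaluation counts $\tau\geq 0$ via $\sum_{\tau\geq 0}(\tau+1)^{-2}=\pi^2/6$. The only difference is cosmetic: you use the sharper two-sided tail $\mathbb{P}(|Z|\geq c)\leq e^{-c^2/2}$ and land at $\delta/4$, whereas the paper uses $2e^{-c^2/2}$ and lands exactly at $\delta/2$.
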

\textit{Proof.}  We have:
\begin{align}
1 - \mathbb{P} ( \mathcal{F}_1)  &= \mathbb{E}\left[ \mathbb{I}  \left( \exists j\in[m], \exists \tau \geq 0, \exists x \in \mathcal{T}_{h_{max}} : |f^j(x) - \mu^j_{\tau }(x) | > \beta^{1/2}_\tau \sigma^j_{\tau} (x) \right) \right] \notag \\
& \leq \mathbb{E} \left[ \sum^m_{j=1}\sum_{{\tau \geq 0}} \sum_{x \in \mathcal{T}_{h_{max}} }  \mathbb{I} \left( | f^j(x) - \mu^j_{\tau }(x) | > \beta^{1/2}_\tau \sigma^j_{\tau } (x) \right) \right] \notag \\
& = \sum^m_{j=1}\sum_{{\tau \geq 0}}  \sum_{x \in \mathcal{T}_{h_{max}} }  \mathbb{E} \left[  \mathbb{E} \left[  \mathbb{I} \left( | f^j(x) - \mu^j_{\tau }(x) | > \beta^{1/2}_\tau  \sigma^j_{\tau } (x) \right) \big\vert \bs{y}_{[\tau ]}  \right]  \right]\label{eq520}\\   
& = \sum^m_{j=1}\sum_{{\tau \geq 0}} \sum_{x \in \mathcal{T}_{h_{max}} }  \mathbb{E} \left[  \mathbb{P} \left\{  | f^j(x) - \mu^j_{\tau }(x) | > \beta^{1/2}_\tau  \sigma^j_{\tau } (x)  \big\vert \bs{y}_{[\tau ]} \right\}   \right] \notag \\ 
& \leq  \sum^m_{j=1}\sum_{{\tau \geq 0}}\sum_{x \in \mathcal{T}_{h_{max}} }  2 e^{- \beta_\tau /2}  \label{eq5204}\\
& \leq  2m N^{h_{max}+1}\sum_{{\tau \geq 0}}  e^{- \beta_\tau  /2}\label{eq5207}\\
& =  2mN^{h_{max}+1} \sum_{{\tau \geq 0}}  (2m\pi^2 N^{h_{max}+1} { (\tau+1)^2}/(3\delta ))^{-1} \notag \\
& = \frac{\delta}{2} \frac{6}{\pi^2} \sum_{\tau \geq 0} {{(\tau+1)}^{-2}} = \frac{\delta }{2}~, \notag
\end{align}
where \eqref{eq520} uses the tower rule and linearity of expectation; \eqref{eq5204} uses Gaussian tail bounds (note that $f^j(x) \sim \mathcal{N}(\mu^j_{\tau_t}(x),\sigma^j_{\tau_t}(x))$ conditioned on $\bs{y}_{[\tau_t]}$) and  \eqref{eq5207} uses the fact that for any $t\geq 1$,  the cardinality of $\mathcal{T}_{h_{max}}$ is $1 + N + N^2 + \ldots + N^{h_{max}} = (N^{h_{max}+1} -1)/ (N-1) \leq N^{h_{max}+1}$, since $N\geq 2$.
\qed

Next, we introduce a bound on the maximum variation of the function inside a region. First, we state a result taken from \cite{shekhar2018gaussian} on which this bound is based. 
Suppose $\{ g(x) ;x\in \XX\}$ is a separable zero mean single output Gaussian Process $GP(0,k)$ and let $l$ be the GP-induced metric on $\XX$. Let $D'_1$ be the metric dimension of $\XX$  with respect to $l$.  By Lemma \ref{lemma:packcover}, we have that if $D'_1 < \infty$, then there exists a positive constant $\tilde{C}_1$ depending on $2D'_1$ such that for any $ z \leq \textup{diam} (\XX )$ we have $N(\XX , z, l) \leq \tilde{C}_1z^{-2D'_1}$. Let $\eta_1 = \sum_{n\geq 1} 2^{-(n-1)}\sqrt{\log n}$, $\eta_2 = \sum_{n\geq 1} 2^{-(n-1)} \sqrt{n}$, and define 
\begin{align*}
\tilde{C}_2 = 2\log ( 2\tilde{C}_1^2\pi^2/6) \;\; \textup{and}\;\; \tilde{C}_3 = \eta_1 + \eta_2\sqrt{2D'_1 \log 2} ~.
\end{align*}

\begin{lem}\label{cor2ofjavidi} (Proposition 1, Section 6, \cite{shekhar2018gaussian}) Let $x_0\in \XX$ and $B(x_0, b,l) \subset \XX$ be an l-ball of radius $b>0$, where $l$ is the GP-induced metric on $\XX$. Then, we have for any $u>0$
	\begin{align*}
	\mathbb{P} \left\{ \sup_{x\in B(x_0,b,l)} |g(x)-g(x_0)| > \omega (b)\right\} \leq e^{-u} ~,
	\end{align*}
	where $\omega (b) = 4b\left( \sqrt{\tilde{C}_2 +2u +\max \{0, 4D'_1\log (1/ b))} \}+\tilde{C}_3 \right)$.
\end{lem}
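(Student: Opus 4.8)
The plan is to prove this as a Dudley-type metric entropy (chaining) bound for the supremum of the centered, sub-Gaussian increment process $x \mapsto g(x) - g(x_0)$. The sub-Gaussianity is exactly the tail bound of Remark \ref{rem:metric} with respect to the induced metric $l$, and the only other structural input is the covering estimate $N(\XX, z, l) \le \tilde{C}_1 z^{-2D'_1}$ for $z \le \mathrm{diam}(\XX)$, which Lemma \ref{lemma:packcover} supplies once $D'_1 < \infty$. This covering bound is what makes the entropy sum finite and produces the explicit dependence on $b$ and $D'_1$ in $\omega(b)$.

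First I would fix dyadic scales $\epsilon_n = b\, 2^{-(n-1)}$ for $n \ge 1$ and choose, for each $n$, a minimal (and nested) $\epsilon_n$-net $S_n$ of the ball $B = B(x_0, b, l)$, of cardinality $N_n = N(B, \epsilon_n, l)$, with $S_0 = \{x_0\}$. For every $x \in B$ I let $\pi_n(x) \in S_n$ denote a nearest net point, so that $\pi_0(x) = x_0$ and $l(x, \pi_n(x)) \le \epsilon_n$. By separability of the process, the telescoping identity $g(x) - g(x_0) = \sum_{n \ge 1}\bigl(g(\pi_n(x)) - g(\pi_{n-1}(x))\bigr)$ holds almost surely and uniformly in $x$, and the governing geometric fact is $l(\pi_n(x), \pi_{n-1}(x)) \le \epsilon_n + \epsilon_{n-1} \le 2\epsilon_{n-1}$.

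Next I would control the chain level by level. Because the nets are nested, the number of distinct links $(\pi_n(x), \pi_{n-1}(x))$ at level $n$ is at most a fixed power of $N_n$, so a union bound over these links combined with the sub-Gaussian tail of Remark \ref{rem:metric} shows that the largest level-$n$ increment exceeds a threshold proportional to $\epsilon_{n-1}\sqrt{\log N_n + u_n}$ with probability at most $e^{-u_n}$. Allocating the per-level budget $u_n = u + \log(\pi^2 n^2/6)$ makes $\sum_{n \ge 1} e^{-u_n} = e^{-u}$, so on the complementary event, of probability at least $1 - e^{-u}$, the supremum is dominated by the convergent sum $\sum_{n \ge 1}\epsilon_{n-1}\sqrt{C(\log N_n + u_n)}$ for an absolute constant $C$ coming from the tail and the link count.

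Finally I would insert $N_n \le \tilde{C}_1 \epsilon_n^{-2D'_1}$ and use $\log(1/\epsilon_n) = \log(1/b) + (n-1)\log 2$ to split the quantity under each square root into a $b$- and $u$-dependent part and a purely $n$-dependent part. Subadditivity of the square root ($\sqrt{a+c} \le \sqrt a + \sqrt c$) then separates the sum into a geometric series multiplying $\sqrt{\tilde{C}_2 + 2u + \max\{0, 4D'_1\log(1/b)\}}$ and two residual series of the form $\sum_n 2^{-(n-1)}\sqrt{\log n}$ and $\sum_n 2^{-(n-1)}\sqrt{n}$, which are exactly $\eta_1$ and $\eta_2$. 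Collecting the absolute constants into $\tilde{C}_2 = 2\log(2\tilde{C}_1^2 \pi^2/6)$ and the residual series into $\tilde{C}_3 = \eta_1 + \eta_2 \sqrt{2D'_1 \log 2}$, and tracking the geometric prefactors so that they contract to $4b$, yields the stated $\omega(b) = 4b\bigl(\sqrt{\tilde{C}_2 + 2u + \max\{0, 4D'_1\log(1/b)\}} + \tilde{C}_3\bigr)$. The main obstacle is precisely this last bookkeeping: one must route the $b$-dependence inside a single square root, handle the $\max\{0,\cdot\}$ truncation that accounts for coarse scales where $N_n = 1$, and verify that the geometric prefactors and link counts combine to give the exact constants $4b$, $4D'_1$, $\tilde{C}_2$, and $\tilde{C}_3$; the rest is a routine chaining argument. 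Since the statement is quoted from \cite{shekhar2018gaussian} and its single-output tail and covering inputs coincide with ours, an equally valid route is simply to invoke their Proposition~1.
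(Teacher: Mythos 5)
The paper does not prove this lemma at all: it is imported verbatim as Proposition~1, Section~6 of \cite{shekhar2018gaussian}, and the surrounding text merely sets up the constants $\tilde{C}_1,\tilde{C}_2,\tilde{C}_3,\eta_1,\eta_2$ so that the statement can be quoted and then instantiated (in Corollary \ref{lem:spacebound}) for each component $f^j$ with the metric $l_j$. So your closing remark --- that one may simply invoke their Proposition~1 --- is exactly what the paper does. Your chaining reconstruction is nevertheless the correct argument behind the statement, and it is consistent with every piece of notation the paper fixes beforehand: the dyadic scales $b\,2^{-(n-1)}$ are visibly the source of the series $\eta_1=\sum_{n\geq 1}2^{-(n-1)}\sqrt{\log n}$ and $\eta_2=\sum_{n\geq 1}2^{-(n-1)}\sqrt{n}$; the link-count union bound over pairs of net points explains the $\tilde{C}_1^2$ inside $\tilde{C}_2=2\log(2\tilde{C}_1^2\pi^2/6)$ and the per-level budget $u_n=u+\log(\pi^2 n^2/6)$ explains the $\pi^2/6$; and inserting $\log N_n\leq \log\tilde{C}_1+2D_1'\log(1/\epsilon_n)$ (note the paper's covering bound already carries the exponent $-2D_1'$) produces the $4D_1'\log(1/b)$ term and the $\sqrt{2D_1'\log 2}$ factor in $\tilde{C}_3$. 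The hypotheses you need --- separability (guaranteed by the remark following the lemma), the sub-Gaussian increment tail of Remark \ref{rem:metric}, and $D_1'<\infty$ via Lemma \ref{rem:metricalpha} --- are all available in this paper's setting. What your reconstruction buys over the bare citation is transparency about where each constant comes from; what it costs is the final bookkeeping you yourself flag (tracking the prefactor to exactly $4b$ and the truncation $\max\{0,\cdot\}$), which is routine but not done in your sketch. Either route is acceptable here, and neither conflicts with how the lemma is used downstream.
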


\begin{rem} Note that by Remark \ref{rem:metric}, for every $j\in [m]$, the covariance function $k^{jj}(x,x)$ is continuous with respect to the metric $l_j$, and thus, the process $\{ f^j(x) ; x\in \XX \}$ is separable. 
\end{rem}

\begin{cor}\label{lem:spacebound} \textbf{(The second ``good'' event)} For any $\delta \in (0,1)$, the probability of the following event is at least $1-\delta /2$:
	\begin{align*}
	\mathcal{F}_2 = \Big\{ \forall h\geq 0, \forall i \in [N^h], \forall j \in[m]: \sup_{x \in B(x_{h,i},v_1\rho^h ,d)} |f^j(x) -f^j(x_{h,i})| \leq V_h\Big\}~,
	\end{align*}
	where
	\begin{align*}
	V_h = 4 C_{\bs{k}}(v_1\rho^h)^\alpha \left( \sqrt{C_2 + 2\log (2h^2\pi^2m /6\delta) + h\log N  + \max  \{ 0, - 4 (D_1/\alpha ) \log (C_{\bs{k}}(v_1\rho^h )^\alpha ) \}  } + C_3\right)~,
	\end{align*}
	and $C_2$ and $C_3$ are the positive constants defined below which depend on the metric dimension $D_1$ of $\XX$ with respect to metric $d$. 
\end{cor}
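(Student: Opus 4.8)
\textit{Proof plan.} The plan is to reduce $\mathcal{F}_2$ to repeated application of the single-output supremum bound of Lemma \ref{cor2ofjavidi}, followed by a level-weighted union bound over all nodes. The first step is to pass from the $d$-ball appearing in $\mathcal{F}_2$ to a ball in the GP-induced metric $l_j$, since Lemma \ref{cor2ofjavidi} is stated for $l_j$-balls. Fix $h\geq 0$, $i\in[N^h]$ and $j\in[m]$. By Part 1 of Assumption \ref{ass:cov}, any $x$ with $d(x,x_{h,i})\leq v_1\rho^h$ satisfies $l_j(x,x_{h,i})\leq C_{\bs{k}}\,d(x,x_{h,i})^\alpha\leq C_{\bs{k}}(v_1\rho^h)^\alpha=:b_h$, so that
\[
B(x_{h,i},v_1\rho^h,d)\subseteq B(x_{h,i},b_h,l_j).
\]
Consequently $\sup_{x\in B(x_{h,i},v_1\rho^h,d)}|f^j(x)-f^j(x_{h,i})|$ is dominated by the supremum of the same quantity over the larger $l_j$-ball, which is exactly what Lemma \ref{cor2ofjavidi} controls; note that $b_h$ depends on neither $i$ nor $j$.

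Next I would apply Lemma \ref{cor2ofjavidi} to the separable single-output process $\{f^j(x)\}_{x\in\XX}$ (separability is supplied by the Remark following that lemma) with center $x_{h,i}$, radius $b_h$, and metric $l_j$ of dimension $D^j_1$. To obtain constants independent of the coordinate $j$, I invoke Lemma \ref{rem:metricalpha}: since $D^j_1\leq D_1/\alpha$ for every $j$, the $l_j$-covering numbers, and hence $\tilde C_1,\tilde C_2,\tilde C_3$, can be bounded uniformly in $j$ by their counterparts with $D_1/\alpha$ in place of $D^j_1$, which only enlarges $\omega(b_h)$; this produces the objective-independent $C_2,C_3$ of the statement. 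Moreover $\max\{0,4D^j_1\log(1/b_h)\}\leq\max\{0,4(D_1/\alpha)\log(1/b_h)\}=\max\{0,-4(D_1/\alpha)\log(C_{\bs{k}}(v_1\rho^h)^\alpha)\}$, matching the term under the square root in $V_h$. I would then select the confidence parameter $u=u_h$ in Lemma \ref{cor2ofjavidi} to grow like $h\log N$ plus a logarithmic-in-$h$ correction and a $\log(m/\delta)$ offset; substituting this $u_h$ into $\omega(b_h)$ reproduces $V_h$, the summand $h\log N$ being precisely the confidence inflation needed to survive the union bound below. This yields
\[
\mathbb{P}\Big\{\sup_{x\in B(x_{h,i},v_1\rho^h,d)}|f^j(x)-f^j(x_{h,i})|>V_h\Big\}\leq e^{-u_h}.
\]

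Finally I would take a union bound over all $h\geq 0$, $i\in[N^h]$ and $j\in[m]$, of which there are $N^h$ cells and $m$ objectives at each level $h$. Thus
\[
\mathbb{P}(\mathcal{F}_2^c)\leq\sum_{h\geq 0}\;m\,N^h\,e^{-u_h},
\]
and the choice of $u_h$ is engineered so that the factor $N^h$ is cancelled while a residual factor of order $(h+1)^{-2}$ remains, leaving a summand of the form $\tfrac{\delta}{2}\cdot\tfrac{6}{\pi^2}(h+1)^{-2}$; summing via $\sum_{h\geq 0}(h+1)^{-2}=\pi^2/6$ gives total probability at most $\delta/2$, exactly as in the proof of Lemma \ref{lem:prob}.

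The main obstacle is this union bound over the infinite, exponentially proliferating family of nodes: the per-level confidence radius must be inflated by an amount that simultaneously dominates the $N^h$ cells (forcing the $h\log N$ contribution under the square root) and keeps the residual series summable (the logarithmic-in-$h$ contribution), all while the constants imported from Lemma \ref{cor2ofjavidi} are made uniform across the $m$ coordinate GPs through the dimension comparison $D^j_1\leq D_1/\alpha$. Checking that the prescribed $u_h$ indeed gives $\omega(b_h)\leq V_h$ and that the weighted sum closes at $\delta/2$ is the crux of the argument; the ball conversion and the per-node tail bound are then routine.
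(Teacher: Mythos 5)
Your proposal is correct and follows essentially the same route as the paper's proof: the inclusion $B(x_{h,i},v_1\rho^h,d)\subseteq B(x_{h,i},C_{\bs{k}}(v_1\rho^h)^\alpha,l_j)$ from Assumption \ref{ass:cov}, the uniformization of the constants across objectives via $D^j_1\leq D_1/\alpha$ from Lemma \ref{rem:metricalpha}, the application of Lemma \ref{cor2ofjavidi} with a level-dependent confidence parameter $u_h$ of the form $\log(m/\delta)+h\log N+\log(2h^2\pi^2/6)$, and the weighted union bound over $h$, $i$, $j$ in which the $N^h$ factor cancels and a summable $h^{-2}$ residue remains. The only thing left implicit is the explicit substitution verifying $\omega(b_h)\leq V_h$ for the chosen $u_h$, which is exactly the bookkeeping the paper carries out.
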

\textit{Proof.}  Let $D_1$ be the metric dimension of $\XX$  with respect to $d$. By Lemma \ref{rem:metricalpha}, we have that  $D^j_1 \leq D_1/\alpha$, where $D^j_1$ is the metric dimension of $\XX$ with respect to $l_j$, for all $j\in [m]$. We also have constants $C^j_1$ associated with $D^j_1$, such that
$N(\XX ,r^\alpha ,d)\leq  N(\XX ,r,l_j) \leq C^j_1 r^{2D^j_1} $. Let $C_1 = \max_j C^j_1$.  Also, let
\begin{align*}
C_2 = 2\log ( 2C_1^2\pi^2/6) \; \textup{and}\; C_3 = \eta_1 + \eta_2\sqrt{2D_1\alpha \log 2} ~.
\end{align*}
Using Lemma \ref{cor2ofjavidi} and Remark \ref{rem:metricalpha} we let
\begin{align*}
\omega (b) = 4b\left( \sqrt{C_2 +2u +\max \{ 0,4D_1\log (1/ b) \} }+C_3 \right) ~.
\end{align*}
Now let $u = -\log \delta + \log m + h\log N + \log  (2h^2\pi^2 /6)$. We have
\begin{align}
 1 - \mathbb{P}(\FF_2 )  &= \mathbb{P} \left\{ \exists h\geq 0, \exists i \in [N^h], \exists j\in [m]: \sup_{x\in B(x_{h,i},v_1\rho^h, d)} |f^j(x) - f^j(x_{h,i})| > \omega (C_{\bs{k}}(v_1\rho^h)^{\alpha} ) \right\} \notag \\ 
& \leq \sum_{h\geq 0} \sum_{1\leq i\leq N^h} \sum^m_{j=1} \mathbb{P} \left\{ \sup_{x\in B(x_{h,i},v_1\rho^h, d)} |f^j(x) - f^j(x_{h,i})| > \omega (C_{\bs{k}}(v_1\rho^h)^{\alpha})  \right\} \notag \\ 
& \leq   \sum_{h\geq 0} \sum_{1\leq i\leq N^h} \sum^m_{j=1} \mathbb{P} \left\{ \sup_{x\in B(x_{h,i},C_{\bs{k}}(v_1\rho^h)^{\alpha} , l)} |f^j(x) - f^j(x_{h,i})| > \omega (C_{\bs{k}}(v_1\rho^h)^{\alpha} ) \right\}  \label{eq592} \\ 
& \leq \sum_{h\geq 0} \sum_{1\leq i\leq N^h} \sum^m_{j=1} e^{-u} 
 \leq \delta \frac{\pi^2}{6}\sum_{h\geq 0} \frac{1}{2}mN^h  (mN^h)^{-1} h^{-2} 
\leq \frac{\delta}{2}  ~, \notag
\end{align}
where for \eqref{eq592} we argue as follows. Note that by Assumption \ref{ass:cov}, given $x,y\in \XX$ and $j\in [m]$, we have $l_j(x,y) \leq C_{\bs{k}}d(x,y)^\alpha$. In particular, letting $y$ be any design which is $v_1\rho^h$ away from $x_{h,i}$ under $d$, we have $l_j(x_{h,i},y) \leq C_{\bs{k}}d(x_{h,i},y)^\alpha = C_{\bs{k}}(v_1\rho^h)^\alpha$.  This implies that $B(x_{h,i},r,l_j) \subseteq B(x_{h,i},C_{\bs{k}}(v_1\rho^h)^\alpha,l_j)$, where  $r:= l_j(x_{h,i},y)$. Note that we have $B(x_{h,i},r,l_j) = B(x_{h,i},v_1\rho^h,d)$. The result follows from observing that the probability that the variation of the function exceeds $\omega (C_{\bs{k}}(v_1\rho^h)^{\alpha}) $ is higher in $B(x_{h,i},C_{\bs{k}}(v_1\rho^h)^\alpha,l_j)$ then in $B(x_{h,i},v_1\rho^h,d)$, since $B(x_{h,i},v_1\rho^h,d) \subseteq B(x_{h,i},C_{\bs{k}}(v_1\rho^h)^\alpha,l_j)$.
\qed

\subsection{Key results that hold under the ``good'' events}\label{sec:keyresults2}

Our next result shows that the objective values of all designs $x\in \XX$ belong to the uncertainty hyper-rectangles of the nodes associated to the regions containing them in a given round. To that end, let us denote by $c_t(x)$ the node associated to the cell $C_t(x)$ containing $x$ at the beginning of round $t$. Also, let us denote by $h_t(x)$ the depth of the tree where $c_t(x)$ is located.

\begin{lem}\label{lem:confidentdesign} Under events $\FF_1$ and $\FF_2$, for any round $t\geq 1$ before Adaptive $\bs{\epsilon}$-PAL terminates and for any $x\in \XX$, we have 
	\begin{align*}
	\bs{f}(x) \in \bs{R}_t(c_t(x))~.
	\end{align*}
\end{lem}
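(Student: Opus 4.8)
The plan is to prove this by induction on the round $t$, after first isolating the ``single-round'' containment $\bs{f}(x)\in\bs{Q}_t(c_t(x))$ for every $x\in\XX$. Writing $x_{h,i}=c_t(x)$ with $h=h_t(x)$, so that $x\in X_{h,i}$, the single-round claim reduces to the two coordinatewise inequalities $L^j_t(x_{h,i})\le f^j(x)\le U^j_t(x_{h,i})$ for each $j\in[m]$.

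I would first argue that $\bar{B}^j_t(x_{h,i})$ is a valid upper bound for $f^j(x_{h,i})$ under $\FF_1$ and $\FF_2$. Since $\bar{B}^j_t$ is a minimum of two quantities, it suffices to check that each one upper-bounds $f^j(x_{h,i})$: the first term does so directly by $\FF_1$ applied at $x_{h,i}$ with evaluation count $\tau_t$; the second term does so through the parent, combining $\FF_1$ at $p(x_{h,i})$ with the variation bound $|f^j(x_{h,i})-f^j(p(x_{h,i}))|\le V_{h-1}$ supplied by $\FF_2$ at level $h-1$ (valid because $x_{h,i}\in X_{h,i}\subseteq X_{h-1,i'}\subseteq B(p(x_{h,i}),v_1\rho^{h-1},d)$, where $x_{h-1,i'}=p(x_{h,i})$). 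Adding $V_h$ and invoking $\FF_2$ once more, now at level $h$ to control $|f^j(x)-f^j(x_{h,i})|\le V_h$ for $x\in X_{h,i}\subseteq B(x_{h,i},v_1\rho^h,d)$, yields $f^j(x)\le f^j(x_{h,i})+V_h\le \bar{B}^j_t(x_{h,i})+V_h=U^j_t(x_{h,i})$. The lower inequality follows symmetrically from $\bunderline{B}^j_t(x_{h,i})\le f^j(x_{h,i})$ and $L^j_t(x_{h,i})=\bunderline{B}^j_t(x_{h,i})-V_h$. At the root $h=0$ the parent term is absent and $X_{0,1}=\XX\subseteq B(x_{0,1},v_1,d)$ makes $\FF_2$ apply globally; all nodes invoked lie in $\mathcal{T}_{h_{max}}$ by Lemma~\ref{lem:hmax}, so $\FF_1$ is available for them.

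With the single-round claim established, I would upgrade to the cumulative rectangle by induction on $t$ via $\bs{R}_t(x_{h,i})=\bs{R}_{t-1}(x_{h,i})\cap\bs{Q}_t(x_{h,i})$. For the base case $t=1$, we have $\AAA_1=\{x_{0,1}\}$ and $\bs{R}_0(x_{0,1})=\R^m$, so $\bs{R}_1(x_{0,1})=\bs{Q}_1(x_{0,1})\ni\bs{f}(x)$ by the single-round claim. For the inductive step, fix $x$ and set $x_{h,i}=c_t(x)$. If the cell containing $x$ was not refined during round $t-1$, then $c_t(x)=c_{t-1}(x)$ and $\bs{R}_{t-1}(c_t(x))=\bs{R}_{t-1}(c_{t-1}(x))\ni\bs{f}(x)$ by the inductive hypothesis; if it was refined, then $c_t(x)$ is the child of $c_{t-1}(x)$ whose cell contains $x$, and the inheritance rule $\bs{R}_{t-1}(c_t(x))=\bs{R}_{t-1}(c_{t-1}(x))$ gives the same conclusion. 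Intersecting with $\bs{Q}_t(c_t(x))\ni\bs{f}(x)$ then yields $\bs{f}(x)\in\bs{R}_t(c_t(x))$.

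The main obstacle I anticipate is the single-round step, and within it the bookkeeping of the nested balls and the two separate uses of the variation bound: one must apply $\FF_2$ at level $h-1$ to justify the parent-based term inside $\bar{B}^j_t$ and $\bunderline{B}^j_t$, and then apply it again at level $h$ to pass from $x_{h,i}$ to a general $x\in X_{h,i}$, all while keeping straight that the minimum $\bar{B}^j_t$ and the maximum $\bunderline{B}^j_t$ remain, respectively, upper and lower bounds for $f^j(x_{h,i})$. A secondary subtlety is the status of discarded cells: once a node leaves $\AAA_t$ its cumulative rectangle is frozen and no longer re-intersected, so the inductive hypothesis continues to certify containment for points of such cells; I would note this explicitly so that the conclusion covers every $x\in\XX$ and not merely points lying in currently active cells.
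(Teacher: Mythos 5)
Your proposal is correct and follows essentially the same route as the paper's proof: you establish the single-round containment $\bs{f}(x)\in\bs{Q}_s(c_s(x))$ by checking both branches of the $\min$/$\max$ in $\overline{B}^j$ and $\bunderline{B}^j$ via $\FF_1$ and the two levels of $\FF_2$ (the paper's four cases), and then pass to $\bs{R}_t(c_t(x))$ through the inheritance rule at refinement times — the paper unwinds the recursion into $\bs{R}_t(c_t(x))=\bigcap_{s=1}^{t}\bs{Q}_s(c_s(x))$ where you phrase the same step as an induction on $t$. Your explicit remarks about the root node, membership of all invoked nodes in $\mathcal{T}_{h_{max}}$, and frozen rectangles of discarded cells are sound and only add care to points the paper leaves implicit.
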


\textit{Proof.}  First, let us denote by $1=s_0 <s_1<s_2< \ldots <s_n$ the sequence of stopping times up to round $t$ in which the original node containing design $x$ was refined into children nodes, so that we have $C_{s_n+1}(x) \subseteq C_{s_{n}}(x) \subseteq \ldots \subseteq C_0(x)$. By the definition of the cumulative confidence hyper-rectangle, for $c_{s_0}(x)$, we have
\begin{align*}
\bs{R}_{s_1}(c_{s_0}(x)) & = \bs{R}_{s_1-1}(c_{s_0}(x)) \cap  \bs{Q}_{t_1}(c_{s_0}(x)) \\
& =  \bs{R}_{s_1-2}(c_{s_0}(x))  \cap  \bs{Q}_{s_1-1}(c_{t_0}(x))  \cap  \bs{Q}_{s_1}(c_{s_0}(x)) \\
& =  \bs{R}_{0}(c_{s_0}(x))  \cap  \bs{Q}_{s_0}(c_{s_0}(x)) \cap \ldots \cap \bs{Q}_{s_1}(c_{s_0}(x)) ~,
\end{align*}
and since $\bs{R}_{0}(c_{s_0}(x)) = \mathbb{R}^m$, we obtain
\begin{align*}
\bs{R}_{s_1}(c_{s_0}(x)) & = \bigcap^{s_1}_{s=1} \bs{Q}_{s}(c_{s_0}(x))~.
\end{align*}
Similarly, for $c_{s_1}(x)$ we have 
\begin{align*}
\bs{R}_{s_2}(c_{s_1}(x)) & =  \bs{R}_{s_2-1}(c_{s_1}(x)) \cap  \bs{Q}_{s_2}(c_{s_1}(x)) \\
& = \bs{R}_{s_1}(c_{s_1}(x)) \cap \bs{Q}_{s_1+1}(c_{s_1}(x)) \cap \ldots \cap \bs{Q}_{s_2}(c_{s_1}(x))~,
\end{align*}
where
\begin{align*}
\bs{R}_{s_1}(c_{s_1}(x)) = \bs{R}_{s_1}(p(c_{s_1}(x))) = \bs{R}_{s_1}(c_{s_0}(x))~.
\end{align*}
Thus, we obtain
\begin{align*}
\bs{R}_{s_2}(c_{s_1}(x)) =  \left( \bigcap^{s_1}_{s=1} \bs{Q}_{s}(c_{s_0}(x)) \right) \cap  \left( \bigcap^{s_2}_{s=s_1+1} \bs{Q}_{s}(c_{s_1}(x)) \right)~.
\end{align*}

Note that $c_{s}(x) = c_{s_i}(x)$ for all $s$ such that $s_{i} < s \leq s_{i+1}$ for each $i\in\cb{0,\ldots,n-1}$; and $c_s(x)=c_{s_n}(x)$ for all $s$ such that $s_{n+1}<s\leq t$. Thus, continuing as above, we can write
\begin{align*}
\bs{R}_{t} (c_{t}(x)) &=  \left( \bigcap^{s_1}_{s=1} \bs{Q}_{s}(c_{s_0}(x)) \right) \cap \ldots \cap  \left( \bigcap^{s_n}_{s=s_{n-1}+1} \bs{Q}_{s}(c_{s_{n-1}}(x)) \right) \cap  \left( \bigcap^{t}_{s=s_n+1} \bs{Q}_{s}(c_{s_n}(x)) \right)\\
&= \bigcap^{t}_{s=1} \bs{Q}_{s}(c_{s}(x))~.
\end{align*}
Based on the above display, to prove that $\bs{f}(x) \in \bs{R}_t(c_t(x))$, it is enough to show that $\bs{f}(x) \in \bs{Q}_{s}(c_{s}(x))$, for all $s \leq t$. Next, we prove that this is indeed the case, by showing that for any $s\leq t$ and $j\in [m]$, it holds that 
\begin{align}
L^j_s(c_s(x))= \bunderline{B}^j_s(c_s(x)) - V_{h_s(x)} \leq f^j(x) \leq  \overline{B}^j_s(c_s(x)) + V_{h_s(x)}=U^j_s(c_s(x))~. \label{eqn:confboundd}
\end{align}
To show this, we first note that by definition
\begin{align*}
\bunderline{B}^j_s (c_s(x)) =  \max \{ \mu^j_{\tau_s} (c_s(x)) - \beta^{1/2}_{\tau_s} \sigma^j_{\tau_s }(c_s(x)), \, \mu^j_{\tau_s} (p(c_s(x)))- \beta^{1/2}_{\tau_s} \sigma^j_{\tau_s}(p(c_s(x))) - V_{h_s(x)-1} \} ~,
\end{align*}
and 
\begin{align*}
\overline{B}^j_s(c_s(x)) = \min \{ \mu^j_{\tau_s} (c_s(x)) + \beta^{1/2}_{\tau_s} \sigma^j_{\tau_s}(c_s(x)), \, \mu^j_{\tau_s} (p(c_s(x)))+ \beta^{1/2}_{\tau_s} \sigma^j_{\tau_s}(p(c_s(x))) + V_{h_s(x)-1} \} ~.
\end{align*}
Hence, we need to consider four cases: two cases for $\bunderline{B}^j_s (c_s(x))$ and two cases for $\overline{B}^j_s(c_s(x))$. Let $j\in [m]$. Note that under events $\FF_1$ and $\FF_2$, we have 
\begin{align}
\mu^j_{\tau_s}(c_s(x)) - \beta^{1/2}_{\tau_s}\sigma^j_{\tau_s}(c_s(x)) \leq f^j(c_t(x)) \leq \mu^j_{\tau_s}(c_s(x)) + \beta^{1/2}_{\tau_s}\sigma^j_{\tau_s}(c_s(x))\label{eq81}~,
\end{align}
and
\begin{align}
f^j(c_s(x)) - V_{h_s(x)} \leq f^j(x) \leq f^j(c_s(x))  + V_{h_s(x)}\label{eq82} ~.
\end{align}
The following inequalities hold under $\FF_1$ and $\FF_2$. 

\textbf{Case 1:}  If we have $L^j_s(c_s(x)) = \mu^j_{\tau_s} (c_s(x)) - \beta^{1/2}_{\tau_s} \sigma^j_{\tau_s}(c_s(x)) - V_{h_s(x)} $, then
\begin{align*}
L^j_t(c_s(x))  & =\mu^j_{\tau_s} (c_s(x)) - \beta^{1/2}_{\tau_s} \sigma^j_{\tau_s }(c_s(x)) - V_{h_s(x)}\\
& \leq f^j(c_s(x) ) - V_{h_s(x)} \\
& \leq f^j(x)~,
\end{align*}
where the first inequality follows from \eqref{eq81} and the second inequality follows from \eqref{eq82}. 

\textbf{Case 2:} If we have $L^j_s(c_s(x)) = \mu^j_{\tau_s} (p(c_s(x)))- \beta^{1/2}_{\tau_s} \sigma^j_{\tau_s}(p(c_s(x))) -  V_{h_s(x)-1} - V_{h_s(x)} $, then
\begin{align*}
L^j_s(c_s(x)) & = \mu^j_{\tau_s} (p(c_s(x)))- \beta^{1/2}_{\tau_s} \sigma^j_{\tau_s}(p(c_s(x))) -  V_{h_s(x)-1} - V_{h_s(x)}\\ 
& \leq f^j(p(c_s(x))) - V_{h_s(x)-1} - V_{h_s(x)}  \\
& \leq f^j(c_s(x))  - V_{h_s(x)}  \\
& \leq f^j(x)~,
\end{align*}
where the first inequality follows from \eqref{eq81}; for the second inequality we use the fact that $c_s(x)$ belongs to the cell associated to $p(c_s(x))$, and thus we have that $f^j(p(c_s(x)))  \leq f^j(c_s(x)) + V_{h_s(x)-1}$; the third inequality follows from \eqref{eq82}.

\textbf{Case 3:} If we have $U^j_s(c_s(x)) = \mu^j_{\tau_s} (c_s(x)) + \beta^{1/2}_{\tau_s} \sigma^j_{\tau_s}(c_s(x)) + V_{h_s(x)}$, then
\begin{align*}
f^j(x)  & \leq f^j(c_s(x)) + V_{h_s(x)}\\
& \leq \mu^j_{\tau_s} (c_s(x)) + \beta^{1/2}_{\tau_s} \sigma^j_{\tau_s}(c_s(x)) + V_{h_s(x)}\\
& = U^j_s(c_s(x))~,
\end{align*}
where the first inequality follows from \eqref{eq82} and the second inequality follows from \eqref{eq81}.

\textbf{Case 4:} If we have $U^j_s(c_s(x)) =  \mu^j_{\tau_s} (p(c_s(x)))+ \beta^{1/2}_{\tau_s} \sigma^j_{\tau_s}(p(c_s(x))) + V_{h_s(x)-1} + V_{h_s(x)}$, then
\begin{align*}
f^j(x) & \leq  f^j(c_s(x)) + V_{h_s(x)}\\
& \leq  f^j(p(c_s(x))) + V_{h_s(x)-1} +  V_{h_s(x)}\\
& \leq \mu^j_{\tau_s} (p(c_s(x)))+ \beta^{1/2}_{\tau_s} \sigma^j_{\tau_s}(p(c_s(x))) + V_{h_s(x)-1} + V_{h_s(x)}\\
& = U^j_s(c_s(x))~.
\end{align*}
The analysis of this case follows the same argument as the one of Case 2.
This proves that \eqref{eqn:confboundd} holds, and thus, the result follows.\qed

Our next result ensures that Adaptive $\bs{\epsilon}$-PAL does not return "bad" nodes.

\begin{lem}\label{lem:nobadnodes} Let $x\in \XX$. Under events $\FF_1$ and $\FF_2$, if $\bs{f}(x) \not\in \ZZ_{\bs{\epsilon}}(\XX )$, then $x \not\in \hat{P}$.
\end{lem}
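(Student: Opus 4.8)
The plan is to prove the contrapositive: assuming $x\in\hat P$, I will exhibit, at the very round in which the cell of $x$ is first declared decided, an active node violating the $\bs{\epsilon}$-covering rule, contradicting the fact that $x$ was kept. First I would unpack the hypothesis $\bs{f}(x)\notin\ZZ_{\bs{\epsilon}}(\XX)$. Since $\XX$ is compact and every design is weakly dominated by some Pareto optimal design, $\bs{f}(x)\in\bs{f}(\OO(\XX))-\R^m_+$; by Definition \ref{epsPareto}, non-membership in $\ZZ_{\bs{\epsilon}}(\XX)$ can then only occur through the removed set, so there exist a Pareto optimal $x^*$ and $\bs{r}\in\R^m_+$ with $\bs{f}(x)=\bs{f}(x^*)-2\bs{\epsilon}-\bs{r}$, i.e. $\bs{f}(x)+2\bs{\epsilon}\preceq\bs{f}(x^*)$.

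Now suppose $x\in\hat P$. Then $x$ lies in the cell of some node of $\hat{\PP}$; tracing the $\PP$-membership back to the first ancestor $n_0\ni x$ moved from $\SSS$ to $\PP$, this move happened via the $\bs{\epsilon}$-covering rule at some round $t_0$, where $n_0=c_{t_0}(x)$ is the active leaf containing $x$. By Definition \ref{defn:paretoaccurate}, no $y\in\WW_{t_0}$ satisfies $\min(\bs{R}_{t_0}(n_0))+\bs{\epsilon}\preceq\max(\bs{R}_{t_0}(y))$. By Lemma \ref{lem:confidentdesign} we have $\min(\bs{R}_{t_0}(n_0))\preceq\bs{f}(x)$, so the whole proof reduces to producing a single active node $y\in\WW_{t_0}$ with $\max(\bs{R}_{t_0}(y))\succeq\bs{f}(x^*)-\bs{\epsilon}$: indeed this yields $\min(\bs{R}_{t_0}(n_0))+\bs{\epsilon}\preceq\bs{f}(x)+\bs{\epsilon}\preceq\bs{f}(x^*)-\bs{\epsilon}\preceq\max(\bs{R}_{t_0}(y))$, the sought contradiction.

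The main obstacle, and the technical heart, is guaranteeing such a witness even though the cell of $x^*$ may itself have been discarded. If $c_{t_0}(x^*)\in\WW_{t_0}$ we are immediately done, since $\max(\bs{R}_{t_0}(c_{t_0}(x^*)))\succeq\bs{f}(x^*)$ by Lemma \ref{lem:confidentdesign}. Otherwise I would establish, by induction on rounds starting from the round $t^\circ\le t_0$ at which the active cell of $x^*$ is discarded, the invariant that some $w_t\in\WW_t$ has $\min(\bs{R}_t(w_t))\succeq\bs{f}(x^*)-\bs{\epsilon}$. For the base case, the discard is certified by a pessimistic-Pareto node $u\in\PP_{pess,t^\circ}$ with $\max(\bs{R}_{t^\circ}(c_{t^\circ}(x^*)))\preceq_{\bs{\epsilon}}\min(\bs{R}_{t^\circ}(u))$; combined with $\bs{f}(x^*)\in\bs{R}_{t^\circ}(c_{t^\circ}(x^*))$ this gives $\min(\bs{R}_{t^\circ}(u))\succeq\bs{f}(x^*)-\bs{\epsilon}$, and $u\in\WW_{t^\circ}$ because the discarding phase only removes nodes of $\SSS_t\setminus\PP_{pess,t}$.

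The delicate point is that the $\bs{\epsilon}$-error must not accumulate as the invariant is carried across rounds. Here I would use two facts. First, the cumulative rectangles only shrink ($\bs{R}_{t+1}\subseteq\bs{R}_t$, with children inheriting the parent's rectangle on refinement), so $\min(\bs{R}_t(\cdot))$ is nondecreasing along surviving or refined nodes, transferring the bound with no loss. Second, the hand-off: if the tracked witness is ever discarded at round $t+1$ it cannot lie in $\PP_{pess,t+1}$, so by Definition \ref{defn:pessPareto} its $\min$ is weakly dominated by the $\min$ of some $v^*\in\PP_{pess,t+1}\subseteq\WW_{t+1}$ (every $\min$ in the finite active set is weakly dominated by a pessimistic-Pareto $\min$), and the bound passes to $v^*$ again with no loss. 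Carrying the invariant to $t_0$ produces $w_{t_0}\in\WW_{t_0}$ with $\max(\bs{R}_{t_0}(w_{t_0}))\succeq\min(\bs{R}_{t_0}(w_{t_0}))\succeq\bs{f}(x^*)-\bs{\epsilon}$, finishing the contradiction. I expect the careful bookkeeping of this invariant through the refine/evaluate step and the finite-poset domination fact to be the fiddliest parts to write out rigorously.
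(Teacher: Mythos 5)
Your proposal is correct and follows essentially the same route as the paper's proof: extract $x^*\in\mathcal{O}(\XX)$ with $\bs{f}(x)+2\bs{\epsilon}\preceq\bs{f}(x^*)$, invoke the $\bs{\epsilon}$-covering rule at the round where the cell of $x$ is first moved to $\PP$, and, if the node of $x^*$ has been discarded, follow a chain of pessimistic-Pareto witnesses whose mins only increase (paying a single $\bs{\epsilon}$ at the initial discard) to reach an active node at that round, contradicting the covering rule via Lemma \ref{lem:confidentdesign}. Your ``invariant carried by induction'' is just a repackaging of the paper's explicit finite sequence $[y_1,\ldots,y_n]$, and you even flag the same minor gaps (the finite-preorder domination fact and the justification that $\bs{f}(x)\in\bs{f}(\mathcal{O}(\XX))-\R^m_+$) that the paper glosses over.
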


\textit{Proof.} Suppose that $\bs{f}(x) \not\in \ZZ_{\bs{\epsilon}}(\XX )$. Then, by Definition \ref{epsPareto}, we have $\bs{f}(x)\in \bs{f}(\mathcal{O}(\XX))-2\bs{\epsilon}-\Real^m_+$, that is, there exists $x^* \in \mathcal{O}(\XX)$ such that we have
\begin{equation}\label{lemma8disp}
\bs{f}(x) + 2\bs{\epsilon} \preceq \bs{f}(x^*)~.
\end{equation}
To get a contradiction, let us assume that $x\in \hat{P}$. This implies that there exists $t\geq 1$ such that $c_t(x)$ is added to ${\cal P}_t$ by line 16 of the algorithm pseudocode. Thus, by the $\bs{\epsilon}$-Pareto front covering rule in line 15 of the algorithm pseudocode, for each $x_{h,i} \in \mathcal{W}_t$, we have
\begin{align}
\min (\bs{R}_t(c_t(x))) + \bs{\epsilon} \npreceq \max (\bs{R}_t(x_{h,i}))\label{eq7131}~.
\end{align}
In particular,we have that $\min (\bs{R}_t(c_t(x))) + \bs{\epsilon} \npreceq \max (\bs{R}_t(c_t(x^*)))$ (note that we may even have $c_t(x) =c_t(x^*)$ and this would yield the same result), which, together with Lemma \ref{lem:confidentdesign}, implies that $\bs{f} (x) + \bs{\epsilon} \npreceq \bs{f}(x^*)$. This contradicts \eqref{lemma8disp}.

Now since we have $x,x^* \in C_t(x)$, we must have $\min (\bs{R}_t(c_t(x))) \preceq \bs{f}(x)$ and $\bs{f}(x^*) \preceq \max (\bs{R}_t(c_t(x)))$ by Lemma \ref{lem:confidentdesign}, which implies that $\bs{f}(x) + \bs{\epsilon} \npreceq \bs{f}(x^*)$, thereby contradicting \eqref{lemma8disp}.

Next, let us assume that $c_t(x^\ast)\notin\mathcal{W}_t$. This means that there exists a round $s_1 \leq t$ at which $c_{s_1}(x^*)$ is discarded. By line 9 of the Algorithm \ref{pseudocode}, there exists $y_1 \in\mathcal{P}_{pess,s_1}$ such that we have 
\begin{align}
\max (\bs{R}_{s_1}(c_{s_1}(x^*))) - \bs{\epsilon}  \preceq \min ( \bs{R}_{s_1}(y_1)) \label{eq7132}~.
\end{align}
Assume that the child node $c_t(y_1)$ of $y_1$ at round $t$ (due to possible refining in the middle rounds) is still not discarded by round $t$, i.e.  $c_t(y_1) \in \mathcal{W}_t$. Then, by \eqref{eq7131}, we have 
\begin{align*}
\min (\bs{R}_t(c_t(x))) + \bs{\epsilon} \npreceq \max (\bs{R}_t(c_t(y_1)))~,
\end{align*}
which implies that
\begin{align*}
\min (\bs{R}_t(c_t(x))) + \bs{\epsilon} \npreceq \min (\bs{R}_{t}(c_t(y_1)))~,
\end{align*}
which in return implies that
\begin{align*}
\min (\bs{R}_t(c_t(x))) + \bs{\epsilon} \npreceq  \min (\bs{R}_{s_1}(y_1))
\end{align*}
due to the fact that $  \min (\bs{R}_{s_1}(y_1))=\min (\bs{R}_{s_1}(c_t(y_1)))  \preceq    \min (\bs{R}_t(c_t(y_1)))$. Thus, by \eqref{eq7132}, we obtain 
\begin{align*}
\min (\bs{R}_t(c_t(x))) + \bs{\epsilon} \npreceq   \max (\bs{R}_{s_1}(c_{s_1}(x^*))) - \bs{\epsilon} ~,
\end{align*}
or equivalently, 
\begin{align*}
\min (\bs{R}_t(c_t(x))) + 2\bs{\epsilon} \npreceq   \max (\bs{R}_{s_1}(c_{s_1}(x^*))) ~.
\end{align*}
By Lemma \ref{lem:confidentdesign}, this implies that $\bs{f} (x) + 2\bs{\epsilon} \npreceq \bs{f}(x^*)$, contradicting \eqref{lemma8disp}. Hence, it remains to consider the case $c_t(y_1)\notin\mathcal{W}_t$.

In general, let the finite sequence of nodes $[y_1,y_2,\ldots,y_n]^\T$ be such that all the nodes $y_1$ to $y_{n-1}$ are discarded, meaning that, for each $i\in[n-1]$, the node $y_i$ stopped being part of $\PP_{pess,s_{i+1}}$ at some round $s_{i+1}$ by satisfying
\begin{align*}
\min (\bs{R}_{s_{i+1}}(c_{s_{i+1}}(y_i))) \prec \min (\bs{R}_{s_{i+1}}(y_{i+1}))~.
\end{align*}
Suppose that $c_t(y_n)$ is active in round $t$, meaning that $c_t(y_n) \in \mathcal{W}_t$ (note that we can always find such an $n$ since the algorithm terminates and that we choose the sequence such that it satisfies that condition; furthermore, we have $s_1 < s_2 < \ldots < s_n \leq t$). Since we have $c_t(y_n) \in \mathcal{W}_t$, by \eqref{eq7131}, we obtain
\begin{align}
\min (\bs{R}_t(c_t(x))) + \bs{\epsilon} \npreceq \max (\bs{R}_t(c_t(y_n)))\label{eq7133}~.
\end{align}
Note that \eqref{eq7132} implies 
\begin{align*}
\max (\bs{R}_{s_1}(c_{s_1}(x^*))) - \bs{\epsilon} & \preceq \min (\bs{R}_{s_1}(y_1)) \prec  \min (\bs{R}_{s_2}(c_{s_2}(y_1))) \\  
& \prec \min ( \bs{R}_{s_2}(y_2)) \prec \min (\bs{R}_{s_3}(c_{s_3}(y_2))) \\    & \prec  \ldots \\
& \prec  \min (\bs{R}_{s_n}(y_n))  \prec \min (\bs{R}_{t}(c_t(y_n))) ~,
\end{align*}
which in turn implies that 
\begin{align*}
\max (\bs{R}_{s_1}(c_{s_1}(x^*))) - \bs{\epsilon}  \prec \min ( \bs{R}_{t}(c_t(y_n))) \preceq  \max (\bs{R}_t(c_t(y_n))) ~.
\end{align*}
This, combined with \eqref{eq7133} implies
\begin{align*}
\min (\bs{R}_t(c_t(x))) + 2\bs{\epsilon} \npreceq \max (\bs{R}_{s_1}(c_{s_1}(x^*)))~.
\end{align*}
Thus, in all cases we have that $\bs{f}(x) + 2\bs{\epsilon} \npreceq  \bs{f}(x^*)$, contradicting our assumption. We conclude that $x\notin\hat{P}$.\qed

Next, we show that Adaptive $\bs{\epsilon}$-PAL returns an $\bs{\epsilon}$-accurate Pareto set when it terminates. 
\begin{lem}\label{lem:returncover} \textbf{(Adaptive $\bs{\epsilon}$-PAL returns an $\bs{\epsilon}$-accurate Pareto set)} Under events $\FF_1$ and $\FF_2$, the set $\hat{P}$ returned by Adaptive $\bs{\epsilon}$-PAL is an $\bs{\epsilon}$-accurate Pareto set.
\end{lem}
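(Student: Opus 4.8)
The plan is to unpack the definition of an $\bs{\epsilon}$-accurate Pareto set (Definition~\ref{defn:epsilonaccurate}) into the two conditions contained in the definition of an $\bs{\epsilon}$-covering (Definition~\ref{defn:epsiloncovering}) applied with $\mathcal{C}=\bs{f}(\hat P)$ and $\mathcal{S}=\ZZ_{\bs{\epsilon}}(\XX)$: first, the subset condition $\bs{f}(\hat P)\subseteq\ZZ_{\bs{\epsilon}}(\XX)$, and second, the covering condition that every $\bs{\mu}\in\ZZ_{\bs{\epsilon}}(\XX)$ is $\bs{\epsilon}$-dominated by some element of $\bs{f}(\hat P)$. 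The subset condition is exactly the contrapositive of Lemma~\ref{lem:nobadnodes}: any $x\in\hat P$ has $\bs{f}(x)\in\ZZ_{\bs{\epsilon}}(\XX)$, so $\bs{f}(\hat P)\subseteq\ZZ_{\bs{\epsilon}}(\XX)$; non-emptiness of $\hat P$ follows from the termination analysis. It remains to establish the covering condition, which is the substance of the lemma.

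For the covering condition I would fix an arbitrary $\bs{\mu}\in\ZZ_{\bs{\epsilon}}(\XX)$ and produce $x\in\hat P$ with $\bs{\mu}\preceq_{\bs{\epsilon}}\bs{f}(x)$. By Definition~\ref{epsPareto}, $\bs{\mu}\in\bs{f}(\OO(\XX))-\R^m_+$, so there exists a Pareto optimal $x^\ast\in\OO(\XX)$ with $\bs{\mu}\preceq\bs{f}(x^\ast)$. Hence it suffices to prove the sharper claim that for every $x^\ast\in\OO(\XX)$ there is $x\in\hat P$ with $\bs{f}(x^\ast)\preceq\bs{f}(x)+\bs{\epsilon}$; concatenating $\bs{\mu}\preceq\bs{f}(x^\ast)\preceq\bs{f}(x)+\bs{\epsilon}$ then gives $\bs{\mu}\preceq_{\bs{\epsilon}}\bs{f}(x)$, as needed.

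To prove the sharper claim I would track the unique cell containing $x^\ast$ across rounds. Since the algorithm terminates in finite time with $\SSS_{t_s}=\emptyset$, the finest cell containing $x^\ast$ is eventually either moved to the decided set or discarded. If its node ends up in $\hat{\PP}$, then $x^\ast\in\hat P$ and we take $x=x^\ast$, for which $\bs{f}(x^\ast)\preceq\bs{f}(x^\ast)+\bs{\epsilon}$ holds trivially. Otherwise the cell is discarded at some round $s_1$, and the discarding rule (line~9) supplies $y_1\in\PP_{pess,s_1}$ with $\max(\bs{R}_{s_1}(c_{s_1}(x^\ast)))\preceq_{\bs{\epsilon}}\min(\bs{R}_{s_1}(y_1))$. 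Placing $\bs{f}(x^\ast)\in\bs{R}_{s_1}(c_{s_1}(x^\ast))$ via Lemma~\ref{lem:confidentdesign} yields $\bs{f}(x^\ast)\preceq\min(\bs{R}_{s_1}(y_1))+\bs{\epsilon}$. I would then run the same pessimistic chain $y_1,y_2,\ldots,y_n$ as in the proof of Lemma~\ref{lem:nobadnodes}: whenever the current node leaves the pessimistic Pareto set it is succeeded by a node whose $\min(\bs{R})$ dominates it, while the nestedness $\bs{R}_0\supseteq\bs{R}_1\supseteq\cdots$ keeps the pessimistic corner non-decreasing along refinements. Finiteness of the run forces the chain to terminate at a node $z=c_{t_s}(y_n)$ still active at termination, hence in $\hat{\PP}$ because $\SSS_{t_s}=\emptyset$. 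Concatenating the chain inequalities gives $\min(\bs{R}_{s_1}(y_1))\preceq\min(\bs{R}_{t_s}(z))$, and a final application of Lemma~\ref{lem:confidentdesign} to any $x$ in the cell of $z$ gives $\min(\bs{R}_{t_s}(z))\preceq\bs{f}(x)$, so $\bs{f}(x^\ast)\preceq\bs{f}(x)+\bs{\epsilon}$ with $x\in\hat P$.

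The main obstacle is the bookkeeping of this chain: verifying that the succession of pessimistically dominating nodes always terminates at a node that survives into the decided set rather than being discarded or refined away, and that the pessimistic corners stay monotone through the relabelings caused by refinement (which preserve $\bs{R}$). This is precisely the delicate combinatorial argument already carried out for Lemma~\ref{lem:nobadnodes}, so the real work here is redirecting its conclusion from ``$x^\ast$ is not returned'' to ``a design that $\bs{\epsilon}$-dominates $x^\ast$ is returned''. A secondary point requiring care is that Definition~\ref{defn:epsiloncovering} demands $\mathcal{C}\subseteq\mathcal{S}$, which is exactly why Lemma~\ref{lem:nobadnodes} must be invoked alongside the covering argument rather than the covering argument alone.
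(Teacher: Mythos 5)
Your proposal is correct and follows essentially the same route as the paper's proof: reduce the covering condition to the claim that every Pareto optimal design is $\bs{\epsilon}$-dominated by some design in $\hat{P}$, establish that claim by chaining the discarding rule with successors in the pessimistic Pareto set (using Lemma~\ref{lem:confidentdesign} and the nestedness of the cumulative confidence hyper-rectangles, with finiteness of the run forcing the chain to end in $\hat{\PP}$), and invoke Lemma~\ref{lem:nobadnodes} for the subset condition $\bs{f}(\hat{P})\subseteq\ZZ_{\bs{\epsilon}}(\XX)$.
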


\textit{Proof.} Let $x\in\mathcal{O}(\XX)$. We claim that there exists $z\in\hat{P}$ such that $\bs{f}(x)\preceq_{\bs{\epsilon}}\bs{f}(z)$. Note that if $x\in\hat{P}$, then the claim holds trivially. Let us assume that $x\not\in \hat{P}$. Then, there exists some round $s_1 \in\mathbb{N}$ at which Adaptive $\bs{\epsilon}$-PAL discards the node $c_{s_1}(x)$. By line 9 of the algorithm pseudocode, there exists a node $z_1 \in \PP_{pess,s_1}$ such that 
\begin{align*}
\max (\bs{R}_{s_1} (c_{s_1}(x))) \preceq \min (\bs{R}_{s_1}(z_1)) + \bs{\epsilon} ~.
\end{align*}
By Lemma \ref{lem:confidentdesign}, we have 
\begin{align}
\bs{f}(x) \preceq \max (\bs{R}_{s_1}(c_{s_1}(x))) \preceq \min (\bs{R}_{s_1}(z_1)) + \bs{\epsilon}  \preceq \bs{f}(z_1) + \bs{\epsilon}\label{eq7121} ~.
\end{align}
Hence, if $z_1\in \hat{P}$, then the claim holds with $z=z_1$.

Now, suppose that $z_1\not\in \hat{P}$. Hence, at some round $s_2\geq s_1$, the node $c_{s_2}(z_1)$ associated with $z_1$ must be discarded by Adaptive $\bs{\epsilon}$-PAL. By line 8 of the algorithm pseudocode, we know that a node in $\PP_{pess,s_2}$ cannot be discarded at round $s_2$. Hence, $c_{s_2}(z_1) \not\in \PP_{pess,s_2}$. Then, by Definition \ref{defn:pessPareto}, there exists a node $z_2\in\AAA_{s_2}$ such that 
\begin{align*}
\min (\bs{R}_{s_2}(c_{s_2}(z_1))) \prec \min  (\bs{R}_{s_2}(z_2)) ~,
\end{align*}
which, by \eqref{eq7121},  implies that
\begin{align*}
\bs{f}(x) & \preceq \min (\bs{R}_{s_1}(z_1)) + \bs{\epsilon}\\
&=\min (\bs{R}_{s_1}(c_{s_2}(z_1))) + \bs{\epsilon}\\
& \preceq \min (\bs{R}_{s_2}(c_{s_2}(z_1))) + \bs{\epsilon} \\
&\prec \min  (\bs{R}_{s_2}(z_2)) + \bs{\epsilon} \\
&\preceq \bs{f}(z_2) + \bs{\epsilon}~.
\end{align*}
Here, the equality the follows since the child node inherits the cumulative confidence hyper-rectangles of its parents at prior rounds (see refining/evaluating phase in Section \ref{algorithm}), and the second inequality follows from the fact that the cumulative confidence hyper-rectangles shrink with time. 
Hence, $\bs{f}(x)\preceq_{\bs{\epsilon}}\bs{f}(z_2)$. If $z_2\in \hat{P}$, then the claim holds with $z=z_2$. 

Suppose that $z_2\notin\hat{P}$. Then, the above process continues in a similar fashion. Suppose that the process never yields a node in $\hat{P}$. Since the algorithm is guaranteed to terminate by Lemma \ref{lem:termination}, the process stops and yields a finite sequence $[z_1,\ldots,z_n]^\T$ of designs such that $z_1\notin\hat{P}, \ldots, z_n\notin\hat{P}$. For each $i\in[n-1]$, let $s_{i+1}\in\mathbb{N}$ be the round at which $c_{s_{i+1}}(z_i)$ is discarded; by the above process, we have
\begin{align*}
\bs{f}(x)  \preceq  \min (\bs{R}_{s_{i+1}}(z_{i+1})) + \bs{\epsilon} ~.
\end{align*}
In particular, $\bs{f}(x)  \preceq  \min (\bs{R}_{s_{n}}(z_{n})) + \bs{\epsilon}$. Since $z_n\notin\hat{P}$, there exists a round $s_{n+1}\geq s_n$ at which the node $c_{s_{n+1}}(z_n)$ is discarded. Consequently, the node $c_{s_{n+1}}(z_n)\notin\PP_{pess,s_{n+1}}$. This implies that the condition of Definition \ref{defn:pessPareto} is violated at round $s_{n+1}$, that is, there exists $z_{n+1} \in \AAA_{s_{n+1}}$ such that 
\begin{align}
\min (\bs{R}_{s_{n+1}}(c_{s_{n+1}}(z_n))) \prec \min  (\bs{R}_{s_{n+1}}(z_{n+1}))\label{eq7122} ~.
\end{align}
At this point, it is important to clarify that $z_{n+1}$ cannot belong to the sequence $[z_1,\ldots,z_n]^\T$ of designs. Note that, by assumption, we have
\begin{align*}
    \bs{f}(x) & \preceq \min(\bs{R}_{s_1}(z_1)) +\bs{\epsilon} \preceq \min(\bs{R}_{s_2}(c_{s_2}(z_1))) +\bs{\epsilon} \\ & \prec \min(\bs{R}_{s_2}(z_2)) +\bs{\epsilon} \preceq \min(\bs{R}_{s_3}(c_{s_3}(z_2))) +\bs{\epsilon} \\ & \prec \ldots \\ & \prec \min(\bs{R}_{s_n}(z_n)) +\bs{\epsilon} \preceq \min(\bs{R}_{s_{n+1}}(c_{s_{n+1}}(z_n))) +\bs{\epsilon} \\ & \prec \min(\bs{R}_{s_{n+1}}(z_{n+1})) +\bs{\epsilon}~.
\end{align*}
Due to the strict domination relation between the nodes in the sequence (which is determined by Definition \ref{defn:pessPareto}), no two nodes can have identical confidence hyper-rectangles at the time of removal from $\PP_{pess,s}$, for $s\in \{s_1,\ldots,s_{n+1}\}$. Moreover, such a condition implies a strict ordering of nodes in the above sense, which implies that $z_{n+1}\not\in \{z_1,\ldots,z_n\}$.
Hence, \eqref{eq7122} and the earlier inequalities imply that
\begin{align*}
\bs{f}(x) &\preceq \min (\bs{R}_{s_n}(z_n)) + \bs{\epsilon} \\
&= \min(\bs{R}_{s_n}(c_{s_{n+1}}(z_n)))+ \bs{\epsilon}\\
&\preceq \min (\bs{R}_{s_{n+1}}(c_{s_{n+1}}(z_n))) + \bs{\epsilon}\\
& \prec \min  (\bs{R}_{s_{n+1}}(z_{n+1})) + \bs{\epsilon} \\
&\preceq \bs{f}(z_{n+1}) + \bs{\epsilon}~,
\end{align*}
where we have again used the shrinking property of the cumulative confidence hyper-rectangles in the second inequality. This means that $z_{n+1}$ is another node that $\bs{\epsilon}$-dominates $x$ but it is not in the finite sequence $[z_1,\ldots,z_n]^\T$. This contradicts our assumption that the process stops after finding $n$ designs. Hence, at least one of the designs in $[z_1,\ldots,z_n]^\T$ is in $\hat{P}$; let us call it $z$. This completes the proof of the claim.

Next, let $\bs{\mu}\in\bs{f}(\mathcal{O}(\XX))-\Real^m_+$. Then, there exist $x\in\mathcal{O}(\XX)$ and $\bs{\mu}^{\prime}\in\R^m_+$ such that $\bs{\mu}=\bs{f}(x)-\bs{\mu}^\prime$. By the above claim, there exists $z\in\hat{P}$ such that $\bs{f}(x)\preceq \bs{f}(z)+\bs{\epsilon}$. Hence,
\[
\bs{\mu}\preceq\bs{\mu}+\bs{\mu}^\prime=\bs{f}(x)\preceq \bs{f}(z)+\bs{\epsilon}~.
\]
This shows that for every $\bs{\mu}\in \bs{f}(\mathcal{O}(\XX))-\R^m_+$, there exists $z\in\hat{P}$ such that $\bs{\mu}\preceq_{\bs{\epsilon}}\bs{f}(z)$. By Definition \ref{epsPareto}, we have $\mathcal{Z}_{\bs{\epsilon}}(\XX)\subseteq \bs{f}(\mathcal{O}(\XX))-\Real^m_+$. Hence, for every $\bs{\mu}\in \mathcal{Z}_{\bs{\epsilon}}(\XX)$, there exists $\bs{\mu}^{\prime\prime}\in \bs{f}(\hat{P})$ such that $\bs{\mu} \preceq_{\bs{\epsilon}}\bs{\mu}^{\prime\prime}$. On the other hand, since we work under $\mathcal{F}_1\cap\mathcal{F}_2$, Lemma \ref{lem:nobadnodes} implies that $\bs{f}(\hat{P})\subseteq \mathcal{Z}_{\bs{\epsilon}}(\XX)$. Therefore, $\bs{f}(\hat{P})$ is an $\bs{\epsilon}$-covering of $\mathcal{Z}_{\bs{\epsilon}}(\XX)$ (Definition \ref{defn:epsiloncovering} ), that is, $\hat{P}$ is an $\bs{\epsilon}$-accurate Pareto set (Definition \ref{defn:epsilonaccurate}).
\qed

\subsection{Derivation of the information-type sample complexity bound} \label{sec:infobounds}

In this section, we provide sample complexity upper bounds that depend on the maximum information gain from observing the evaluated designs.

We begin with an auxiliary lemma whose statement is straightforward for the case $m=1$.

\begin{lem}\label{spd}
	Let $T\geq 1$, $x\in\XX$. The matrices $\bs{K}_{[T]}+\bs{\Sigma}_{[T]}$ and $\bs{k}_{[T]}(x)(\bs{K}_{[T]}+\bs{\Sigma}_{[T]})^{-1}\bs{k}_{[T]}(x)^\T$ are symmetric and positive definite. In particular, $\of{\bs{k}_{[T]}(x)(\bs{K}_{[T]}+\bs{\Sigma}_{[T]})^{-1}\bs{k}_{[T]}(x))^\T}^{jj}>0$ for each $j\in[m]$.
\end{lem}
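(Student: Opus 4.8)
The plan is to handle the three assertions in order, clearing away the routine linear algebra first and then isolating the one genuinely delicate point.

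First I would establish symmetry and positive definiteness of $A:=\bs{K}_{[T]}+\bs{\Sigma}_{[T]}$. Symmetry of $\bs{K}_{[T]}$ follows blockwise from Definition \ref{defn:moutputgp}: its $(i,j)$ block is $\bs{k}(\tilde{x}_i,\tilde{x}_j)$, and since $\bs{k}(\tilde{x}_j,\tilde{x}_i)=\mathbb{E}[(\bs{f}(\tilde{x}_j)-\bs{\mu})(\bs{f}(\tilde{x}_i)-\bs{\mu})^\T]=\bs{k}(\tilde{x}_i,\tilde{x}_j)^\T$, transposing the whole block matrix returns $\bs{K}_{[T]}$; moreover $\bs{\Sigma}_{[T]}=\sigma^2\bs{I}_{mT}$ is plainly symmetric. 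For positive definiteness, $\bs{K}_{[T]}$ is the covariance matrix of the Gaussian vector $\bs{f}_{[T]}$ and hence positive semidefinite, so adding $\sigma^2\bs{I}_{mT}$ with $\sigma^2>0$ shifts every eigenvalue up by $\sigma^2$ and yields $A\succeq\sigma^2\bs{I}_{mT}\succ 0$. In particular $A$ is invertible and $A^{-1}$ is again symmetric and positive definite.

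Writing $B:=\bs{k}_{[T]}(x)\in\R^{m\times mT}$, the second matrix is $M=BA^{-1}B^\T$. Symmetry is immediate, since $(BA^{-1}B^\T)^\T=B(A^{-1})^\T B^\T=BA^{-1}B^\T$. For the quadratic form, any $v\in\R^m$ gives $v^\T M v=(B^\T v)^\T A^{-1}(B^\T v)\ge 0$ because $A^{-1}\succ 0$, so $M$ is at least positive semidefinite; applying the same computation to a standard basis vector $\bs{e}_j$ shows $M^{jj}=(B^\T \bs{e}_j)^\T A^{-1}(B^\T \bs{e}_j)\ge 0$.

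The hard part is upgrading these to the strict inequalities $M\succ 0$ and $M^{jj}>0$. From the displayed quadratic form, $v^\T M v=0$ forces $B^\T v=0$, so $M\succ 0$ is equivalent to $B=\bs{k}_{[T]}(x)$ having full row rank $m$, and $M^{jj}>0$ is equivalent to the $j$th row of $B$ being nonzero. This is precisely where the positive definiteness of $\bs{k}$ must enter. I would reformulate the obstruction probabilistically: $B^\T v=0$ means $v^\T\bs{k}(x,\tilde{x}_\tau)=0$ for every $\tau$, i.e. the scalar zero-mean Gaussian $v^\T\bs{f}(x)$ is uncorrelated with, hence independent of, all of $\bs{f}_{[T]}$, and I would rule this out for $v\neq 0$ by appealing to the nondegeneracy encoded in the joint Gram matrix of $\{x,\tilde{x}_1,\ldots,\tilde{x}_T\}$ induced by $\bs{k}$. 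I expect this full-row-rank claim to be the main obstacle: repeated or coincident design points and the matrix-valued nature of $\bs{k}$ make it more subtle than the scalar $m=1$ case, where $M$ is a single number that is positive exactly when some $k(x,\tilde{x}_\tau)\neq 0$. The step genuinely relies on the strict positive definiteness built into the class $\mathcal{K}$ rather than on mere positive semidefiniteness, so I would make that dependence explicit.
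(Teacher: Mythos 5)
Your handling of the first two assertions matches the paper's proof: $\bs{K}_{[T]}$ is the covariance matrix of $\bs{f}_{[T]}$, hence symmetric positive semidefinite; adding $\sigma^2\bs{I}_{mT}$ makes the sum positive definite; and the identity $\bs{w}^\T M\bs{w}=(\bs{k}_{[T]}(x)^\T\bs{w})^\T(\bs{K}_{[T]}+\bs{\Sigma}_{[T]})^{-1}(\bs{k}_{[T]}(x)^\T\bs{w})$ with $M=\bs{k}_{[T]}(x)(\bs{K}_{[T]}+\bs{\Sigma}_{[T]})^{-1}\bs{k}_{[T]}(x)^\T$ gives $M\succeq 0$ and $M^{jj}\geq 0$. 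The gap is that you never carry out the step you yourself identify as the crux: showing that $\bs{k}_{[T]}(x)^\T\bs{w}\neq 0$ for every $\bs{w}\neq 0$, i.e.\ that $\bs{k}_{[T]}(x)$ has full row rank. You only outline a plan (``appeal to the nondegeneracy encoded in the joint Gram matrix''), so as written your argument establishes the semidefinite versions of the claims but not the strict inequalities in the statement.

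That said, your instinct about where the difficulty sits is correct, and it is worth noting that the paper's proof does not resolve it either: it writes the same quadratic form and simply asserts that it is $>0$ for $\bs{w}\neq 0$, which tacitly assumes $\bs{k}_{[T]}(x)^\T\bs{w}\neq 0$. In fact no argument of the kind you sketch can work from positive definiteness of $\bs{k}$ alone: if $\bs{f}(x)$ happens to be independent of $(\bs{f}(\tilde{x}_1),\ldots,\bs{f}(\tilde{x}_T))$, the joint covariance matrix of all $T+1$ points is block diagonal and can perfectly well be positive definite, yet then $\bs{k}_{[T]}(x)=\bs{0}$ and $M=\bs{0}$, so the strict conclusion fails. (The paper's own example in Section 4.2, with designs chosen so that observations at distinct designs are uncorrelated, is essentially of this type.) Fortunately, the only place Lemma~\ref{spd} is invoked is in Lemma~\ref{lem:infogap}, to justify $(\sigma^j_{\tau-1}(\cdot))^2\leq k^{jj}(\cdot,\cdot)\leq 1$, and for that the inequality $M^{jj}\geq 0$ --- which both you and the paper do prove --- is all that is needed. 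So your proof is adequate for the lemma's downstream use, but it does not prove the lemma as stated, and the missing full-row-rank step should either be added as an explicit hypothesis on $\bs{k}$ or the statement weakened to positive semidefiniteness.
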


\begin{proof}
	Note that $\bs{K}_{[T]}$ is the covariance matrix of the random vector $\bs{f}_{[T]}=[\bs{f}(\tilde{x}_1)^\T,\ldots,\bs{f}(\tilde{x}_T)^\T]^\T$; hence, it is symmetric and positive semidefinite. Being a diagonal matrix with positive entries, $\bs{\Sigma}_{[T]}$ is symmetric and positive definite. Hence, $\bs{K}_{[T]}+\bs{\Sigma}_{[T]}$ is symmetric and positive definite; and so is $(\bs{K}_{[T]}+\bs{\Sigma}_{[T]})^{-1}$. The latter implies that $\bs{k}_{[T]}(x)(\bs{K}_{[T]}+\bs{\Sigma}_{[T]})^{-1}\bs{k}_{[T]}(x)^\T$ is symmetric and that
	\[
	\bs{w}^\T\of{\bs{k}_{[T]}(x)(\bs{K}_{[T]}+\bs{\Sigma}_{[T]})^{-1}\bs{k}_{[T]}(x)^\T}\bs{w}=(\bs{k}_{[T]}(x)^\T \bs{w})^\T(\bs{K}_{[T]}+\bs{\Sigma}_{[T]})^{-1}(\bs{k}_{[T]}(x)^\T \bs{w})>0
	\]
	for every $\bs{w}\in\Real^m$ with $\bs{w}\neq 0$. Therefore, $\bs{a}=\bs{k}_{[T]}(x)(\bs{K}_{[T]}+\bs{\Sigma}_{[T]})^{-1}\bs{k}_{[T]}(x)^\T$ is positive definite. Let $\lambda_{(1)}>0$ be its minimum eigenvalue. Let $j\in[m]$. By the variational characterization of minimum eigenvalue, we have
	\[
	a^{jj}=\bs{e}_j^\T\bs{a}\bs{e}_j\geq \min_{\bs{w}\in\Real^m\colon \norm{\bs{w}}_2=1}\bs{w}^\T\bs{a}\bs{w}=\lambda_{(1)}>0,
	\]
	where $a^{jj}=\of{\bs{k}_{[T]}(x)(\bs{K}_{[T]}+\bs{\Sigma}_{[T]})^{-1}\bs{k}_{[T]}(x))^\T}^{jj}$ and $\bs{e}_{j}$ is the $j^{\text{th}}$ unit vector in $\Real^m$. This completes the proof.
\end{proof}

Recall from \eqref{diameternode} that, for each $t\geq 1$ such that $S_t\neq \emptyset$, $\overline{\omega}_t=\omega_t(x_{h_t,i_t})$ denotes the diameter of the selected node $x_{h_t,i_t}\in\mathcal{A}_t$ at round $t$.

\begin{lem}\label{lem:infogap} Let  $\delta \in (0,1)$. We have
	\begin{align*}
	\sum_{\tau=1}^{\tau_s} \overline{\omega}_{t_\tau} \leq \sqrt{\tau_s \left( 16\beta_{\tau_s} \sigma^2 C m \gamma_{\tau_s} \right) }~,
	\end{align*}
	where $C = \sigma^{-2}/\log (1+\sigma^{-2})$ and $\gamma_{\tau_s}$ is the maximum information gain in $\tau_s$ evaluations as defined at the end of Section \ref{background}.

\end{lem}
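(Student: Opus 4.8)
The plan is to reduce the sum of uncertainty diameters to a sum of squared posterior standard deviations via Cauchy--Schwarz, and then convert that sum into the maximum information gain $\gamma_{\tau_s}$ using Proposition~\ref{gainbound}. Writing $\tilde{x}_\tau=x_{t_\tau}$ for the $\tau$th evaluated design and recalling that the posterior used at the evaluating round $t_\tau$ rests on $\tau_{t_\tau}=\tau-1$ prior evaluations, I would first apply Cauchy--Schwarz in the form $\sum_{\tau=1}^{\tau_s}\overline{\omega}_{t_\tau}\leq\sqrt{\tau_s\sum_{\tau=1}^{\tau_s}\overline{\omega}_{t_\tau}^2}$, so that everything hinges on controlling $\overline{\omega}_{t_\tau}^2$ by the posterior variances at $\tilde{x}_\tau$.

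The key per-round estimate, which I expect to be the main obstacle, is to show that at an \emph{evaluating} round $t_\tau$ one has $\overline{\omega}_{t_\tau}^2\leq 16\,\beta_{\tau-1}\sum_{j=1}^m(\sigma^j_{\tau-1}(\tilde{x}_\tau))^2$. Since $\bs{R}_t(x_{h_t,i_t})\subseteq\bs{Q}_t(x_{h_t,i_t})$, the diameter is bounded by $\norm{\bs{U}_t(x_{h_t,i_t})-\bs{L}_t(x_{h_t,i_t})}_2$, and from the definitions of the upper and lower indices $U^j_t-L^j_t\leq 2\beta^{1/2}_{\tau_t}\sigma^j_{\tau_t}(x_{h_t,i_t})+2V_{h_t}$. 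Expanding $\sum_j(2\beta^{1/2}_{\tau_t}\sigma^j_{\tau_t}+2V_{h_t})^2$ and invoking the \emph{evaluating} condition $\beta^{1/2}_{\tau_t}\norm{\bs{\sigma}_{\tau_t}(x_{h_t,i_t})}_2>\norm{\bs{V}_{h_t}}_2=\sqrt{m}\,V_{h_t}$ lets me dominate both the cross term (after Cauchy--Schwarz on $\sum_j\sigma^j_{\tau_t}$) and the pure $V_{h_t}^2$ term by $\beta_{\tau_t}\norm{\bs{\sigma}_{\tau_t}}_2^2$. This is precisely the computation in the proof of Lemma~\ref{lem:hmax}, but with the roles of $\sigma^j$ and $V_{h}$ interchanged, since now $\sigma$ dominates. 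Using that $\beta_\tau$ is nondecreasing, I would then replace $\beta_{\tau-1}$ by $\beta_{\tau_s}$.

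Finally, I would linearize the variances against the information gain. Under Assumption~\ref{ass:cov} the bounded-variance property $k^{jj}(x,x)\leq 1$ forces $(\sigma^j_{\tau-1}(\tilde{x}_\tau))^2\leq 1$, so the elementary inequality $s\leq C\sigma^2\log(1+\sigma^{-2}s)$ for $s\in[0,1]$ with $C=\sigma^{-2}/\log(1+\sigma^{-2})$ (which follows from the monotonicity of $x\mapsto\log(1+x)/x$ on $(0,\infty)$) gives $\sum_{\tau,j}(\sigma^j_{\tau-1}(\tilde{x}_\tau))^2\leq C\sigma^2\sum_{\tau,j}\log(1+\sigma^{-2}(\sigma^j_{\tau-1}(\tilde{x}_\tau))^2)$. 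Proposition~\ref{gainbound} then bounds the right-hand sum by $2m\,I(\bs{y}_{[\tau_s]};\bs{f}_{[\tau_s]})\leq 2m\gamma_{\tau_s}$. Substituting back through the Cauchy--Schwarz bound, the numerical constant collecting the factor from the diameter estimate and the factor from Proposition~\ref{gainbound} produces $\sum_{\tau=1}^{\tau_s}\overline{\omega}_{t_\tau}\leq\sqrt{\tau_s\left(16\beta_{\tau_s}\sigma^2 C m\gamma_{\tau_s}\right)}$, as claimed. The only delicate points are the bookkeeping between the round index $t_\tau$ and the evaluation index $\tau$ (so that the variance is indexed by $\tau-1$ and paired with $\tilde{x}_\tau$), and verifying that it is the evaluating condition, rather than the refining condition, that supplies the domination of the $V_{h_t}$ contributions.
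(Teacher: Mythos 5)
Your proposal is correct and follows essentially the same route as the paper's proof: the same Cauchy--Schwarz reduction to $\sum_\tau\overline{\omega}_{t_\tau}^2$, the same per-round bound $\overline{\omega}_{t_\tau}^2\leq 16\beta\sum_j(\sigma^j_{\tau-1})^2$ obtained by expanding $\sum_j(2\beta^{1/2}\sigma^j+2V_{h})^2$ and using the evaluating condition to dominate the $V_h$ contributions, and the same linearization $s\leq C\log(1+s)$ feeding into Proposition~\ref{gainbound}. You also correctly identify the two delicate points (the $\tau$ versus $t_\tau$ indexing and the fact that it is the evaluating, not refining, condition that does the work), which is exactly where the paper's displayed chain \eqref{eq443}--\eqref{eq452} puts its care.
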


\begin{proof} 
	Let $\tau\in[\tau_s]$ and $j \in [m]$. Note that the $\tau$th evaluation is made at round $t_\tau$ and we have $\tau_{t_\tau}=\tau-1$. Hence, we have
	\begin{align*}
	&\overline{B}^j_{t_\tau}(x_{h_{t_\tau},i_{t_\tau}})  - \underline{B}^j_{t_\tau}(x_{h_{t_\tau},i_{t_\tau}}) \\
	&= \min \{ \mu^j_{\tau_{t_\tau}} (x_{h_{t_\tau},i_{t_\tau}}) + \beta^{1/2}_{\tau_{t_\tau}} \sigma^j_{\tau_{t_\tau}}(x_{h_{t_\tau},i_{t_\tau}}), \, \mu^j_{\tau_{t_\tau}} (p(x_{h_{t_\tau},i_{t_\tau}})) + \beta^{1/2}_{\tau_{t_\tau}} \sigma^j_{\tau_{t_\tau}}(p(x_{h_{t_\tau},i_{t_\tau}})) + V_{h_{t_\tau}-1} \} \\ 
	&\quad  - \max \{ \mu^j_{\tau_{t_\tau}} (x_{h_{t_\tau},i_{t_\tau}}) - \beta^{1/2}_{\tau_{t_\tau}} \sigma^j_{\tau_{t_\tau}}(x_{h_{t_\tau},i_{t_\tau}}), \, \mu^j_{\tau_{t_\tau}} (p(x_{h_{t_\tau},i_{t_\tau}}))-   \beta^{1/2}_{\tau_{t_\tau}} \sigma^j_{\tau_{t_\tau}}(p(x_{h_{t_\tau},i_{t_\tau}})) - V_{h_{t_\tau}-1} \}\\
	&= \min \{ \mu^j_{\tau-1} (x_{h_{t_\tau},i_{t_\tau}}) + \beta^{1/2}_{\tau-1} \sigma^j_{\tau-1}(x_{h_{t_\tau},i_{t_\tau}}), \, \mu^j_{\tau-1} (p(x_{h_{t_\tau},i_{t_\tau}})) + \beta^{1/2}_{\tau-1} \sigma^j_{\tau-1}(p(x_{h_{t_\tau},i_{t_\tau}})) + V_{h_{t_\tau}-1} \} \\ 
	&\quad  - \max \{ \mu^j_{\tau-1} (x_{h_{t_\tau},i_{t_\tau}}) - \beta^{1/2}_{\tau-1} \sigma^j_{\tau-1}(x_{h_{t_\tau},i_{t_\tau}}), \, \mu^j_{\tau-1} (p(x_{h_{t_\tau},i_{t_\tau}}))-   \beta^{1/2}_{\tau-1} \sigma^j_{\tau-1}(p(x_{h_{t_\tau},i_{t_\tau}})) - V_{h_{t_\tau}-1} \}~.
	\end{align*}
	We can bound this difference in two ways, so that we can use information-type bounds and dimension-type bounds. In this result, we focus on the information-type bounds and write
	\begin{align}
	&\overline{B}^j_{t_\tau}(x_{h_{t_\tau},i_{t_\tau}})  - \underline{B}^j_{t_\tau}(x_{h_{t_\tau},i_{t_\tau}}) \notag\\
	& \leq \mu^j_{\tau-1} (x_{h_{t_\tau},i_{t_\tau}}) + \beta^{1/2}_{\tau-1} \sigma^j_{\tau-1}(x_{h_{t_\tau},i_{t_\tau}}) - \mu^j_{\tau-1} (x_{h_{t_\tau},i_{t_\tau}}) + \beta^{1/2}_{\tau-1} \sigma^j_{\tau-1}(x_{h_{t_\tau},i_{t_\tau}})\notag \\
	& = 2\beta^{1/2}_{\tau-1} \sigma^j_{\tau-1}(x_{h_{t_\tau},i_{t_\tau}})~.\label{eq4401}
	\end{align}
	Since the diagonal distance of the hyper-rectangle $\bs{Q}_{t_\tau}(x_{h_{t_\tau},i_{t_\tau}})$ is the largest distance between any two points in the hyper-rectangle, we have
	\begin{align}
 \sum_{\tau=1}^{\tau_s} \overline{\omega}^2_{t_\tau} &=\sum_{\tau=1}^{\tau_s}\max_{y,y'\in \bs{R}_{t_\tau}(x_{h_{t_\tau},i_{t_\tau}})}\norm{y-y'}_2^2\notag\\
	&\leq  \sum_{\tau=1}^{\tau_s}\max_{y,y'\in \bs{Q}_{t_\tau}(x_{h_{t_\tau},i_{t_\tau}})}\norm{y-y'}_2^2\notag\\
	& =\sum_{\tau=1}^{\tau_s} \norm{ \bs{U}_{t_\tau}(x_{h_{t_\tau},i_{t_\tau}}) - \bs{L}_{\tau_t}(x_{h_{t_\tau},i_{t_\tau}})}^2_2\notag\\ 
	& =  \sum_{\tau=1}^{\tau_s} \sum^m_{j=1} \left( U^j_{t_\tau}(x_{h_{t_\tau},i_{t_\tau}}) - L^j_{t_\tau}(x_{h_{t_\tau},i_{t_\tau}})\right)^2\notag\\
	& =  \sum_{\tau=1}^{\tau_s} \sum^m_{j=1} \left( \overline{B}^j_{t_\tau}(x_{h_{t_\tau},i_{t_\tau}}) - \underline{B}^j_{t_\tau}(x_{h_{t_\tau},i_{t_\tau}}) +2V_{h_{t_\tau}}\right)^2\label{eq443}\\
	& \leq   \sum_{\tau=1}^{\tau_s} \sum^m_{j=1} \left( 2 \beta^{1/2}_{\tau-1} \sigma^j_{\tau-1}(x_{h_{t_\tau},i_{t_\tau}} ) + 2V_{h_{t_\tau}}\right)^2\label{eq444}\\
	& = 4\sum_{\tau=1}^{\tau_s} \left(  \sum^m_{j=1} \beta_{\tau-1} (\sigma^j_{\tau-1}(x_{h_{t_\tau},i_{t_\tau}}))^2  + 2 \sum^m_{j=1} V_{h_{t_\tau}} \beta^{1/2}_{\tau-1}  \sigma^j_{\tau-1}(x_{h_{t_\tau},i_{t_\tau}}) + \sum^m_{j=1} (V_{h_{t_\tau}})^2 \right)  \notag \\
	& \leq 4\sum_{\tau=1}^{\tau_s} \Bigg( \sum^m_{j=1} \beta_{\tau-1} (\sigma^j_{\tau-1}(x_{h_{t_\tau},i_{t_\tau}}))^2  \notag \\
	& \qquad  \qquad + 2\left( \sum^m_{j=1}  (V_{h_{t_\tau}})^2\right)^{1/2} \left( \sum^m_{j=1} \beta_{\tau-1}  (\sigma^j_{\tau-1}(x_{h_{t_\tau},i_{t_\tau}}))^2 \right)^{1/2}+ \sum^m_{j=1} (V_{h_{t_\tau}})^2 \Bigg) \label{eq446}  \\
	& \leq 4\sum_{\tau=1}^{\tau_s} \Bigg(\sum^m_{j=1} \beta_{\tau-1} (\sigma^j_{\tau-1}(x_{h_{t_\tau},i_{t_\tau}}))^2  + 2 \sum^m_{j=1} \beta_{\tau-1} (\sigma^j_{\tau-1}(x_{h_{t_\tau},i_{t_\tau}}))^2 \notag \\
	& \qquad  \qquad + \sum^m_{j=1} \beta_{\tau-1}  (\sigma^j_{\tau-1}(x_{h_{t_\tau},i_{t_\tau}}))^2 \Bigg) \label{eq447}\\
	& \leq 16\beta_{\tau_s}  \sum_{\tau=1}^{\tau_s} \sum^m_{j=1}   (\sigma^j_{\tau-1}(x_{h_{t_\tau},i_{t_\tau}}))^2 \label{eq448}\\
	& = 16\beta_{{\tau_s}}  \sigma^2   \sum_{\tau=1}^{\tau_s} \sum^m_{j=1}   \sigma^{-2} (\sigma^j_{\tau-1}(x_{h_{t_\tau},i_{t_\tau}}))^2 \notag \\
	& \leq 16\beta_{{\tau_s}}  \sigma^2 C \left(  \sum_{\tau=1}^{\tau_s} \sum^m_{j=1} \log (1 +  \sigma^{-2}  (\sigma^j_{\tau-1}(x_{h_{t_\tau},i_{t_\tau}}))^2) \right) \label{eq450}\\
	& \leq 16\beta_{{\tau_s}}  \sigma^2 C m I( \bs{y}_{[{\tau_s}]}, \bs{f}_{[{\tau_s}]}) \label{eq451}\\
	& \leq 16\beta_{{\tau_s}}  \sigma^2 C m \gamma_{\tau_s},\label{eq452}
	\end{align}
	where $C = \sigma^{-2}/\log (1+\sigma^{-2})$. In this calculation, \eqref{eq443} follows by definitions; \eqref{eq444} follows by \eqref{eq4401}; \eqref{eq446} follows from Cauchy-Schwarz inequality; \eqref{eq447} follows from the fact that we make evaluation at round $t_\tau$ so that we have
	\[
	\beta^{1/2}_{\tau-1} \norm{\bs{\sigma}_{\tau-1} (x_{h_{t_\tau},i_{t_\tau}})}_2=\beta^{1/2}_{\tau_{t_\tau}} \norm{\bs{\sigma}_{\tau_{t_\tau}} (x_{h_{t_\tau},i_{t_\tau}})}_2 > \norm{\bs{V}_{h_{t_\tau}}}_2~
	\]
	by the structure of the algorithm; \eqref{eq448} holds since $\beta_{\tau}$ is monotonically non-decreasing in $\tau$ (see the definition of $\beta_\tau$ in Theorem \ref{mainthm}); \eqref{eq450} follows from the fact that $s \leq C \log (1+s)$ for all $0 \leq s\leq \sigma^{-2}$  
	and that we have
	\begin{align*}
	&\sigma^{-2} (\sigma^j_{\tau-1}(x_{h_{t_\tau},i_{t_\tau}}))^2\\
	&=\sigma^{-2} \of{k^{jj}(x_{h_{t_\tau},i_{t_\tau}} ,x_{h_{t_\tau},i_{t_\tau}} ) -\of{\bs{k}_{[\tau-1]}(x_{h_{t_\tau},i_{t_\tau}})(\bs{K}_{[\tau-1]}+\bs{\Sigma}_{[\tau-1]})^{-1}\bs{k}_{[\tau-1]}(x_{h_{t_\tau},i_{t_\tau}})^{\tau_s}}^{jj}}\\
	&\leq \sigma^{-2} k^{jj}(x_{h_t,i_t} ,x_{h_t,i_t}) \\
	&\leq \sigma^{-2}
	\end{align*}
	thanks to Lemma \ref{spd} and Assumption \ref{ass:cov}; \eqref{eq451} follows from Proposition \ref{gainbound}; and \eqref{eq452} follows from definition of the maximum information gain.
	
	Finally, by Cauchy-Schwarz inequality, we have
	\begin{align*}
	\sum_{\tau=1}^{\tau_s} \overline{\omega}_{t_\tau} \leq \sqrt{ {\tau_s}  \sum_{\tau=1}^{\tau_s} \overline{\omega}^2_{t_\tau}}\leq \sqrt{{\tau_s} \left(  16\beta_{{\tau_s}}  \sigma^2 C m \gamma_{\tau_s} \right) } ~,
	\end{align*}
	which completes the proof.
\end{proof}

\begin{lem}\label{lem:infoboundd} 
	Running Adaptive $\bs{\epsilon}$-PAL with  $(\beta_{\tau})_{\tau\in\mathbb{N}}$ as defined in Lemma \ref{lem:prob}, we have 
	\begin{align*}
	\overline{\omega}_{t_s} \leq \sqrt{\frac{ 16\beta_{\tau_s}  \sigma^2 C m \gamma_{\tau_s}}{\tau_s}} ~.
	\end{align*}
\end{lem}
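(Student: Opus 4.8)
The plan is to combine the bound from Lemma \ref{lem:infogap} on the sum of uncertainty diameters over the evaluation rounds with a monotonicity argument showing that the largest diameter $\overline{\omega}_t$ is non-increasing in the round index $t$. Once monotonicity is established, observe that the terminating round $t_s$ occurs no earlier than any evaluation round $t_\tau$, since all $\tau_s$ evaluations are performed before termination; hence $t_s\geq t_\tau$ for every $\tau\in[\tau_s]$, and monotonicity gives $\overline{\omega}_{t_s}\leq \overline{\omega}_{t_\tau}$. Summing this inequality over $\tau$ yields
\[
\tau_s\,\overline{\omega}_{t_s}=\sum_{\tau=1}^{\tau_s}\overline{\omega}_{t_s}\leq \sum_{\tau=1}^{\tau_s}\overline{\omega}_{t_\tau}\leq \sqrt{\tau_s\left(16\beta_{\tau_s}\sigma^2 C m\gamma_{\tau_s}\right)},
\]
where the last step is exactly Lemma \ref{lem:infogap}. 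Dividing both sides by $\tau_s$ and simplifying $\tau_s^{-1}\sqrt{\tau_s\,(\cdot)}=\sqrt{(\cdot)/\tau_s}$ produces $\overline{\omega}_{t_s}\leq \sqrt{16\beta_{\tau_s}\sigma^2 C m\gamma_{\tau_s}/\tau_s}$, which is the claim.

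The substantive step, and the one I expect to require the most care, is establishing that $\overline{\omega}_t=\max_{x_{h,i}\in\mathcal{W}_t}\omega_t(x_{h,i})$ does not increase from one round to the next. I would verify this by tracking the three operations that alter the active set and the hyper-rectangles within a round. First, the modeling phase replaces $\bs{R}_{t-1}(x_{h,i})$ by $\bs{R}_t(x_{h,i})=\bs{R}_{t-1}(x_{h,i})\cap\bs{Q}_t(x_{h,i})$, which can only shrink a rectangle and hence cannot increase any $\omega_t(x_{h,i})$. Second, discarding removes nodes from $\SSS_t$, so the maximum is taken over a smaller set and cannot grow. Third, refining replaces a node by its children, each of which inherits the parent's cumulative hyper-rectangle exactly, so each newly created node has the same diameter as its parent and the maximum is preserved.

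The one point that needs careful argument is that any node appearing in $\mathcal{W}_t$ at a later round is a descendant (or a copy) of a node that survived in $\mathcal{W}_{t'}$ at each earlier round $t'\leq t$; this holds because a discarded node is permanently removed and never produces children, so every ancestor of a surviving node must itself have survived discarding at round $t'$ and therefore lies in $\mathcal{W}_{t'}$. Letting $a$ denote the ancestor of such a node $x$ that was active at round $t'$, the shrinking property $\bs{R}_0\supseteq\bs{R}_1\supseteq\cdots$ together with the inheritance rule applied at each intermediate refinement gives $\bs{R}_t(x)\subseteq\bs{R}_{t'}(a)$, and since diameter is monotone under set inclusion we obtain $\omega_t(x)\leq\omega_{t'}(a)\leq\overline{\omega}_{t'}$. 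Taking the maximum over $x\in\mathcal{W}_t$ yields $\overline{\omega}_t\leq\overline{\omega}_{t'}$ for all $t\geq t'$, establishing monotonicity. With this in hand, the averaging step above completes the proof.
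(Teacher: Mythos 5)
Your proposal is correct and follows essentially the same route as the paper's proof: establish that $(\overline{\omega}_t)_t$ is non-increasing, bound $\overline{\omega}_{t_s}$ by the average of $\overline{\omega}_{t_\tau}$ over the evaluation rounds, and invoke Lemma \ref{lem:infogap}. Your more careful treatment of the monotonicity step (tracking discarding, refinement inheritance, and the nesting of cumulative hyper-rectangles across ancestors) is a welcome elaboration of the paper's terser argument, but it is not a different method.
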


\begin{proof} First we show that the sequence $(\overline{\omega}_t)_{t\in \mathbb{N}}$ is a monotonically non-increasing sequence. To that end, note that by the principle of selection we have that $\omega_{t-1}(x_{h_t,i_t}) \leq \overline{\omega}_{t-1}$. On the other hand, for every $x \in \XX$, we have $\omega_t (x) \leq \omega_{t-1}(x)$ since $\bs{R}_t(x)\subseteq \bs{R}_{t-1}(x)$. Thus, $\overline{\omega}_t =\omega_t(x_{h_t,i_t})\leq \omega_{t-1}(x_{h_t,i_t})$. So we obtain  $\overline{\omega}_t \leq \overline{\omega}_{t-1}$. 
	
By above and by Lemma~\ref{lem:infogap}, we have
	\begin{align*}
	\overline{\omega}_{t_s} \leq \frac{\sum^{\tau_s}_{\tau=1} \overline{\omega}_{t_\tau} }{\tau_s} \leq  \sqrt{\frac{16\beta_{\tau_s}  \sigma^2 C m \gamma_{\tau_s} }{\tau_s}} ~.
	\end{align*}
\end{proof}

Finally, we are ready to state the information-type bound on the sample complexity.
\begin{pro}\label{pro:infobound} Let $\bs{\epsilon} = [\epsilon^1,\ldots,\epsilon^m]^\T$ be given with $\epsilon=\min_{j\in[m]}\epsilon^j>0$. Let $\delta \in (0,1)$ and $\bar{D}>D_1$. For each $h\geq 0$, let $V_h$ be defined as in Corollary \ref{lem:spacebound}; for each $\tau\in\mathbb{N}$, let $\beta_\tau$ be defined as in Lemma \ref{lem:prob}.
		When we run Adaptive $\bs{\epsilon}$-PAL with prior $GP(0,\bs{k})$ and noise $\mathcal{N}(0,\sigma^2 )$, the following holds with probability at least $1- \delta $.
		An $\bs{\epsilon}$-accurate Pareto set can be found with at most \reva{$T$} function evaluations, where $T$ is the smallest natural number satisfying 
		\begin{align*}
		\sqrt{\frac{16\beta_{T}  \sigma^2 C m \gamma_{T} }{T}} < \epsilon  ~.
		\end{align*}
		In the above expression, $C$ represents the constant defined in Lemma \ref{lem:infogap}.
\end{pro}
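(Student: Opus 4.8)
The plan is to combine the correctness guarantee already established under the two ``good'' events with a path-wise bound on the number of evaluations coming from the information-type estimate of Lemma \ref{lem:infoboundd}. First I would condition on the event $\mathcal{F}_1\cap\mathcal{F}_2$: by Lemma \ref{lem:prob} and Corollary \ref{lem:spacebound} each of $\mathcal{F}_1,\mathcal{F}_2$ holds with probability at least $1-\delta/2$, so a union bound gives $\mathbb{P}(\mathcal{F}_1\cap\mathcal{F}_2)\geq 1-\delta$. On this event, Lemma \ref{lem:returncover} guarantees that whatever set $\hat{P}$ the algorithm returns at termination is an $\bs{\epsilon}$-accurate Pareto set. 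Since the algorithm terminates in finite time (as shown earlier), $\tau_s$ is well defined, and it only remains to show that termination occurs after at most $T$ function evaluations.

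The core of the argument is a contradiction built on the termination condition. I would first record that an evaluation is performed at an evaluation round $t_\tau$ only if $\overline{\omega}_{t_\tau}>\epsilon$; otherwise Lemma \ref{lem:termination} would force the algorithm to terminate at $t_\tau$ without sampling. Now suppose, toward a contradiction, that the algorithm performs at least $T$ evaluations, so that in particular $\overline{\omega}_{t_T}>\epsilon$. The monotonicity of $(\overline{\omega}_t)_t$ established in the proof of Lemma \ref{lem:infoboundd} gives $\overline{\omega}_{t_T}\leq \tfrac{1}{T}\sum_{\tau=1}^{T}\overline{\omega}_{t_\tau}$, and then Cauchy--Schwarz together with the estimate of Lemma \ref{lem:infogap} (in prefix form) yields
\[
\overline{\omega}_{t_T}\leq \sqrt{\frac{1}{T}\sum_{\tau=1}^{T}\overline{\omega}_{t_\tau}^2}\leq \sqrt{\frac{16\beta_{T}\sigma^2 C m\gamma_{T}}{T}}\leq\epsilon,
\]
where the last inequality is precisely the defining property of $T$. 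This contradicts $\overline{\omega}_{t_T}>\epsilon$, so the algorithm makes strictly fewer than $T$ evaluations; in particular at most $T$ evaluations suffice.

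The one point that needs care — and the step I expect to be the main obstacle — is that Lemma \ref{lem:infogap} is stated for the full sum up to the terminal index $\tau_s$, whereas the contradiction requires the analogous bound for the prefix sum up to the $T$-th evaluation. I would verify that the chain of inequalities in the proof of Lemma \ref{lem:infogap} is unchanged when its upper limit $\tau_s$ is replaced by any $\tau'\leq\tau_s$: the monotonicity $\beta_{\tau-1}\leq\beta_{\tau'}$, the variance bound via Lemma \ref{spd} and Assumption \ref{ass:cov}, and the information-gain inequality $\sum_{\tau=1}^{\tau'}\sum_{j}\log(1+\sigma^{-2}(\sigma^j_{\tau-1})^2)\leq 2m\,I(\bs{y}_{[\tau']};\bs{f}_{[\tau']})\leq 2m\gamma_{\tau'}$ coming from Proposition \ref{gainbound} all hold verbatim for any prefix.

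Finally, since the posterior variances entering these bounds depend only on the evaluated designs and not on the realized noise, the evaluation-count bound $\tau_s< T$ holds path-wise. Combining it with the correctness statement valid on $\mathcal{F}_1\cap\mathcal{F}_2$, I obtain that, with probability at least $1-\delta$, Adaptive $\bs{\epsilon}$-PAL returns an $\bs{\epsilon}$-accurate Pareto set using at most $T$ function evaluations, which is exactly the claim of the proposition.
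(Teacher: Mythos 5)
Your proposal is correct and follows essentially the same route as the paper: condition on $\mathcal{F}_1\cap\mathcal{F}_2$ (union bound giving probability $1-\delta$, with correctness from Lemma \ref{lem:returncover}), then combine the termination condition of Lemma \ref{lem:termination} with the averaged information-type bound of Lemmas \ref{lem:infogap}--\ref{lem:infoboundd} to force $\tau_s\leq T$. Your contradiction framing and the explicit check that the chain of inequalities in Lemma \ref{lem:infogap} is valid for any prefix $\tau'\leq\tau_s$ is a slightly more careful rendering of the paper's terse ``$\{\tau_s=T\}$ implies $\{\overline{\omega}_{t_s}\leq\epsilon\}$'' step, but it is not a different argument.
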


\textit{Proof.} According to Lemma \ref{lem:termination}, we have $\overline{w}_{t_s} < \epsilon$. In addition, Lemma \ref{lem:infoboundd} says that 
\begin{align*}
\omega_{\tau_s} \leq \frac{\sum^{\tau_s}_{\tau=1} \overline{\omega}_{t_\tau} }{{\tau_s}} \leq  \sqrt{\frac{16\beta_{{\tau_s}}  \sigma^2 C m \gamma_{\tau_s} }{{\tau_s}}} ~.
\end{align*}
We use these two facts to find an upper bound on $\tau_s$ that holds with probability one. Let  
\begin{align*}
 T = \min \left\{ \tau \in \mathbb{N} :  \sqrt{\frac{16\beta_{\tau}  \sigma^2 C m \gamma_{\tau} }{\tau}}  < \epsilon \right\} ~.
\end{align*}
Since the event $\{\tau_s = T \}$ implies that $\{ \overline{w}_{t_s} < \epsilon\}$, we have $\Pr({\tau_s} > T) = 0$.\qed

\subsection{Derivation of the dimension-type sample complexity bound}\label{section:dimbounds}

In order to bound the diameter, we make use of the following observation.
\begin{rem}\label{rem:vh}  We have
	\begin{align*}
	\frac{V_h}{V_{h+1}}  \leq N_1 := \rho^{-\alpha} ~.
	\end{align*}
\end{rem}

The next lemma bounds diameters of confidence hyper-rectangles of the nodes in  ${\cal A}_t$ for all rounds $t$.
\begin{lem}\label{lemma:metricgap} In any round $t \geq 1$ before Adaptive $\bs{\epsilon}$-PAL terminates, we have 
	\begin{align*}
	\overline{\omega}^2_t \leq  L V^2_{h_t}~,
	\end{align*}
	where $L= m \left( 4 N^2_1 + 4 N^2_1 (2N_1+2) + (2N_1+2)^2\right)$.
\end{lem}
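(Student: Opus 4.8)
The plan is to bound the selected node's diameter by the diagonal of its current confidence hyper-rectangle and then express that diagonal through $V_{h_t}$. Since $x_{h_t,i_t}$ attains the maximum in the definition of $\overline{\omega}_t$, and since $\bs{R}_t(x_{h_t,i_t})\subseteq\bs{Q}_t(x_{h_t,i_t})$ while the diameter of a box equals its diagonal, I obtain, exactly as in the opening display of the proof of Lemma~\ref{lem:hmax},
\[
\overline{\omega}_t^2=\omega_t(x_{h_t,i_t})^2\leq\norm{\bs{U}_t(x_{h_t,i_t})-\bs{L}_t(x_{h_t,i_t})}_2^2=\sum_{j=1}^m\of{\overline{B}^j_t(x_{h_t,i_t})-\bunderline{B}^j_t(x_{h_t,i_t})+2V_{h_t}}^2 .
\]
Writing $p=p(x_{h_t,i_t})$ for the parent and selecting the \emph{parent} entries in the min/max defining $\overline{B}^j_t$ and $\bunderline{B}^j_t$, I get the round-independent estimate $\overline{B}^j_t(x_{h_t,i_t})-\bunderline{B}^j_t(x_{h_t,i_t})\leq 2\beta^{1/2}_{\tau_t}\sigma^j_{\tau_t}(p)+2V_{h_t-1}$, which is what lets the argument cover evaluating rounds (where the node's own width need not be small) as well as refining rounds.

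The decisive ingredient is that the parent's scaled aggregate posterior standard deviation is controlled by $V_{h_t-1}$, i.e. $\beta^{1/2}_{\tau_t}\norm{\bs{\sigma}_{\tau_t}(p)}_2\leq\sqrt{m}\,V_{h_t-1}$. Since $x_{h_t,i_t}$ exists at round $t$, its parent $p$ (at level $h_t-1$) was refined at some earlier round $s$, and the \emph{Refine} rule of Algorithm~\ref{pseudocode} certifies exactly $\beta^{1/2}_{\tau_s}\norm{\bs{\sigma}_{\tau_s}(p)}_2\leq\norm{\bs{V}_{h_t-1}}_2=\sqrt{m}\,V_{h_t-1}$ at that round. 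To transport this to round $t$ I would invoke the nesting of cumulative hyper-rectangles, $\bs{R}_t(x_{h_t,i_t})\subseteq\bs{R}_s(x_{h_t,i_t})=\bs{R}_s(p)\subseteq\bs{Q}_s(p)$; combined with Remark~\ref{rem:vh}, which gives $V_{h_t-1}\leq N_1 V_{h_t}$, this supplies the remaining estimates.

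It then remains to substitute $\overline{B}^j_t-\bunderline{B}^j_t+2V_{h_t}\leq 2\beta^{1/2}_{\tau_t}\sigma^j_{\tau_t}(p)+(2N_1+2)V_{h_t}$, square, and sum. With $s_j:=\beta^{1/2}_{\tau_t}\sigma^j_{\tau_t}(p)$ the sum expands as $4\sum_j s_j^2+4(2N_1+2)V_{h_t}\sum_j s_j+m(2N_1+2)^2V_{h_t}^2$: the first term is $4\beta_{\tau_t}\norm{\bs{\sigma}_{\tau_t}(p)}_2^2\leq 4mN_1^2V_{h_t}^2$; the cross term is handled by Cauchy--Schwarz via $\sum_j s_j\leq\sqrt{m}\,\beta^{1/2}_{\tau_t}\norm{\bs{\sigma}_{\tau_t}(p)}_2\leq mN_1V_{h_t}$, together with $N_1=\rho^{-\alpha}>1$, which upgrades the factor $4N_1(2N_1+2)$ to $4N_1^2(2N_1+2)$; and the third term is already in the required shape. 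Collecting the three contributions yields $\overline{\omega}_t^2\leq m\of{4N_1^2+4N_1^2(2N_1+2)+(2N_1+2)^2}V_{h_t}^2=LV_{h_t}^2$. I expect the main obstacle to be the variance control of the middle paragraph: the index at round $t$ is written with the parent's posterior at $\tau_t$ evaluations, whereas the refining condition is certified at the earlier round $s$, and because $\beta_\tau$ grows while $\sigma_\tau$ only decreases, reconciling the two rounds is precisely what the rectangle-nesting step must do carefully; one must also treat the root $x_{0,1}$ (which has no parent) separately, bounding its width directly from $k^{jj}\leq 1$ in Assumption~\ref{ass:cov} and the definition of $V_0$. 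As a robust fallback, bounding the diameter directly by that of $\bs{Q}_s(p)$ gives $\overline{\omega}_t^2\leq 16mV_{h_t-1}^2\leq 16mN_1^2V_{h_t}^2\leq LV_{h_t}^2$, which avoids the reconciliation at the price of a looser constant.
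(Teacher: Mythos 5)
Your proof follows essentially the same route as the paper's: bound $\overline{\omega}_t^2$ by the diagonal of $\bs{Q}_t(x_{h_t,i_t})$, take the parent branch of the min/max in the indices to get $\overline{B}^j_t-\bunderline{B}^j_t\leq 2\beta^{1/2}_{\tau_t}\sigma^j_{\tau_t}(p)+2V_{h_t-1}$, invoke the parent's refinement condition together with Remark \ref{rem:vh} and Cauchy--Schwarz, and collect terms into the same constant $L$. The one place where you go beyond the paper is worth keeping: the paper simply asserts $\beta^{1/2}_{\tau_t}\norm{\bs{\sigma}_{\tau_t}(p)}_2\leq\norm{\bs{V}_{h_t-1}}_2$ at round $t$, whereas the Refine rule only certifies this at the earlier round $s$ at which $p$ was expanded, and since $\beta_\tau$ is non-decreasing while $\sigma_\tau$ is non-increasing the inequality does not transfer to round $t$ automatically; your rectangle-nesting argument $\bs{R}_t(x_{h_t,i_t})\subseteq\bs{R}_s(p)\subseteq\bs{Q}_s(p)$ (or the cruder fallback $\overline{\omega}_t^2\leq 16mV_{h_t-1}^2\leq 16mN_1^2V_{h_t}^2\leq LV_{h_t}^2$) closes exactly this gap, and your separate treatment of the root node is also a detail the paper leaves implicit.
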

\textit{Proof.}
We have
\begin{align}
\overline{\omega}^2_t &= \omega^2_t(x_{h_t,i_t}) = \Big( \max_{y,y' \in \bs{R}_t(x_{h_t,i_t})} || y - y'  ||_2 \Big)^2 \notag \\
&\leq \Big( \max_{y,y' \in \bs{Q}_t(x_{h_t,i_t})} || y - y'  ||_2 \Big)^2 = \Big( || \bs{U}_t(x_{h_t,i_t}) -  \bs{L}_t(x_{h_t,i_t})   ||_2  \Big)^2 = \sum^m_{j=1} \left( \overline{B}^j_t(x_{h_t,i_t}) - \underline{B}^j_t(x_{h_t,i_t}) +2V_{h_t}\right)^2  \notag \\
& \leq  \sum^m_{j=1} \left( 2 \beta^{1/2}_{\tau_t} \sigma^j_{\tau_t}(p(x_{h_t,i_t})) + (2N_1 +2)V_{h_t}\right)^2  \label{eq512}\\
&= 4 \beta_{\tau_t} \sum^m_{j=1}  \left( \sigma^j_{\tau_t}(p(x_{h_t,i_t})) \right)^2 + 4(2N_1+2)  \sum^m_{j=1} \beta^{1/2}_{\tau_t} \sigma^j_{\tau_t}(p(x_{h_t,i_t})) V_{h_t}  + (2N_1+2)^2 \sum^m_{j=1} \left( V_{h_t}\right)^2 \notag \\
& \leq 4 \sum^m_{j=1} \left( V_{h_t - 1}\right)^2 +  4(2N_1+2)  \sum^m_{j=1} \beta^{1/2}_{\tau_t} \sigma^j_{\tau_t}(p(x_{h_t,i_t})) V_{h_t} 
+  (2N_1+2)^2 \sum^m_{j=1} \left( V_{h_t}\right)^2 \label{eq512b}\\
& \leq 4 \sum^m_{j=1} \left( V_{h_t-1}\right)^2 + 4 (2N_1+2)\negthinspace \left( \sum^m_{j=1}  \beta_{\tau_t} \left( \sigma^j_{\tau_t}(p(x_{h_t,i_t})) \right)^2 \right)^{\frac12} \negthinspace \left( \sum^m_{j=1} \left( V_{h_t}\right)^2 \right)^{\frac12} +  (2N_1+2)^2 \sum^m_{j=1} \left( V_{h_t}\right)^2 \label{eq513}\\
& \leq 4 \sum^m_{j=1} \left( V_{h_t-1}\right)^2 + 4(2N_1+2)N_1 \left( \sum^m_{j=1} \left( V_{h_t-1}\right)^2 \right)^{\frac12} \left( \sum^m_{j=1} \left( V_{h_t}\right)^2 \right)^{\frac12}  +  (2N_1+2)^2 \sum^m_{j=1} \left( V_{h_t}\right)^2\label{eq514}\\
& \leq m\left( 4 N^2_1+ 4 N^2_1 (2N_1+2) + (2N_1+2)^2\right)  \left( V_{h_t}\right)^2 = L V^2_{h_t}~. \label{eq514b}
\end{align}
In the above expression, \eqref{eq512} follows from the fact that for $j\in [m]$ and $t \geq 1$
\begin{align}
&\overline{B}^j_t(x_{h_t,i_t}) - \underline{B}^j_t(x_{h_t,i_t}) \notag\\
& = \min \{ \mu^j_{\tau_t} (x_{h_t,i_t}) + \beta^{1/2}_{\tau_t} \sigma^j_{\tau_t}(x_{h_t,i_t}), \, \mu^j_{\tau_t} (p(x_{h_t,i_t})) + \beta^{1/2}_{\tau_t} \sigma^j_{\tau_t}(p(x_{h_t,i_t})) + V_{h_t-1} \} \notag\\
&\quad - \max \{ \mu^j_{\tau_t} (x_{h_t,i_t}) - \beta^{1/2}_{\tau_t} \sigma^j_{\tau_t}(x_{h_t,i_t}), \, \mu^j_{\tau_t} (p(x_{h_t,i_t}))  - \beta^{1/2}_{\tau_t} \sigma^j_{\tau_t}(p(x_{h_t,i_t})) - V_{h_t-1} \}\notag\\
& \leq \mu^j_{\tau_t} (p(x_{h_t,i_t}))+ \beta^{1/2}_{\tau_t} \sigma^j_{\tau_t}(p(x_{h_t,i_t})) + V_{h_t-1}
- \mu^j_{\tau_t} (p(x_{h_t,i_t})) + \beta^{1/2}_{\tau_t} \sigma^j_{\tau_t}(p(x_{h_t,i_t})) + V_{h_t-1} \notag \\
& \leq 2\beta^{1/2}_{\tau_t} \sigma^j_{\tau_t} (p(x_{h_t,i_t})) +  2V_{h_t-1}\notag\\
& \leq 2\beta^{1/2}_{\tau_t} \sigma^j_{\tau_t} (p(x_{h_t,i_t})) + 2N_1V_{h_t}\label{eq502}~,
\end{align}
where \eqref{eq502} follows from Remark \ref{rem:vh}. \eqref{eq512b} is obtained by observing that the node $p(x_{h_t,i_t})$ has been refined, and hence, it holds that $\beta^{1/2}_{\tau_t} \norm{\bs{\sigma}_{\tau_t}(p(x_{h_t,i_t}))}_2 \leq \norm{\bs{V}_{h_t-1}}_2$. \eqref{eq513} follows from application of the Cauchy-Schwarz inequality. \eqref{eq514} follows again from the fact that $\beta^{1/2}_{\tau_t} \norm{\bs{\sigma}_{\tau_t}(p(x_{h_t,i_t}))}_2 \leq \norm{\bs{V}_{h_t-1}}_2$, and 
\eqref{eq514b} follows from Remark \ref{rem:vh}.\qed

Let ${\cal E}$ denote the set of rounds in which Adaptive $\bs{\epsilon}$-PAL performs design evaluations. Note that $|{\cal E}| = \tau_s$. Let $t_s$ denote the round in which the algorithm terminates. Next, we use Lemma \ref{lemma:packcover} together with Lemma \ref{lemma:metricgap} to upper bound $\overline{\omega}_{t_s}$ as a function of the number evaluations until termination.
\begin{lem}\label{lem:metricbound} Let $\delta \in (0,1)$ and $\bar{D}>D_1$. Running Adaptive $\epsilon$-PAL with $\beta_\tau$ defined in Lemma \ref{lem:prob}, there exists a constant $Q>0$ such that the following event holds almost surely.
	\begin{align*}
	\overline{\omega}_{t_s} \leq & 
	K_1 {\tau_s}^{\frac{-\alpha}{\bar{D}+2\alpha}} \left( \log {\tau_s} \right)^{\frac{-(\bar{D}+\alpha)}{\bar{D}+2\alpha}} + 
	K_2 {\tau_s}^{\frac{-\alpha}{\bar{D}+2\alpha}} \left( \log {\tau_s} \right)^{\frac{\alpha}{\bar{D}+2\alpha}} ~,
	\end{align*}
	where 
	\begin{align*}
	K_1 & =  \frac{\sqrt{L} Q  \sigma^2 \beta_{\tau_s}}{C_{\bs{k}} v^{\alpha}_1 v^{\bar{D}}_2 (\rho^{-(\bar{D}+\alpha)}-1)} ~, \\
	K_2  & = 4 \sqrt{L} C_{\bs{k}} v^{\alpha}_1\left( \sqrt{C_2 + 2\log (2H^2\pi^2m /6\delta) + H\log N  + \max  \{ 0, - 4(D_1/\alpha ) \log (C_{\bs{k}}(v_1\rho^H )^\alpha ) \}  } + C_3\right) ~,\\
	H &= \Bigg\lfloor \frac{\log \tau_s - \log (\log \tau_s )}{\log (1/\rho )(\bar{D}+2\alpha )} \Bigg\rfloor ~.
	\end{align*}
\end{lem}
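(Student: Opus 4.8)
The plan is to bound $\overline{\omega}_{t_s}$ by averaging over the evaluation rounds and then reorganizing the resulting sum over the levels of the partition tree. First I would invoke the monotonicity of $(\overline{\omega}_t)_t$ established at the start of the proof of Lemma~\ref{lem:infoboundd}: since $t_\tau\leq t_s$ for every $\tau\in[\tau_s]$ and the sequence is non-increasing, $\overline{\omega}_{t_s}\leq\frac{1}{\tau_s}\sum_{\tau=1}^{\tau_s}\overline{\omega}_{t_\tau}$. Lemma~\ref{lemma:metricgap} then gives $\overline{\omega}_{t_\tau}\leq\sqrt{L}\,V_{h_{t_\tau}}$, where $h_{t_\tau}$ is the depth of the node evaluated at the $\tau$th evaluation, so it suffices to control $\sum_{\tau=1}^{\tau_s}V_{h_{t_\tau}}$.

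Next I would rewrite this as a sum over tree levels, $\sum_{\tau=1}^{\tau_s}V_{h_{t_\tau}}=\sum_{h\geq 0}V_h\,e_h$, where $e_h$ is the number of evaluations performed at depth $h$. Two ingredients bound $e_h$. By Lemma~\ref{lem:nodebound}, each node is evaluated at most $q_h=\sigma^2\beta_{\tau_s}/V_h^2$ times before being expanded. For the number of \emph{distinct} evaluated nodes at depth $h$, I would use the well-behaved structure (Definition~\ref{wellbehaved}): distinct cells at depth $h$ are disjoint and each contains the ball $B(x_{h,i},v_2\rho^h)$, so their centers are pairwise more than $v_2\rho^h$ apart and hence form a $v_2\rho^h$-packing; by Lemma~\ref{lemma:packcover} there are at most $M(\XX,v_2\rho^h,d)\leq Q(v_2\rho^h)^{-\bar D}$ of them. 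Thus $e_h\leq Q(v_2\rho^h)^{-\bar D}q_h$. Combined with the elementary lower bound $V_h\geq 4C_{\bs k}v_1^\alpha C_3\,\rho^{\alpha h}$, obtained from Corollary~\ref{lem:spacebound} by discarding the nonnegative square-root term, this yields $V_h e_h\leq \text{const}\cdot\sigma^2\beta_{\tau_s}\,\rho^{-h(\bar D+\alpha)}$.

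The crucial step is to split the sum at the level $H$ in the statement. For the shallow levels I would bound $\sum_{h=0}^{H-1}V_h e_h$ by the geometric series $\text{const}\cdot\sigma^2\beta_{\tau_s}\sum_{h=0}^{H-1}\rho^{-h(\bar D+\alpha)}$, which is of order $\rho^{-H(\bar D+\alpha)}$; dividing by $\tau_s$ and substituting $H$ produces the first term $K_1\tau_s^{-\alpha/(\bar D+2\alpha)}(\log\tau_s)^{-(\bar D+\alpha)/(\bar D+2\alpha)}$, after noting that $\tfrac{\bar D+\alpha}{\bar D+2\alpha}-1=-\tfrac{\alpha}{\bar D+2\alpha}$. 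For the deep levels, since $V_h$ is decreasing and the total number of evaluations is $\tau_s$, I would simply use $\sum_{h\geq H}V_h e_h\leq V_H\sum_{h\geq H}e_h\leq V_H\,\tau_s$; dividing by $\tau_s$ leaves $\sqrt{L}\,V_H=K_2\rho^{\alpha H}$, and substituting $H$ gives the second term $K_2\tau_s^{-\alpha/(\bar D+2\alpha)}(\log\tau_s)^{\alpha/(\bar D+2\alpha)}$. Adding the two contributions gives the claimed bound.

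The main obstacle is the choice of the splitting level $H$ and verifying that it simultaneously balances the two terms: $H$ must be large enough that the shallow geometric sum, once divided by $\tau_s$, decays, yet small enough that the overflow estimate $V_H\tau_s$ stays controlled, and the value $H=\lfloor(\log\tau_s-\log\log\tau_s)/(\log(1/\rho)(\bar D+2\alpha))\rfloor$ is precisely the one for which $\rho^{-H(\bar D+\alpha)}/\tau_s$ and $\rho^{\alpha H}$ are of the same order. The remaining care is bookkeeping: the floor in $H$ introduces harmless constant factors $\rho^{\pm(\bar D+\alpha)}$ and $\rho^{-\alpha}$ absorbed into $K_1$ and $K_2$, and the packing-number estimate contributes the factor $v_2^{-\bar D}$ appearing in $K_1$.
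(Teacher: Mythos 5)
Your proposal follows essentially the same route as the paper's proof: average $\overline{\omega}_{t_s}$ over the evaluation rounds via monotonicity, apply Lemma~\ref{lemma:metricgap} to reduce to $\sum V_{h_t}$, split that sum at level $H$, bound the shallow part by (packing number at depth $h$) $\times$ (per-node evaluation cap $q_h$ from Lemma~\ref{lem:nodebound}) summed as a geometric series, bound the deep part by $\tau_s V_H$, and substitute $H$. The only deviations are harmless bookkeeping choices (a slightly different but valid lower bound on $V_h$ and a packing-radius convention absorbed into $Q$), so the argument is correct and matches the paper.
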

\textit{Proof.} 
We have
\begin{align}
\overline{\omega}_{t_s} \leq \frac{\sum_{t \in {\cal E}} \overline{\omega}_{t} }{\tau_s} \leq \frac{ \sqrt{L}  \sum_{t \in {\cal E}} V_{h_t} }{\tau_s} ~, \label{eqn:omegabound}
\end{align}
where the first inequality follows from the fact that $\overline{\omega}_{t} \leq \overline{\omega}_{t-1}$ for all $t \geq 1$ and the second inequality is the result of Lemma \ref{lemma:metricgap}. We will bound $\sum_{t \in {\cal E}} V_{h_t}$ and use it to obtain a bound for \eqref{eqn:omegabound}. First, we define
\begin{align*}
S_1 = \sum_{\substack{t \in {\cal E}: \\ h_t < H}} V_{h_t}  \text{ and }
S_2 = \sum_{\substack{t \in {\cal E} :  \\ h_t \geq H}} V_{h_t} ~,
\end{align*}
and write $\sum_{t \in {\cal E}} V_{h_t} = S_1 + S_2$. We have 
\begin{align}
S_1 & = \sum_{\substack{t \geq 1: \\ h_t < H}} V_{h_t} \mathbb{I} (t \in {\cal E})  = \sum_{t \geq 1} \sum_{h < H} V_{h_t} \mathbb{I}(h_t = h) \mathbb{I} (t \in {\cal E}) \notag \\
& = \sum_{h < H}  \sum_{t \geq 1} V_{h_t} \mathbb{I}(h_t = h) \mathbb{I} (t \in {\cal E}) \notag \\
&\leq \sum_{h < H} V_{h} Q(v_2\rho^h)^{-\bar{D}} q_h \label{eqn:metricbound1} \\
& \leq \sum_{h < H} V_{h} Q(v_2\rho^h)^{-\bar{D}} \frac{\sigma^2 \beta_{T}}{V^2_h}  \label{eqn:metricbound2} \\
& \leq \sum_{h < H}  Q(v_2\rho^h)^{-\bar{D}} \frac{\sigma^2 \beta_{T}}{C_{\bs{k}} (v_1 \rho^h)^{\alpha}}  \label{eqn:metricbound3} \\
& \leq \frac{Q  \sigma^2 \beta_T}{C_{\bs{k}} v^{\alpha}_1 v^{\bar{D}}_2} \sum_{h<H} \rho^{-(\bar{D}+\alpha) h}
\leq  \frac{Q  \sigma^2 \beta_T}{C_{\bs{k}} v^{\alpha}_1 v^{\bar{D}}_2}
\frac{  \rho^{-(\bar{D}+\alpha) H} }{ \rho^{-(\bar{D}+\alpha)}-1} \label{eqn:metricbound4} ~.
\end{align}
In the above display, to obtain \eqref{eqn:metricbound1}, we note that for a fixed $h$ the cells $X_{h,i}$ are disjoint, a ball of radius $v_2 \rho^h$ should be able to fit in each cell, and thus, the number of depth $h$ cells is upper bounded by the number of radius $v_2 \rho^h$ balls we can pack in $({\cal X},d)$, which is in turn upper bounded by $M({\cal X},2 v_2 \rho^h,d)$. The rest follows from Lemma \ref{lemma:packcover}, which states that there exists a positive constant $Q$, such that
$M(\XX ,2v_2\rho^h ,d) \leq N(\XX ,v_2\rho^h ,d) \leq Q(v_2\rho^h)^{-\bar{D}}$. For \eqref{eqn:metricbound2}, we upper bound $q_h$ using Lemma \ref{lem:nodebound}, and \eqref{eqn:metricbound3} follows by observing that $V_h \geq C_{\bs{k}} (v_1 \rho^h)^{\alpha}$.

Since $V_h$ is decreasing in $h$, the remainder of the sum can be bounded as
\begin{equation}\label{eqn:metricbound4_2}
S_2  = \sum_{\substack{t \geq 1:  \\ h_t \geq H}} V_{h_t} \mathbb{I} (t \in {\cal E})  \leq \sum_{t \in {\cal E}} V_{H} = \tau_s V_H 
=(K_2/\sqrt{L}) \tau_s \rho^{H\alpha } ~. 
\end{equation}
Combining \eqref{eqn:metricbound4} and \eqref{eqn:metricbound4_2}, we obtain
\begin{align}
\sum_{t \in {\cal E}} V_{h_t} \leq  \frac{Q  \sigma^2 \beta_{\tau_s}}{C_{\bs{k}} v^{\alpha}_1 v^{\bar{D}}_2}
\frac{  \rho^{-(\bar{D}+\alpha) H} }{ \rho^{-(\bar{D}+\alpha)}-1} +(K_2/\sqrt{L})  \tau_s \rho^{H\alpha } ~. \label{eqn:metricbound5}
\end{align}
Since
\begin{align*}
H = \Bigg\lfloor \frac{\log \tau_s - \log (\log \tau_s )}{\log (1/\rho )(\bar{D}+2\alpha )} \Bigg\rfloor = \Bigg\lfloor  - \log_{\rho} \left( \frac{\tau_s}{\log \tau_s} \right)^{\frac{1}{\bar{D}+2\alpha}}  \Bigg\rfloor ~,
\end{align*}
we have
\begin{align*}
\rho^{-H(\bar{D}+2\alpha )} &\leq {\tau_s}^{\frac{\bar{D}+\alpha}{\bar{D}+2\alpha}} \left( \log {\tau_s} \right)^{\frac{-(\bar{D}+\alpha)}{\bar{D}+2\alpha}} \text{ and }
{\tau_s} \rho^{H\alpha} \leq \rho^{\alpha} 
{\tau_s}^{1 - \frac{\alpha}{\bar{D}+2\alpha}} \left( \log {\tau_s} \right)^{\frac{\alpha}{\bar{D}+2\alpha}} ~.
\end{align*}
Finally, we use the values found above to upper bound \eqref{eqn:metricbound5}, and then use this upper bound in \eqref{eqn:omegabound}, which gives us
\begin{align*}
\overline{\omega}_{t_s} \leq & 
\sqrt{L} \Bigg( \frac{Q  \sigma^2 \beta_{\tau_s}}{C_{\bs{k}} v^{\alpha}_1 v^{\bar{D}}_2 (\rho^{-(\bar{D}+\alpha)}-1) }   {\tau_s}^{\frac{-\alpha}{\bar{D}+2\alpha}} \left( \log {\tau_s} \right)^{\frac{-(\bar{D}+\alpha)}{\bar{D}+2\alpha}}  + 
(K_2/\sqrt{L})   {\tau_s}^{\frac{-\alpha}{\bar{D}+2\alpha}} \left( \log {\tau_s} \right)^{\frac{\alpha}{\bar{D}+2\alpha}}
\Bigg) ~.
\end{align*}
\qed

Finally, we are ready to state the metric dimension-type bound on the sample complexity.
\begin{pro}\label{pro:metricbound} Let $\bs{\epsilon} = [\epsilon^1,\ldots,\epsilon^m]^\T$ be given with $\epsilon=\min_{j\in[m]}\epsilon^j>0$. Let $\delta \in (0,1)$ and $\bar{D}>D_1$. For each $h\geq 0$, let $V_h$ be defined as in Corollary \ref{lem:spacebound}; for each $\tau\in\mathbb{N}$, let $\beta_\tau$ be defined as in Lemma \ref{lem:prob}.
		When we run Adaptive $\bs{\epsilon}$-PAL with prior $GP(0,\bs{k})$ and noise $\mathcal{N}(0,\sigma^2 )$, the following holds with probability at least $1- \delta $.
		
		An $\bs{\epsilon}$-accurate Pareto set can be found with at most \reva{$T$} function evaluations, where \reva{$T$} is the smallest natural number satisfying 
		\begin{align*}
		K_1 T^{\frac{-\alpha}{\bar{D}+2\alpha}} \left( \log T \right)^{\frac{-(\bar{D}+\alpha)}{\bar{D}+2\alpha}} + 
K_2 T^{\frac{-\alpha}{\bar{D}+2\alpha}} \left( \log T \right)^{\frac{\alpha}{\bar{D}+2\alpha}}  < \epsilon ~,
		\end{align*}
		where $K_1$ and $K_2$ are the constants defined in Lemma \ref{lem:metricbound}.
\end{pro}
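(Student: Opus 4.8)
The plan is to mirror the proof of Proposition \ref{pro:infobound} almost verbatim in structure, simply swapping the information-type diameter bound for the dimension-type one. Two ingredients are already in place: Lemma \ref{lem:metricbound}, which bounds the terminal diameter $\overline{\omega}_{t_s}$ from above by the sum $K_1 \tau_s^{-\alpha/(\bar{D}+2\alpha)}(\log\tau_s)^{-(\bar{D}+\alpha)/(\bar{D}+2\alpha)} + K_2\tau_s^{-\alpha/(\bar{D}+2\alpha)}(\log\tau_s)^{\alpha/(\bar{D}+2\alpha)}$ almost surely, and Lemma \ref{lem:termination}, which guarantees that the algorithm has already stopped once the largest uncertainty diameter $\overline{\omega}_t$ has dropped to at most $\epsilon$. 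The high-probability part of the statement comes entirely from the two ``good'' events: by Lemma \ref{lem:prob} and Corollary \ref{lem:spacebound}, $\FF_1$ and $\FF_2$ each hold with probability at least $1-\delta/2$, so $\FF_1\cap\FF_2$ holds with probability at least $1-\delta$, and under this intersection Lemma \ref{lem:returncover} certifies that the returned set $\hat{P}$ is indeed an $\bs{\epsilon}$-accurate Pareto set.

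First I would define $T$ as the smallest natural number $\tau$ for which the displayed sum, evaluated at $\tau$, is at most $\epsilon$; this is exactly the quantity named in the statement. The heart of the argument is then to show $\tau_s \leq T$ almost surely. Here I would invoke the monotonicity of the sequence $(\overline{\omega}_t)_t$ established inside the proof of Lemma \ref{lem:infoboundd} (namely $\overline{\omega}_t \leq \overline{\omega}_{t-1}$, which follows from the selection rule together with the nesting $\bs{R}_t(x)\subseteq\bs{R}_{t-1}(x)$): combining this with Lemma \ref{lem:metricbound} shows that by the time $T$ evaluations have been performed, the largest diameter has fallen to at most the value of the displayed sum at $\tau=T$, which is at most $\epsilon$ by the definition of $T$. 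By Lemma \ref{lem:termination}, the algorithm must then have terminated no later than that round, so $\tau_s \leq T$.

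Finally I would assemble the two pieces: with probability at least $1-\delta$ the good events hold, so the output is $\bs{\epsilon}$-accurate by Lemma \ref{lem:returncover}; and, on this same event, the number of evaluations is at most $T$ by the counting argument above. Formally this is the statement $\Pr(\tau_s > T)=0$, exactly as in the proof of Proposition \ref{pro:infobound}. I expect the main obstacle to be this $\tau_s\leq T$ step, because the displayed sum is not manifestly monotone in $\tau$ --- the factor $(\log\tau)^{\alpha/(\bar{D}+2\alpha)}$ is increasing while the dominant polynomial factor $\tau^{-\alpha/(\bar{D}+2\alpha)}$ decreases --- so the clean threshold-crossing reasoning used for the information-type bound must be justified through the diameter monotonicity rather than through monotonicity of the bound itself. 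Everything else is a direct transcription of the information-type argument, with $K_1$, $K_2$ and the exponents from Lemma \ref{lem:metricbound} in place of the $\gamma_T$ term.
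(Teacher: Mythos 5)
Your proposal is correct and follows essentially the same route as the paper: invoke Lemma \ref{lem:metricbound} together with the termination criterion of Lemma \ref{lem:termination} to conclude $\Pr(\tau_s>T)=0$, with the $1-\delta$ guarantee and the $\bs{\epsilon}$-accuracy of $\hat{P}$ coming from $\FF_1\cap\FF_2$ and Lemma \ref{lem:returncover}. Your added remark that the displayed bound is not manifestly monotone in $\tau$, and that the threshold-crossing step should therefore lean on the monotonicity of $(\overline{\omega}_t)_t$, is a fair and slightly more careful rendering of the paper's terser "the event $\{\tau_s=T\}$ implies $\{\overline{\omega}_{t_s}\leq\epsilon\}$" argument.
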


\textit{Proof.} According to Lemma \ref{lem:termination}, we have $\overline{w}_{t_s} < \epsilon$. In addition, Lemma \ref{lem:metricbound} says that 
\begin{align*}
\overline{\omega}_{t_s} \leq & 
K_1 {\tau_s}^{\frac{-\alpha}{\bar{D}+2\alpha}} \left( \log {\tau_s} \right)^{\frac{-(\bar{D}+\alpha)}{\bar{D}+2\alpha}} + 
K_2 {\tau_s}^{\frac{-\alpha}{\bar{D}+2\alpha}} \left( \log {\tau_s} \right)^{\frac{\alpha}{\bar{D}+2\alpha}} ~.
\end{align*}
We use these two facts to find an upper bound on $\tau_s$ that holds with probability one. Let 
\begin{align*}
T = \min \left\{ \tau \in \mathbb{N} : K_1 \tau^{\frac{-\alpha}{\bar{D}+2\alpha}} \left( \log \tau \right)^{\frac{-(\bar{D}+\alpha)}{\bar{D}+2\alpha}} + 
K_2 \tau^{\frac{-\alpha}{\bar{D}+2\alpha}} \left( \log \tau \right)^{\frac{\alpha}{\bar{D}+2\alpha}}  < \epsilon \right\}~.
\end{align*}
Since the event $\{\tau_s = T\}$ implies that $\{ \overline{w}_{t_s} < \epsilon\}$, we have $\Pr(\tau_s  > T) = 0$.\qed

\subsection{The final step of the proof of Theorem \ref{mainthm}}

We take the minimum over the two bounds presented in Proposition \ref{pro:infobound} and Proposition \ref{pro:metricbound} to obtain the result in Theorem \ref{mainthm}.\qed

\subsection{Proof of Proposition \ref{gainbound}}\label{proofprop1}

First, let us review some well-known facts about entropies. For an $m$-dimensional Gaussian random vector $\bs{g}$ with distribution $\mathcal{N}(\bs{a},\bs{b})$ with $\bs{a}\in\Real^m,\bs{b}\in\Real^{m\times m}$, the entropy of $\bs{g}$ is calculated by
\[
H(\bs{g})=H(\mathcal{N}(\bs{a},\bs{b}))=\frac12\log|2\pi e\bs{b}|.
\]
More generally, if $h$ is another random variable (with arbitrary measurable state space $H$) and the regular conditional distribution of $\bs{g}$ given $h$ is $\mathcal{N}(\bs{a}(h),\bs{b}(h))$ for some measurable functions $a\colon H\to \Real^m$ and $b\colon H\to\Real^{m\times m}$, then the conditional entropy of $\bs{g}$ given $h$ is calculated by
\[
H(\bs{g}|h)=\frac12\mathbb{E}\sqb{\log|2\pi e\bs{b}(h)|}.
\]

\begin{lem}\label{multivariateinfogain}
	We have
	\[
	I(\bs{y}_{[T]};\bs{f}_{[T]})=\sum_{\tau =1}^{T}\frac12\log |\bs{I}_m+\sigma^{-2}\bs{k}_{\tau -1}(\tilde{x}_\tau ,\tilde{x}_\tau )|,
	\]
	where $\bs{k}_0(\tilde{x}_1,\tilde{x}_1)=\bs{k}(\tilde{x}_1,\tilde{x}_1)$.
\end{lem}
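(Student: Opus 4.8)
The plan is to establish the identity by decomposing the information gain via the entropy chain rule and then invoking the Gaussian structure of the posterior at each step. I would begin by writing
\[
I(\bs{y}_{[T]};\bs{f}_{[T]})=H(\bs{y}_{[T]})-H(\bs{y}_{[T]}\mid\bs{f}_{[T]}),
\]
and handling the two terms separately. For the conditional term, since $\bs{y}_\tau=\bs{f}(\tilde{x}_\tau)+\bs{\kappa}_\tau$ with the noise vectors $\bs{\kappa}_\tau\sim\mathcal{N}(\bs{0},\sigma^2\bs{I}_m)$ independent across evaluations and independent of $\bs{f}_{[T]}$, conditioning on $\bs{f}_{[T]}$ leaves only the noise, so $H(\bs{y}_{[T]}\mid\bs{f}_{[T]})=\sum_{\tau=1}^T H(\bs{\kappa}_\tau)=\sum_{\tau=1}^T\frac12\log|2\pi e\,\sigma^2\bs{I}_m|$, using the Gaussian entropy formula recalled just above the lemma.

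For the first term, I would apply the chain rule for differential entropy,
\[
H(\bs{y}_{[T]})=\sum_{\tau=1}^T H(\bs{y}_\tau\mid\bs{y}_{[\tau-1]}),
\]
with the convention that the $\tau=1$ summand is the unconditional $H(\bs{y}_1)$. The key observation — which is the crux of the argument — is to identify the regular conditional distribution of $\bs{y}_\tau$ given $\bs{y}_{[\tau-1]}$. By the GP posterior update rules in Section~\ref{background}, the conditional law of $\bs{f}(\tilde{x}_\tau)$ given $\bs{y}_{[\tau-1]}$ is $\mathcal{N}(\bs{\mu}_{\tau-1}(\tilde{x}_\tau),\bs{k}_{\tau-1}(\tilde{x}_\tau,\tilde{x}_\tau))$; adding the independent noise $\bs{\kappa}_\tau$ shows that $\bs{y}_\tau\mid\bs{y}_{[\tau-1]}$ is Gaussian with covariance $\bs{k}_{\tau-1}(\tilde{x}_\tau,\tilde{x}_\tau)+\sigma^2\bs{I}_m$ (a covariance that depends only on the design locations, not on the realized observations). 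Hence the conditional-entropy formula recalled before the lemma yields
\[
H(\bs{y}_\tau\mid\bs{y}_{[\tau-1]})=\tfrac12\log\bigl|2\pi e\,(\bs{k}_{\tau-1}(\tilde{x}_\tau,\tilde{x}_\tau)+\sigma^2\bs{I}_m)\bigr|.
\]

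Combining the two pieces, each summand becomes a difference of log-determinants, and I would simplify using $|2\pi e\,(\bs{k}_{\tau-1}+\sigma^2\bs{I}_m)|/|2\pi e\,\sigma^2\bs{I}_m|=|\bs{I}_m+\sigma^{-2}\bs{k}_{\tau-1}(\tilde{x}_\tau,\tilde{x}_\tau)|$, which follows from $|\bs{A}\bs{B}|=|\bs{A}||\bs{B}|$ and $|\sigma^2\bs{I}_m|^{-1}|\bs{k}_{\tau-1}+\sigma^2\bs{I}_m|=|\sigma^{-2}\bs{k}_{\tau-1}+\bs{I}_m|$. This gives the claimed telescoped-free sum. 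I expect the only genuinely delicate point to be justifying that the conditional distribution of $\bs{y}_\tau$ given $\bs{y}_{[\tau-1]}$ has the stated \emph{deterministic} covariance $\bs{k}_{\tau-1}(\tilde{x}_\tau,\tilde{x}_\tau)+\sigma^2\bs{I}_m$ — i.e. that the posterior covariance functions $\bs{k}_{\tau-1}$ from Section~\ref{background} indeed govern these one-step-ahead predictive laws and do not depend on the observed values — so that the conditional-entropy formula applies with a non-random $\bs{b}(\cdot)$; everything else is routine manipulation of Gaussian entropies and determinants.
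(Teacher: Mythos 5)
Your proposal is correct and follows essentially the same route as the paper's own proof: the mutual-information decomposition, the chain rule for $H(\bs{y}_{[T]})$, the identification of $H(\bs{y}_{[T]}\mid\bs{f}_{[T]})$ with the noise entropies, the Gaussian one-step-ahead predictive law $\mathcal{N}(\bs{\mu}_{\tau-1}(\tilde{x}_\tau),\bs{k}_{\tau-1}(\tilde{x}_\tau,\tilde{x}_\tau)+\sigma^2\bs{I}_m)$, and the final log-determinant simplification all match. The only difference is that you explicitly flag the determinism of the posterior covariance as the point needing justification, which the paper takes for granted from the GP update formulas in Section~\ref{background}.
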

\begin{proof}
	We have
	\begin{align*}
	I(\bs{y}_{[T]};\bs{f}_{[T]})&=H(\bs{y}_{[T]})-H(\bs{y}_{[T]}|\bs{f}_{[T]})\\
	&=H(\bs{y}_T,\bs{y}_{[T-1]})-H(\bs{y}_{[T]}|\bs{f}_{[T]})\\
	&=H(\bs{y}_T|\bs{y}_{[T-1]})+H(\bs{y}_{[T-1]})-H(\bs{y}_{[T]}|\bs{f}_{[T]}).
	\end{align*}
	Re-iterating this calculation inductively, we obtain
	\begin{align*}
	I(\bs{y}_{[T]};\bs{f}_{[T]})
	&=\sum_{\tau =2}^{T} H(\bs{y}_\tau |\bs{y}_{[\tau -1]})+H(\bs{y}_{1})-H(\bs{y}_{[T]}|\bs{f}_{[T]})
	\end{align*}
	since $\bs{y}_{[1]}=\bs{y}_1$. Note that
	\[
	H(\bs{y}_{[T]}|\bs{f}_{[T]})=H(\bs{f}(\tilde{x}_1)+\bs{\epsilon}_1,\ldots,\bs{f}(\tilde{x}_T)+\bs{\epsilon}_T|\bs{f}_{[T]})=\sum_{\tau=1}^{T} H(\bs{\epsilon}_{\tau})=\frac{T}{2}\log|2\pi e\sigma^2 \bs{I}_m|.
	\]
	On the other hand, the conditional distribution of $\bs{y}_{\tau}$ given $\bs{y}_{[\tau -1]}$ is $\mathcal{N}(\bs{\mu}_{\tau -1}(\tilde{x}_{\tau}),\bs{k}_{\tau -1}(\tilde{x}_{\tau},\tilde{x}_{\tau})+\sigma^2\bs{I}_m)$ and the distribution of $\bs{y}_1$ is $\mathcal{N}(0,\bs{k}(\tilde{x}_1,\tilde{x}_1)+\sigma^2\bs{I}_m)$. Hence,
	\begin{align*}
	I(\bs{y}_{[T]};\bs{f}_{[T]})
	&=\sum_{\tau =2}^{T} H(\bs{y}_\tau |\bs{y}_{[\tau -1]})+H(\bs{y}_{1})-H(\bs{y}_{[T]}|\bs{f}_{[T]})\\
	&=\sum_{\tau=2}^{T} \frac12 \sqb{\log|2\pi e (\bs{k}_{\tau -1}(\tilde{x}_\tau ,\tilde{x}_\tau )+\sigma^2 \bs{I}_m)|}+\frac12\log|2\pi e (\bs{k}(\tilde{x}_\tau ,\tilde{x}_\tau )+\sigma^2 \bs{I}_m)|-\frac{T}{2}\log|2\pi e\sigma^2 \bs{I}_m|\\
	&=\sum_{\tau =1}^{T} \frac12\log|2\pi e (\bs{k}_{\tau -1}(\tilde{x}_\tau ,\tilde{x}_\tau )+\sigma^2 \bs{I}_m)|-\frac{T}{2}\log|2\pi e\sigma^2 \bs{I}_m|\\
	&=\sum_{\tau =1}^{T} \frac12\log|2\pi e\sigma^2\bs{I}_m (\sigma^{-2}\bs{k}_{\tau -1}(\tilde{x}_\tau ,\tilde{x}_\tau )+ \bs{I}_m)|-\frac{T}{2}\log|2\pi e\sigma^2 \bs{I}_m|\\
	&=\sum_{\tau =1}^{T} \frac12\log|2\pi e\sigma^2\bs{I}_m|+\sum_{\tau =1}^{T}\frac12\log |\sigma^{-2}\bs{k}_{\tau -1}(\tilde{x}_\tau ,\tilde{x}_\tau )+ \bs{I}_m)|-\frac{T}{2}\log|2\pi e\sigma^2 \bs{I}_m|\\
	&=\sum_{\tau =1}^{T}\frac12\log |\sigma^{-2}\bs{k}_{\tau -1}(\tilde{x}_\tau ,\tilde{x}_\tau )+\bs{I}_m|.
	\end{align*}
\end{proof}

\begin{lem}\label{matrixlem}
	Let $\bs{a}=(a_{ij})_{i,j\in[m]}$ be a symmetric positive definite $m\times m$-matrix. Then,
	\[
	|\bs{a}+\bs{I}_m|\geq \max_{j\in[m]}(1+a_{jj}).
	\]
\end{lem}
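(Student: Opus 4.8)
The plan is to reduce the determinant to the eigenvalues of $\bs{a}$ and then discard all but the linear (trace) term of the resulting product. Since $\bs{a}$ is symmetric and positive definite by hypothesis, the spectral theorem gives real eigenvalues $\lambda_1,\ldots,\lambda_m$, all strictly positive, and the eigenvalues of $\bs{a}+\bs{I}_m$ are exactly $1+\lambda_1,\ldots,1+\lambda_m$. Consequently the determinant factorizes as $|\bs{a}+\bs{I}_m|=\prod_{i=1}^m(1+\lambda_i)$, which is the identity I would establish first.

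Next I would expand this product over subsets of $[m]$, writing
\[
\prod_{i=1}^m(1+\lambda_i)=\sum_{S\subseteq[m]}\prod_{i\in S}\lambda_i~.
\]
Because every $\lambda_i>0$, each summand on the right is nonnegative, so retaining only the empty set and the singletons yields the lower bound $\prod_{i=1}^m(1+\lambda_i)\geq 1+\sum_{i=1}^m\lambda_i=1+\textup{tr}(\bs{a})$. The trace equals the sum of diagonal entries, $\textup{tr}(\bs{a})=\sum_{i=1}^m a_{ii}$, and positive definiteness forces each diagonal entry to be positive, since $a_{ii}=\bs{e}_i^\T\bs{a}\,\bs{e}_i>0$ for the $i$th unit vector $\bs{e}_i$. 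Therefore, for any fixed $j\in[m]$ we may drop the remaining (nonnegative) diagonal terms to obtain $1+\textup{tr}(\bs{a})\geq 1+a_{jj}$, and combining the displays gives $|\bs{a}+\bs{I}_m|\geq 1+a_{jj}$. Taking the maximum over $j$ proves the claim.

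This argument is essentially routine, so there is no serious obstacle; the only points that require care are the two places where positivity is invoked. I would make sure to justify that the cross terms $\prod_{i\in S}\lambda_i$ for $|S|\geq 2$ are genuinely nonnegative (which is immediate once all $\lambda_i>0$) and that the individual diagonal entries satisfy $a_{jj}>0$ via the variational identity $a_{jj}=\bs{e}_j^\T\bs{a}\,\bs{e}_j$, exactly as in the proof of Lemma~\ref{spd}. Both facts hinge only on the positive definiteness assumed in the statement, so no earlier result beyond elementary linear algebra is needed.
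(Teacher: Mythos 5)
Your proof is correct, but it diverges from the paper's argument after the common first step. Both proofs begin by diagonalizing: the eigenvalues of $\bs{a}+\bs{I}_m$ are $1+\lambda_1,\ldots,1+\lambda_m$, so $|\bs{a}+\bs{I}_m|=\prod_{i}(1+\lambda_i)$. The paper then keeps only the single largest factor, giving $|\bs{a}+\bs{I}_m|\geq 1+\lambda_{(m)}$, and invokes the variational (Rayleigh quotient) characterization of the top eigenvalue to get $\lambda_{(m)}\geq \bs{e}_j^\T\bs{a}\bs{e}_j=a_{jj}$ for every $j$. You instead expand the product into elementary symmetric functions and keep the degree-zero and degree-one terms, obtaining $|\bs{a}+\bs{I}_m|\geq 1+\sum_i\lambda_i=1+\textup{tr}(\bs{a})=1+\sum_i a_{ii}\geq 1+a_{jj}$, using positivity of the diagonal entries to drop the remaining ones. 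Your route buys a nominally stronger intermediate bound ($1+\textup{tr}(\bs{a})$ dominates $\max_j(1+a_{jj})$, just as the paper's $1+\lambda_{(m)}$ does), avoids the variational characterization entirely, and trades it for the equally elementary facts that the trace equals the sum of the eigenvalues and that $a_{jj}=\bs{e}_j^\T\bs{a}\bs{e}_j>0$. Neither strengthening matters downstream, since Proposition \ref{gainbound} only needs the inequality as stated; both arguments are complete and use nothing beyond elementary linear algebra.
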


\begin{proof}
	Let $0<\lambda_{(1)}\leq\ldots\leq \lambda_{(m)}$ be the ordered eigenvalues of $\bs{a}$. It is easy to check that $\lambda\in\Real$ is an eigenvalue of $\bs{a}$ if and only if $1+\lambda$ is an eigenvalue of $\bs{a}+\bs{I}_m$. In particular, $1+\lambda_{(m)}$ is the largest eigenvalue of $\bs{a}+\bs{I}_m$ so that
	\begin{equation}\label{eq211}
	|\bs{a}+\bs{I}_m|=\prod_{j\in[m]}(1+\lambda_{(j)})=\of{\prod_{j\in[m-1]}(1+\lambda_{(j)})}(1+\lambda_{(m)})\geq (1+\lambda_{(m)}).
	\end{equation}
	On the other hand, thanks to the variational characterization of the maximum eigenvalue $\lambda_{(m)}$, we have
	\begin{equation}\label{eq212}
	\lambda_{(m)}=\max_{\bs{x}\in\Real^m\colon \norm{x}=1}\bs{x}^\T\bs{a}\bs{x}\geq \bs{e}_j^\T\bs{a}\bs{e}_j=a_{jj}
	\end{equation}
	for each $j\in[m]$, where $\bs{e}_j$ is the $j^{\text{th}}$ unit vector in $\Real^m$. The claim of the lemma follows by combining \eqref{eq211} and \eqref{eq212}.
\end{proof}

Next, we will make use of the lemmas derived above to complete the proof. 
Note that
\begin{align*}
I(\bs{y}_{[T]};\bs{f}_{[T]})&=\sum_{\tau =1}^{T}\frac12\log |\bs{I}_m+\sigma^{-2}\bs{k}_{\tau -1}(\tilde{x}_\tau ,\tilde{x}_\tau )|\\
&\geq \sum_{\tau =1}^{T}\frac12\max_{j\in[m]}\log (1+\sigma^{-2}k^{jj}_{\tau -1}(\tilde{x}_\tau ,\tilde{x}_\tau ))\\
&=\sum_{\tau =1}^{T}\frac12\max_{j\in[m]}\log (1+\sigma^{-2}(\sigma^{j}_{\tau -1}(\tilde{x}_\tau ))^2)\\
&\geq \sum_{\tau =1}^{T}\sum_{j\in[m]}\frac{1}{2m}\log (1+\sigma^{-2}(\sigma^{j}_{\tau -1}(\tilde{x}_\tau ))^2),
\end{align*}
where the first equality is by Lemma \ref{multivariateinfogain} and the first inequality is by Lemma \ref{matrixlem}. Hence, the claim of the proposition follows.

\subsection{Determination of $\alpha$ and $C_K$ for Squared Exponential type kernels}
For Squared Exponential type kernels, the covariance function of objective $j$ can be written as:
\[
k^{jj}(x,y) = k^j(r) = \nu e^{-\frac{r^2}{L^2}}~,
\]
where $r= \lVert x-y \rVert$ and $L$ and $\nu$ are the length-scale and variance hyper-parameters of the $j^{\text{th}}$ objective. Hence, by Remark 1, the metric $l_j$ induced by the $j^{\text{th}}$ objective of the GP on ${\cal X}$ is given by:
\begin{align*}
l_j(x,y) & = \sqrt{2k(0)-2k(0)e^{-\frac{r^2}{L^2}}} = \sqrt{2\nu(1-e^{-\frac{r^2}{L^2}})} \leq \frac{\sqrt{2\nu}}{L}r ~.
\end{align*}
Thus, we can set $C_{\bs{k}} = \frac{\sqrt{2\nu}}{L}$ and $\alpha=1$ in Assumption 1.

\section{The proof of Proposition \ref{pro:example}}

In this section, we provide a constructive proof of Proposition \ref{pro:example}, which is an extension of Example 1 in \cite{shekhar2018gaussian} to the multi-output GP setting. We restate the proposition here for convenience. 

\begin{statement}
        There exists a multi-output GP $\boldsymbol{f}$, with covariance function satisfying Assumption~\ref{ass:cov} and a sequence of $T$ noisy observations made on $\boldsymbol{f}$, such that we have
    \begin{align*}
        I(\boldsymbol{y}_{[T]},\boldsymbol{f}_{[T]}) \geq \Omega (T)~.
    \end{align*}
\end{statement}

\begin{proof} 
We provide a constructive proof, which is an extension of Example 1 in \cite{ shekhar2018gaussian} to the multi-output GP setting.
First, let $\XX = [0,1]$. Let $j\in [m]$, $x\in \XX$, and let us define 
\begin{align*}
\tilde{f}^j(x) = \sum^\infty_{i=1} 4\bar{a}_{ij} X_{ij} \left( ( 3^i x -1) ( 2 -  3^ix) \mathbb{I}(x\in L_i) - ( 3^i x -2) ( 3-3^ix) \mathbb{I}(x\in R_i ) \right)~,
\end{align*}
where, for each $j\in[m]$, $(\bar{a}_{ij})^\infty_{i=1}$ is a non-increasing sequence of positive real numbers with $\bar{a}_{1j} \leq 1$; $(X_{ij})^{\infty ,m}_{i=1,j=1}$ is a sequence of independent and identically distributed standard Gaussian random variables; and we let  $L_i := [3^{-i},2\cdot 3^{-i})$ for all $i\geq 1$, $R_i:= [2\cdot 3^{-i},3^{-i+1})$ for all $i\geq 2$, and $R_i =[2\cdot 3^{-i},3^{-i+1}]$ for $i=1$. 
Note that we have
\begin{align*}
\XX = \bigcup^\infty_{i=1} \left( L_i \cup R_i \right) ~,
\end{align*}
and moreover, $L_1,R_1,L_2,R_2,\ldots$ do not overlap, thus they partition $\XX$. Therefore, for any $x\in \XX$, there exists $i(x)\geq 1$ such that $x\in L_{i(x)} \cup R_{i(x)}$ and $x\not\in L_l \cup R_l$, for any $l\neq i$. So we have 
\begin{align*}
\tilde{f}^j(x) = 4\bar{a}_{ij} X_{ij} \left( (3^{i(x)}x-1) ( 2 - 3^{i(x)}x) \mathbb{I}(x\in L_{i(x)}) - ( 3^{i(x)}x -2) ( 3- 3^{i(x)}x ) \mathbb{I}(x\in R_{i(x)}) \right) ~.
\end{align*}
Let $x,x' \in \XX$ and $j,l\in [m]$. We define $\tilde{\bs{f}} (x) = [ \tilde{f}^1(x),\ldots,\tilde{f}^m(x)]^\T$. Note that we have $\mathbb{E}[ \tilde{\bs{f}} (x) ] = [0,\ldots,0]^\T$.  Moreover, we let  $\tilde{\bs{k}} (x,x')$ denote the covariance matrix of the $m$-output GP $(\tilde{\bs{f}}(x))_{x\in \XX}$.  Note that we have 
\begin{align*}
\mathbb{E} [\tilde{f}^j(x) \tilde{f}^l(x)] = 0, \;\; \textup{for} \, j\neq l ~,
\end{align*}
due to independence of $X_{ij}$ across objectives which implies that 
\begin{align*}
\tilde{\bs{k}}(x,x) = \begin{bmatrix}
\tilde{k}^{11}(x,x) & \ldots & 0 \\
\vdots & & \vdots \\
0 & \ldots & \tilde{k}^{mm}(x,x)
\end{bmatrix}
\end{align*}
Now let $A=(a_{pq})_{p,q\in[m]}\in \mathbb{R}^{m\times m}$ be a square matrix such that $\norm{A_j}_2=1$, for all $j\in [m]$, where $A_j$ denotes the $j$th row of $A$. Furthermore, let us define  $\bs{f} (x) = A \tilde{\bs{f}}$. Let $\bs{k}$ be the covariance function associated with the $m$-output GP $(\bs{f}(x))_{x\in \XX}$.  Assume that $T$ designs of the form $x_i = 1/3^i + 1/(2\cdot 3^i)$ are selected for evaluation and subsequently the noisy observations $\bs{y}_i$ are obtained, for $i\leq T$. Note that in this case we have $i(x_i) = i$. Now we explicitly calculate the variances of these evaluated points at all objectives. Let $j\leq m$ and $i\leq T$. We have 
\begin{align*}
k^{jj}(x_i,x_i) = \mathbb{E} [(A\tilde{\bs{f}}(x_i))(A\tilde{\bs{f}}(x))^\T] =  A_j \tilde{\bs{k}}(x_i,x_i)  A^\T_j = \sum^m_{l=1} \tilde{k}^{ll}(x_i,x_i) a^2_{jl}    ~,
\end{align*}
where the third equality follows from the fact that $\tilde{\bs{k}}(x,x)$ is diagonal. Now we have
\begin{align*}
\tilde{f}^j(x_i) & = 4\bar{a}_{ij} X_{ij} \left( \left(3^ix-1\right) \left( 2 - 3^ix\right) \mathbb{I}(x\in L_{i}) - \left( 3^ix -2\right) \left( 3- 3^ix \right) \mathbb{I}(x\in R_{i}) \right) \\
     & = 4\bar{a}_{ij} X_{ij} \left( 3^i \left( \frac{1}{3^i} + \frac{1}{2\cdot 3^i}  \right) -1 \right) \left( 2 - 3^i \left( \frac{1}{3^i} + \frac{1}{2\cdot 3^i} \right) \right)  
      =  4\bar{a}_{ij} X_{ij} \frac{1}{4}
      = \bar{a}_{ij} X_{ij}~.
\end{align*}
Therefore, we have that $\tilde{k}^{ll}(x_i,x_i) = \mathbb{E} [(\tilde{f}^l(x_i))^2] =  \bar{a}^2_{il}$, from which we obtain
\begin{align}\label{varbd}
k^{jj}(x_i,x_i) =  \sum^m_{l=1} \bar{a}^2_{il} a^2_{jl}\leq  \sum^m_{l=1} \bar{a}^2_{1l} a^2_{jl}\leq \norm{A_j}^2_2= 1 ~
\end{align}
using the assumptions on the sequence $(\bar{a}_{il})$ and the matrix $A$. The observations at different designs are uncorrelated, hence independent, by the choice of the designs, and this implies that the posterior distribution is the same as the prior distribution. This, together with  Proposition \ref{gainbound} implies
\begin{align*}
\gamma_T \geq I(\bs{y}_{[T]};\bs{f}_{[T]}) & \geq  \frac{1}{m}\sum_{i =1}^{T}\sum_{j=1}^m \frac{1}{2}\log \left( 1+\sigma^{-2}(\sigma^{j}_{i-1}(x_i))^2 \right)   = \frac{1}{m}\sum_{i =1}^{T}\sum_{j=1}^m \frac{1}{2}\log  \left( 1+\sigma^{-2}(\bar{a}^2_{ij}) \right) \\
              & \geq  \frac{T}{2m} \sum^m_{j=1} \log \left( 1 + \frac{\bar{a}^2_{Tj}}{\sigma^2} \right)  \geq T \frac{1}{2m} \sum^m_{j=1} \frac{\bar{a}^2_{Tj}/ \sigma^2}{1 + \bar{a}^2_{Tj} /\sigma^2}  = T \frac{1}{2m} \sum^m_{j=1}  \frac{ \bar{a}^2_{Tj}}{ \bar{a}^2_{Tj} + \sigma^2}~,
\end{align*}
where in the third inequality we have used the fact that the sequence $(a_{ij})^\infty_{i=1}$ is decreasing and in the fourth inequality we use the fact that $\log (1+x) \geq x/(1+x)$. Thus, we obtain $
\gamma_T  = \Omega (T)$. By Theorem \ref{mainthm}, we see that the information-type quantity $\sqrt{ C\beta_{T}   \gamma_T /T}  $ increases logarithmically in $T$ and the metric dimension-type quantity $K_1 \beta_T T^{\frac{-\alpha}{\bar{D}+2\alpha}} \left( \log T \right)^{\frac{-(\bar{D}+\alpha)}{\bar{D}+2\alpha}} + 
		K_2 T^{\frac{-\alpha}{\bar{D}+2\alpha}} \left( \log T \right)^{\frac{\alpha}{\bar{D}+2\alpha}}$  decreases exponentially in $T$, for any choice of $\bar{D} >D_1$ and $0 < \alpha \leq 1$.  
		
		On the other hand, for a suitably chosen metric $d$, the first part of Assumption \ref{ass:cov} holds. Similar to \eqref{varbd}, it can also be checked that $k^{jj}(x,x)\leq 1$ for all $x\in\XX$. Hence, the second part of Assumption \ref{ass:cov} holds as well.
\end{proof}

\end{document}